\definecolor{DarkGreen}{rgb}{0.1,0.5,0.1}
\definecolor{DarkRed}{rgb}{0.5,0.1,0.1}
\definecolor{DarkBlue}{rgb}{0.1,0.1,0.5}
\definecolor{Black}{rgb}{0.0,0.0,0.0}
\theoremstyle{definition}
	\theoremstyle{plain}
	\newtheorem{asm}{Assumption}
\theoremstyle{plain}
\newtheorem{thm}{Theorem}[section]
\newtheorem{cor}[thm]{Corollary}
\declaretheorem[name=Lemma,numberwithin=section]{lem}
\newtheorem{prop}[thm]{Proposition}
\theoremstyle{definition}
\newtheorem{defn}{Definition}[section]
\newtheorem{exmp}{Example}[section]
\theoremstyle{remark}
\newcommand{\remove}[1]{}
\newcommand{\calX}{\mathcal{X}}
\newcommand{\calT}{\mathcal{T}}
\newcommand{\calS}{\mathcal{S}}
 \newcommand{\accrate}{\beta}
  \newcommand{\accratebar}{\overline{\accrate}}
 \newcommand{\accratea}{\accrate\supa}
 \newcommand{\accrateb}{\accrate\supb}
 \newcommand{\accratej}{\accrate\supj}
  \newcommand{\tpr}{\mathrm{TPR}}
 \newcommand{\opp}{\tpr}
 \newcommand{\oppa}{\opp\supa}
 \newcommand{\oppb}{\opp\supb}
 \newcommand{\oppj}{\opp\supj}
\newcommand{\boldz}{\boldsymbol{z}}
\newcommand{\Constraint}{\mathcal{C}}
\newcommand{\R}{\mathbb{R}}
\renewcommand\part[1]{\vspace{.05in}\textbf{(#1)}}
\newcommand{\X}{\mathcal{X}}
\newcommand{\argmax}{\operatornamewithlimits{argmax}}
\newcommand{\RCDF}{\mathrm{Q}}
\newcommand{\RCDFpl}{\mathrm{Q}^+}
\newcommand{\RCDFj}{\RCDF\supj}
\newcommand{\RCDFplj}{\RCDFpl\supj}
\newcommand{\RCDFone}{\RCDF\supa}
\newcommand{\RCDFtwo}{\RCDF\supb}
\newcommand{\dem}{\mathtt{DemParity}}
\newcommand{\dempar}{\dem}
\newcommand{\maxprof}{\mathtt{MaxUtil}}
\newcommand{\eqop}{\mathtt{EqOpt}}
\newcommand{\subto}{\mathrm{s.t.}}
\newcommand{\ratef}{r}
\newcommand{\ratefj}{r_{\distj}}
\newcommand{\constf}{T}
\newcommand{\constfj}{\constf_{\supj,\boldwj}}
\newcommand{\constfa}{\constf_{\supa,\boldwa}}
\newcommand{\constfb}{\constf_{\supb,\boldwb}}
\newcommand{\supa}{_{\popa}}
\newcommand{\supb}{_{\popb}}
\newcommand{\supj}{_{\popj}}
\newcommand{\popa}{\mathsf{A}}
\newcommand{\popb}{\mathsf{B}}
\newcommand{\pops}{\{\popa, \popb\}}
\newcommand{\popj}{\mathsf{j}}
\newcommand{\fraca}{g\supa}
\newcommand{\fracb}{g\supb}
\newcommand{\fracj}{g\supj}
\newcommand{\pol}{\boldsymbol{\tau}}
\newcommand{\polj}{\pol\supj}
\newcommand{\pola}{\pol\supa}
\newcommand{\polb}{\pol\supb}
\newcommand{\mean}{\boldsymbol{\mu}}
\newcommand{\delmean}{\Delta\mean}
\newcommand{\mono}{{\mathrm{thresh}}}
\newcommand{\transfer}{G^{(\popa\to\popb)}}
\newcommand{\pb}{\boldsymbol{\rho}}
\newcommand{\chg}{\boldsymbol{\Delta}}
\newcommand{\dst}{\boldsymbol{\pi}}
\newcommand{\dist}{\boldsymbol{\pi}}
\newcommand{\dista}{\dist\supa}
\newcommand{\distb}{\dist\supb}
\newcommand{\distj}{\dist\supj}
\newcommand{\lamhat}{\widehat{\lambda}}
\newcommand{\util}{\boldsymbol{u}}
\newcommand{\Util}{\mathcal{U}}
\newcommand{\Utila}{\Util\supa}
\newcommand{\Utilb}{\Util\supb}
\newcommand{\Utilj}{\Util\supj}
\newcommand{\hdmd}{\circ}
\newcommand{\maxscore}{C}
\newcommand{\cgood}{c_+}
\newcommand{\cbad}{c_-}
\newcommand{\boldg}{\boldsymbol{g}}
\newcommand{\boldv}{\boldsymbol{v}}
\newcommand{\Monotone}{\mathcal{T}_{\mathrm{thresh}}}
\newcommand{\boldw}{\boldsymbol{w}}
\newcommand{\boldwj}{\boldw\supj}
\newcommand{\boldwa}{\boldw\supa}
\newcommand{\simpi}{\cong_{\dist}}
\newcommand{\simj}{\cong_{\dist\supj}}
\newcommand{\sima}{\cong_{\dist\supa}}
\newcommand{\simb}{\cong_{\dist\supb}}
\newcommand{\Simplex}{\mathrm{Simplex}^{\maxscore-1}}
\newcommand{\boldwb}{\boldw\supb}
\newcommand{\bolde}{\boldsymbol{e}}
\newcommand{\NC}{\mathrm{NC}}
\title{\textbf{Delayed Impact of Fair Machine Learning}}
\author{
\and Lydia T.~Liu\thanks{Department of Electrical Engineering and Computer Sciences, University of California, Berkeley}
\and Sarah Dean\footnotemark[1]
\and Esther Rolf\footnotemark[1]
\and Max Simchowitz\footnotemark[1]
\and Moritz Hardt\footnotemark[1]}
\begin{document}

\maketitle

\begin{abstract}
Fairness in machine learning has predominantly been studied in static
classification settings without concern for how decisions change the underlying
population over time. Conventional wisdom suggests that fairness criteria
promote the long-term well-being of those groups they aim to protect.

We study how static fairness criteria interact with temporal indicators of
well-being, such as long-term improvement, stagnation, and decline in a variable
of interest. We demonstrate that even in a one-step feedback model, common
fairness criteria in general do not promote improvement over time, and may in
fact cause harm in cases where an unconstrained objective would not.
We completely characterize the delayed impact of three standard criteria,
contrasting the regimes in which these exhibit qualitatively different
behavior. In addition, we find that a natural form of measurement error broadens
the regime in which fairness criteria perform favorably.

Our results highlight the importance of measurement and temporal modeling in the
evaluation of fairness criteria, suggesting a range of new challenges and
trade-offs.
\end{abstract}


\section{Introduction}

Machine learning commonly considers static objectives defined on a snapshot of
the population at one instant in time; consequential decisions, in contrast,
reshape the population over time. Lending practices, for example, can shift the
distribution of debt and wealth in the population. Job advertisements 
allocate opportunity. School admissions shape the level of education in
a community.

Existing scholarship on fairness in automated decision-making criticizes
unconstrained machine learning for its potential to \emph{harm} historically
underrepresented or disadvantaged groups in the
population~\citep{whitehouse16,barocasselbst16}. Consequently, a variety of
\emph{fairness criteria} have been proposed as constraints on standard learning
objectives. Even though, in each case, these constraints are clearly intended to
\emph{protect} the disadvantaged group by an appeal to intuition, a rigorous
argument to that effect is often lacking.

In this work, we formally examine under what circumstances fairness criteria do
indeed promote the long-term well-being of disadvantaged groups measured in
terms of a temporal variable of interest. Going beyond the standard
classification setting, we introduce a one-step feedback model of decision-making
that exposes how decisions change the underlying population over time.

Our running example is a hypothetical lending scenario. There are two groups in
the population with features described by a summary statistic, such as a
\emph{credit score}, whose distribution differs between the two groups. The bank can
choose thresholds for each group at which loans are offered. While
group-dependent thresholds may face legal challenges \citep{ross2006}, they are
generally inevitable for some of the criteria we examine. The impact of a
lending decision has multiple facets. A default event not only diminishes profit for
the bank, it also worsens the financial situation of the borrower as reflected
in a subsequent decline in credit score. A successful lending outcome leads to
profit for the bank and also to an increase in credit score for the borrower.

When thinking of one of the two groups as disadvantaged, it makes sense to ask
what lending policies (choices of thresholds) lead to an expected improvement in
the score distribution within that group. An unconstrained bank would maximize
profit, choosing thresholds that meet a break-even point above which it is
profitable to give out loans. One frequently proposed fairness criterion,
sometimes called demographic parity, requires the bank to lend to both groups at
an equal rate. Subject to this requirement the bank would continue to maximize
profit to the extent possible. 
Another criterion, originally called equality of opportunity, equalizes
the \emph{true positive rates} between the two groups, thus requiring the bank to lend in
both groups at an equal rate among individuals who repay their loan. Other
criteria are natural, but for clarity we restrict our attention to these three.

Do these fairness criteria benefit the disadvantaged group? When do they show a
clear advantage over unconstrained classification? Under what circumstances does
profit maximization work in the interest of the individual? These are important
questions that we begin to address in this work.

\subsection{Contributions}
We introduce a one-step feedback model that allows us to quantify the long-term
impact of classification on different groups in the population. We represent
each of the two groups~$\popa$ and~$\popb$ by a \emph{score}
distribution~$\dist\supa$ and~$\dist\supb,$ respectively. The support of these
distributions is a finite set~$\X$ corresponding to the possible values that the
score can assume. We think of the score as highlighting one variable of interest
in a specific domain such that higher score values correspond to a
higher probability of a positive outcome.  
An \emph{institution} chooses selection policies~$\pola,
\polb\colon\X\to[0,1]$ that assign to each value in~$\X$ a number representing
the rate of selection for that value. In our example, these policies specify
the lending rate at a given credit score within a given group. The institution
will always maximize their utility (defined formally later) subject to either
(a) no constraint, or (b) equality of selection rates, or (c) equality of true
positive rates.

We assume the availability of a function~$\chg\colon\X\to\R$ such that $\chg(x)$ provides the
expected change in score for a selected individual at score $x$.  The
central quantity we study is the expected difference in the mean
score in group~$j\in\{\popa, \popb\}$ that results from an institutions policy, $\delmean\supj$ defined formally in Equation~\eqref{eqn:delmean}. 
When modeling the problem, the expected mean difference can also absorb external
factors such as ``reversion to the mean'' so long as they are mean-preserving.
Qualitatively, we distinguish between \emph{long-term improvement}
($\delmean\supj>0$), \emph{stagnation} ($\delmean\supj=0$), and \emph{decline}
($\delmean\supj<0$).
Our findings can be summarized as follows:

\begin{enumerate} 
	\item Both fairness criteria (equal selection rates, equal true
	positive rates) can lead to all possible outcomes (improvement, stagnation, and decline) in natural parameter regimes.  We provide a complete characterization of when each criterion
	leads to each outcome in Section \ref{sec:results}.
	\begin{itemize}
		\item There are a class of settings where equal
		selection rates cause decline, whereas equal true positive rates do not (Corollary \ref{cor:avoid_harm}),
		\item Under a mild assumption, the institution's optimal unconstrained selection policy can never lead to decline (Proposition \ref{prop:mp_noharm}). 
	\end{itemize}

\item We introduce the notion of an \emph{outcome curve} (Figure~\ref{fig:outcome_curve}) which succinctly describes the different regimes in which one criterion is preferable over the others. 

\item We perform experiments on FICO credit score data from 2003 and show that under various models of bank utility and score change, the outcomes of applying fairness criteria are in line with our theoretical predictions.

\item
We discuss how certain types of measurement error (e.g., the bank
underestimating the repayment ability of the disadvantaged group) affect our
comparison. We find that measurement error narrows the regime in which fairness
criteria cause decline, suggesting that measurement should be a factor when
motivating these criteria.
\item We consider alternatives to hard fairness constraints.
\begin{itemize}
	\item We evaluate the optimization problem where fairness criterion is a regularization term in the objective. Qualitatively, this leads to the
	same findings.
	\item We discuss the possibility of optimizing for group score improvement $\delmean\supj$ directly subject to institution utility constraints. The
	resulting solution provides an interesting possible alternative to existing fairness criteria.
\end{itemize}

\end{enumerate}

We focus on the impact of a selection policy over a single epoch. The
motivation is that the designer of a system usually has an understanding of the
time horizon after which the system is evaluated and possibly redesigned.
Formally, nothing prevents us from repeatedly applying our model and tracing
changes over multiple epochs. In reality, however, it is plausible that over greater time
periods, economic background variables might dominate the effect of selection.

Reflecting on our findings, we argue that careful temporal modeling is necessary
in order to accurately evaluate the impact of different fairness criteria on the
population. Moreover, an understanding of measurement error is important in
assessing the advantages of fairness criteria relative to unconstrained
selection. Finally, the nuances of our characterization underline how
intuition may be a poor guide in judging the long-term impact of fairness
constraints.

\subsection{Related work}


Recent work by \citet{hu18shortterm} considers a model for long-term outcomes and fairness in the labor market. They propose imposing the demographic parity constraint in a \emph{temporary} labor market in order to provably achieve an equitable long-term equilibrium in the \emph{permanent} labor market, reminiscent of economic arguments for affirmative action~\citep{foster1992economic}. The equilibrium analysis of the labor market dynamics model allows for specific conclusions relating fairness criteria to long term outcomes. Our general framework is complementary to this type of domain specific approach.

\citet{fuster2017predictably}
consider the problem of fairness in credit markets from a different perspective. Their goal is to study the effect of machine learning on interest rates in different groups at an equilibrium, under a static model without feedback.

\citet{ensign2017runaway} consider feedback loops in predictive
policing, where the police more heavily monitor high crime neighborhoods,
thus further increasing the measured number of crimes in those neighborhoods.
While the work addresses an important temporal phenomenon using the theory of urns, it is rather different from our one-step feedback model both conceptually and technically.

Demographic parity and its related formulations have been considered in numerous
papers~\citep[e.g.][]{Calders2009,zafar17a}. \citet{hardt16equality} introduced the equality of
opportunity constraint that we consider and demonstrated limitations of a broad
class of criteria. \citet{kleinberg16inherent} and~\citet{chould16fair} point out
the tension between ``calibration by group'' and equal true/false positive
rates. These trade-offs carry over to some extent to the case where we only
equalize true positive rates~\citep{weinberger2017calibration}.

A growing literature on fairness in the ``bandits'' setting of learning~\citep[see][\emph{et sequelae}]{roth2016bandits} deals with online decision making that ought not
to be confused with our one-step feedback setting. Finally, there has been much work in the social sciences on analyzing the effect
of affirmative action \citep[see e.g.,][]{keith1985,Kaley2006}.

\subsection{Discussion}


In this paper, we advocate for a view toward long-term outcomes in the
discussion of ``fair'' machine learning. 
We argue that without a careful model of delayed outcomes, we cannot foresee the
impact a fairness criterion would have if enforced as a constraint on a
classification system.  However, if such an accurate outcome model is available,
we show that there are more direct ways to optimize for positive outcomes than
via existing fairness criteria. We outline such an outcome-based solution 
in Section \ref{sec:outcome_based}. Specifically, in the credit setting, the
outcome-based solution corresponds to giving out more loans to the protected
group in a way that reduces profit for the bank compared to unconstrained profit
maximization, but avoids loaning to those who are unlikely to benefit, resulting
in a maximally improved group average credit score. The extent to which such a
solution could form the basis of successful regulation depends on the accuracy
of the available outcome model.

This raises the question if our model of outcomes is rich enough to faithfully
capture realistic phenomena.  By focusing on the impact that selection has on
individuals at a given score, we model the effects for those \emph{not} selected
as zero-mean. For example, not getting a loan in our model has no negative
effect on the credit score of an individual.\footnote{In reality, a denied
credit inquiry may lower one's credit score, but the effect is small compared to
a default event.} This does not mean that wrongful rejection (i.e., a false
negative) has no visible manifestation in our model. If a classifier has a
higher false negative rate in one group than in another, we expect the
classifier to increase the disparity between the two groups (under natural
assumptions). In other words, in our outcome-based model, the harm of denied
opportunity manifests as growing disparity between the groups.  The cost of a
false negative could also be incorporated directly into the outcome-based model by a simple modification (see
Footnote~\ref{footnote:negative_dec_outcomes}). This may be fitting in some
applications where the immediate impact of a false negative to the individual is
not zero-mean, but significantly reduces their future success probability.

In essence, the formalism we propose requires us to understand the two-variable
causal mechanism that translates decisions to outcomes.  This can be seen as
relaxing the requirements compared with recent work 
on avoiding discrimination through causal reasoning that often required stronger
assumptions~\citep{Kusner17,Nabi17,Kilbertus17}. In particular, these works required knowledge of how
sensitive attributes (such as gender, race, or proxies thereof) causally relate
to various other variables in the data. Our model avoids the delicate modeling
step involving the sensitive attribute, and instead focuses on an arguably more
tangible economic mechanism. Nonetheless, depending on the application, such an
understanding might necessitate greater domain knowledge and additional research
into the specifics of the application. This is consistent with much scholarship
that points to the context-sensitive nature of fairness in machine learning.


\section{Problem Setting}
\label{sec:problem setting }
\label{sec:models_and_notation}
We consider two \emph{groups} $\popa$ and $\popb$, which comprise a $\fraca$ and $\fracb = 1 - \fraca$ fraction of the total population, and an \emph{institution} which makes a binary decision for each individual in each group, called \emph{selection}. 
Individuals in each group are assigned \emph{scores} in $\calX := [\maxscore]$, 
and the scores for group $\popj \in \{\popa,\popb\}$ are distributed according $\dist\supj \in \Simplex$. 
The institution selects a \emph{policy} $\pol:=(\pol\supa,\pol\supb)\in[0,1]^{2\maxscore}$, where $\pol\supj(x)$ corresponds to the probability the institution selects an individual in group $\popj$ with score $x$. 
One should think of a score as an abstract quantity which summarizes how well an individual is suited to being selected; examples are provided at the end of this section.

We assume that the institution is utility-maximizing, but may impose certain constraints to ensure that the policy $\pol$ is \emph{fair}, in a sense described in Section~\ref{sec:fairness_criteria}. We assume that there exists a function $\util: \maxscore \to \R$, such that the institution's expected utility for a policy $\pol$ is given by 
\begin{eqnarray}
\textstyle
\label{eqn:inst_util}
\Util(\pol) = \sum_{\popj \in \pops } g_{\popj}\sum_{x \in \X} \polj(x)\distj(x)\util(x).
\end{eqnarray}
Novel to this work, we focus on the effect of the selection policy $\pol$ on the groups $\popa$ and $\popb$. 
We quantify these \emph{outcomes} in terms of an average effect that a policy $\polj$ has on group $\popj$. Formally, for a function $\chg(x):\X \to \R$, we define the average change of the mean score $\mean\supj$ for group $\popj$ 
\begin{eqnarray}
\textstyle
\label{eqn:delmean}
\delmean\supj (\pol) := \sum_{x \in \X} \dst\supj(x)\pol\supj(x)\chg(x) \:. 
\end{eqnarray}
We remark that many of our results also go through if $\delmean\supj(\pol)$ simply refers to an abstract change in well-being, not necessarily a change in the mean score. Furthermore, it is possible to modify the definition of $\delmean\supj(\pol)$ such that it directly considers outcomes of those who are not selected.\footnote{\label{footnote:negative_dec_outcomes}
If we consider functions $\chg_p(x):\X \to \R$ and $\chg_n(x):\X \to \R$ to represent the average effect of selection and non-selection respectively, then $
\delmean\supj (\pol) := \sum_{x \in \X} \dst\supj(x) \left(\pol\supj(x)\chg_p(x) + (1-\pol\supj(x))\chg_n(x) \right)$.
This model corresponds to replacing $\chg(x)$ in the original outcome definition with $\chg_p(x)-\chg_n(x)$, and adding a offset $\sum_{x \in \X} \dst\supj(x)\chg_n(x)$. Under the assumption that $\chg_p(x)-\chg_n(x)$ increases in $x$, this model gives rise to outcomes curves resembling those in Figure~\ref{fig:outcome_curve} up to vertical translation. All presented results hold unchanged under the further assumption that $\delmean (\accrate^\maxprof) \geq 0$.
}
Lastly, we assume that the \emph{success} of an individual is independent of their group given the score; that is, the score summarizes all relevant information about the success event, so there exists a function $\pb:\X \to [0,1]$ such that individuals of score $x$ succeed with probability $\pb(x)$.

We now introduce the specific domain of credit scores as a running example in the rest of the paper, after which we present two more examples showing the general applicability of our formulation to many domains.

\begin{exmp}[Credit scores] 
	\label{ex:credit_lending} In the setting of loans, scores $x \in [\maxscore]$ represent credit scores, and the bank serves as the institution. The bank chooses to grant or refuse loans to individuals according to a policy $\pol$. Both bank and personal utilities are given as functions of loan repayment, and therefore depend on the success probabilities $\pb(x)$, representing the probability that any individual with credit score $x$ can repay a loan within a fixed time frame. The expected utility to the bank is given by the expected return from a loan, which can be modeled as an affine function of $\pb(x)$: $\util(x) = u_+\pb(x) + u_-(1-\pb(x))$, where $u_+$ denotes the profit when loans are repaid and $u_-$ the loss when they are defaulted on. Individual outcomes of being granted a loan are based on whether or not an individual repays the loan, and a simple model for $\chg(x)$ may also be affine in $\pb(x)$: $\chg(x) = \cgood \pb(x) + \cbad (1 - \pb(x))$, modified accordingly at boundary states. The constant $c_+$ denotes the gain in credit score if loans are repaid and $c_-$ is the score penalty in case of default.
	

\end{exmp}

\begin{exmp}[Advertising]
	\label{ex:advertizing}
	A second illustrative example is given by the case of advertising agencies making decisions about which groups to target. An individual with product interest score $x$ responds positively to an ad with probability $\pb(x)$. The ad agency experiences utility $\util(x)$ related to click-through rates, which increases with $\pb(x)$. Individuals who see the ad but are uninterested may react negatively (becoming less interested in the product), and $\chg(x)$ encodes the interest change. 
	If the product is a  positive good like education or employment opportunities, interest can correspond to well-being.
	Thus the advertising agency's incentives to only show ads to individuals with extremely high interest may leave behind groups whose interest is lower on average. 
	A related historical example occurred in advertisements for computers in the 1980s, where male consumers were targeted over female consumers, arguably contributing to the current gender gap in computing. 
\end{exmp}

\begin{exmp}[College Admissions]
	\label{ex:admissions}
	
	The scenario of college admissions or scholarship allotments can also be considered within our framework. Colleges may select certain applicants for acceptance according to a score $x$, which could be thought encode a ``college preparedness'' measure. The students who are admitted might ``succeed" (this could be interpreted as graduating, graduating with honors, finding a job placement, etc.) with some probability $\pb(x)$ depending on their preparedness. The college might experience a utility $\util(x)$ corresponding to alumni donations, or positive rating when a student succeeds; they might also show a drop in rating or a loss of invested scholarship money when a student is unsuccessful. The student's success in college will affect their later success, which could be modeled generally by $\chg(x)$. In this scenario, it is challenging to ensure that a single summary statistic $x$ captures enough information about a student; it may be more appropriate to consider $x$ as a vector as well as more complex forms of $\pb(x)$. 
\end{exmp}

While a variety of applications are modeled faithfully within our framework, there are limitations to the accuracy with which real-life phenomenon can be measured by strictly binary decisions and success probabilities. Such binary rules are necessary for the definition and execution of existing fairness criteria, (see Sec.~\ref{sec:fairness_criteria}) and as we will see, even modeling these facets of decision making as binary allows for complex and interesting behavior.

\subsection{The Outcome Curve}
\label{sec:pop_outcomes}
We now introduce important outcome regimes, stated in terms of the change in average group score. A policy $(\pola,\polb)$ is said to cause \emph{active harm} to group $\popj$ if $\delmean\supj(\pol\supj) < 0$, \emph{stagnation} if $\delmean\supj(\pol\supj) = 0$, and \emph{improvement} if $\delmean\supj(\pol\supj) > 0$. Under our model, $\maxprof$ policies can be chosen in a standard fashion which applies the same threshold $\pol^\maxprof$ for both groups, and is agnostic to the distributions $\dista$ and $\distb$.  Hence, if we define
\begin{eqnarray}
\delmean\supj^{\maxprof}:= \delmean\supj(\pol^\maxprof)
\end{eqnarray}
we say that a policy causes \emph{relative harm} to group $\popj$ if $\delmean\supj(\polj) < \delmean\supj^{\maxprof}$, and \emph{relative improvement} if $\delmean\supj(\polj) > \delmean\supj^{\maxprof}$. In particular, we focus on these outcomes for a disadvantaged group, and consider whether imposing a fairness constraint improves their outcomes relative to the $\maxprof$ strategy. From this point forward, we take $\popa$ to be disadvantaged or protected group. 

\begin{figure*}
\begin{subfigure}{0.7\textwidth}
\centering
\includegraphics[width=\textwidth]{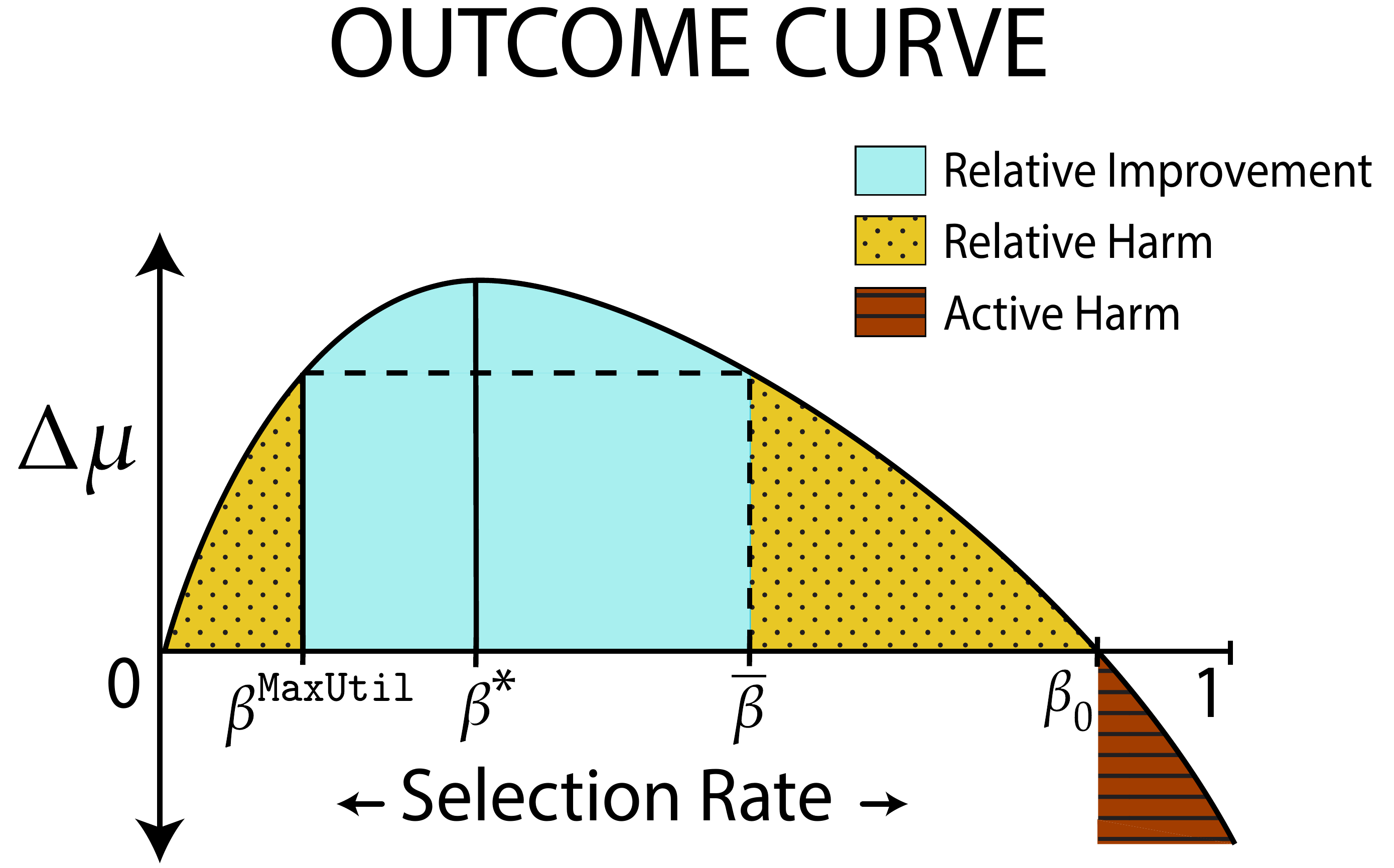}
\caption{}
\label{subfig:outcome_region}
\end{subfigure}%
\begin{subfigure}{.3\textwidth}
\centering
\includegraphics[width=\textwidth]{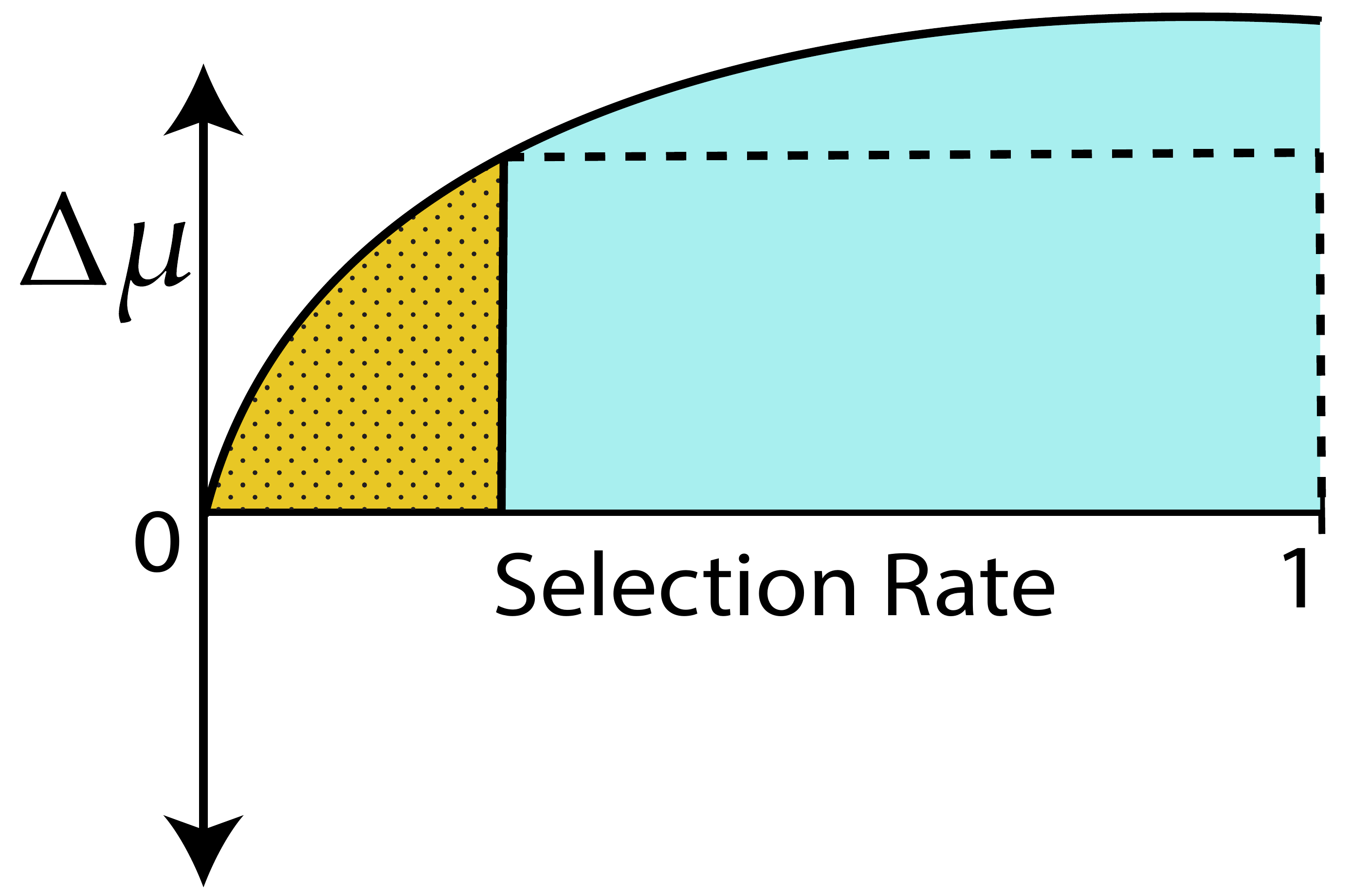}
\caption{}
\label{fig:sub1}
\includegraphics[width=\textwidth]{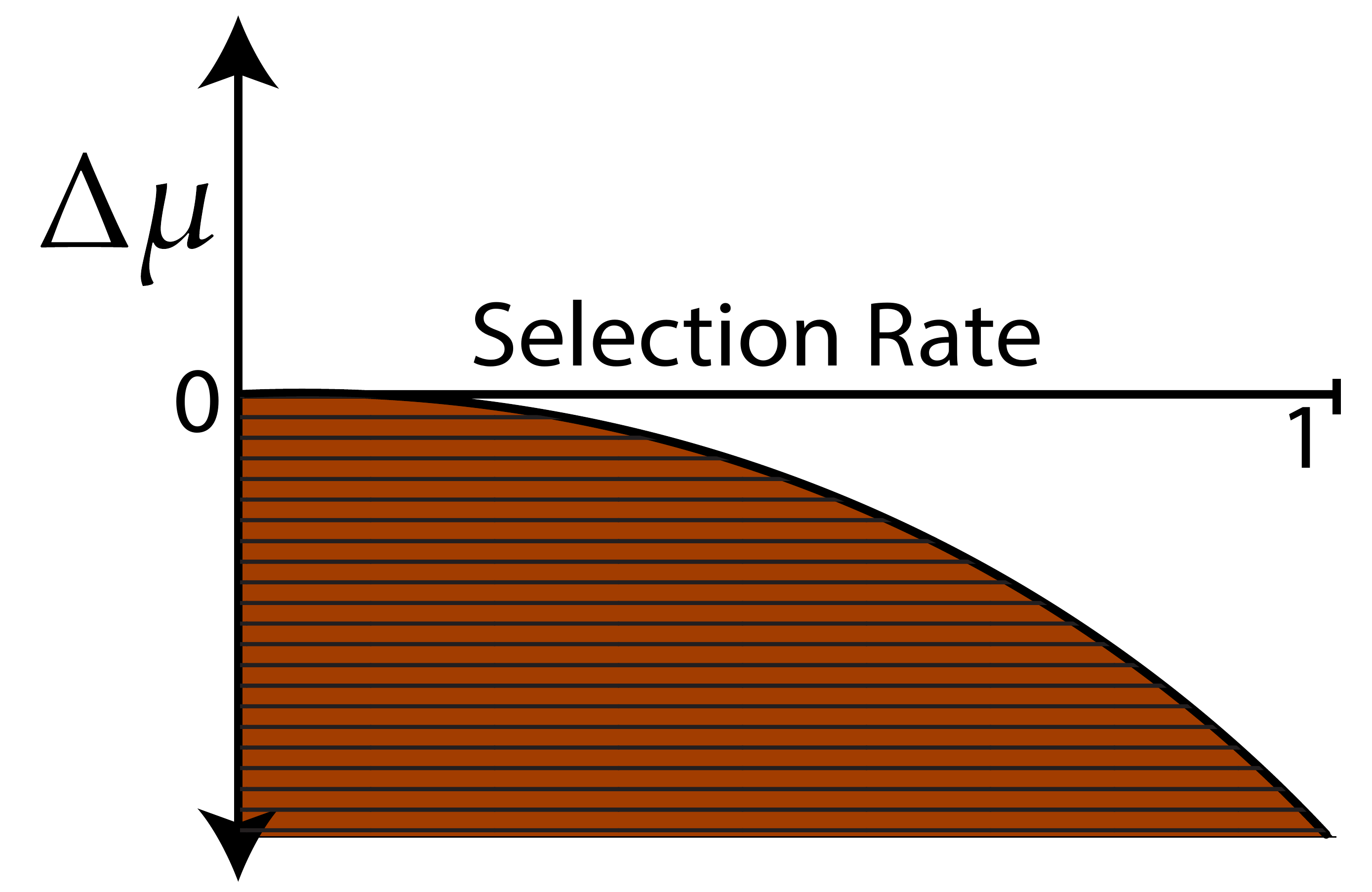}
\caption{}
\label{fig:sub2}
\end{subfigure}
\caption{The above figure shows the \emph{outcome curve}. The horizontal axis represents the selection rate for the population; the vertical axis represents the mean change in score. (a) depicts the full spectrum of outcome regimes, and colors indicate regions of active harm, relative harm, and no harm. In (b): a group that has much potential for gain, in (c): a group that has no potential for gain.}
\label{fig:outcome_curve}
\end{figure*}

Figure~\ref{fig:outcome_curve} displays the important outcome regimes in terms of \emph{selection rates} $\accrate\supj := \sum_{x \in \X} \distj(x)\polj(x)$. This succinct characterization is possible when considering decision rules based on (possibly randomized) score thresholding, in which all individuals with scores above a threshold are selected. In Section~\ref{section:opt_threshold}, we justify the restriction to such \emph{threshold policies} by showing it preserves optimality. In Section~\ref{section:concavity_outcome}, we show that the outcome curve is concave, thus implying that it takes the shape depicted in Figure~\ref{fig:outcome_curve}. To explicitly connect selection rates to decision policies, we define the rate function $\ratef_{\dist}(\polj)$ which returns the proportion of group $\popj$ selected by the policy. We show that this function is invertible for a suitable class of threshold policies, and in fact the outcome curve is precisely the graph of the map from selection rate to outcome $\accrate \mapsto \delmean\supa (\ratef_{\dista}^{-1}(\accrate))$. Next, we define the values of $\accrate$ that mark boundaries of the outcome regions.

\begin{defn}[Selection rates of interest]\label{def:special_betas}
	Given the protected group $\popa$, the following selection rates are of interest in distinguishing between qualitatively different classes of outcomes (Figure \ref{fig:outcome_curve}). We define $\accrate^{\maxprof}$ as the selection rate for $\popa$ under $\maxprof$; $\accrate_0$ as the harm threshold, such that $\delmean\supa(\ratef^{-1}_{\dista}(\accrate_0))~=~0$; $\accrate^*$ as the selection rate such that $\delmean\supa$ is maximized; $\overline{\accrate}$ as the outcome-complement of the $\maxprof$ selection rate, $\delmean\supa \ratef_{\dista}^{-1}(\overline{\accrate}))~=~\delmean\supa (\ratef_{\dista}^{-1}(\accrate^\maxprof))$ with $\overline{\accrate}~>~\accrate^{\maxprof}$.
\end{defn}

\subsection{Decision Rules and Fairness Criteria}
\label{sec:fairness_criteria}
We will consider policies that maximize the institution's total expected utility, potentially subject to a constraint: $\pol \in \Constraint \in [0,1]^{2\maxscore}$ which enforces some notion of ``fairness''. Formally, the institution selects $\tau_* \in \argmax~\Util(\pol) ~\subto~ \pol \in \Constraint$. We consider the three following constraints:
\begin{defn}[Fairness criteria] The \emph{maximum utility} ($\maxprof$) policy corresponds to the null-constraint $\Constraint = [0,1]^{2\maxscore}$, so that the institution is free to focus solely on utility. The \emph{demographic parity} ($\dempar$) policy results in equal selection rates between both groups. Formally, the constraint is
$\Constraint = \left\{(\pola,\polb): \sum_{x \in \X} \dist\supa(x)\pola =
\sum_{x \in \X} \dist\supb(x)\polb\right\}\:.$
The \emph{equal opportunity} ($\eqop$) policy results in equal true positive rates (TPR) between both group, where TPR is defined as $\oppj(\pol) :=\frac{\sum_{x \in \X} \dist\supj(x)\pb(x)\pol(x)}{\sum_{x \in \X} \dist\supj(x)\pb(x)}$.  $\eqop$ ensures that the conditional probability of selection given that the individual will be successful is independent of the population, formally enforced by the constraint
$\Constraint = \left\{(\pola,\polb): \oppa(\pola) = \oppb(\polb) \right\}\,.$
\end{defn}

Just as the expected outcome $\delmean$ can be expressed in terms of selection rate for threshold policies, so can the total utility $\Util$. In the unconstrained cause, $\Util$ varies independently over the selection rates for group $\popa$ and $\popb$; however, in the presence of fairness constraints the selection rate for one group determines the allowable selection rate for the other. The selection rates must be equal for $\dempar$, but for $\eqop$ we can define a \emph{transfer function}, $\transfer$, which for every loan rate $\beta$ in group $\popa$ gives the loan rate in group $\popb$ that has the same true positive rate. Therefore, when considering threshold policies, decision rules amount to maximizing functions of single parameters. This idea is expressed in Figure~\ref{fig:outcome_util_curves}, and underpins the results to follow.


\section{Results}\label{sec:results}

In order to clearly characterize the outcome of applying fairness constraints, we make the following assumption.

\begin{asm}[Institution utilities]\label{asm:institution_util}
	The institution's individual utility function is more stringent than the expected score changes, $\util(x)> 0 \implies \chg(x)>0$. (For the linear form presented in Example~\ref{ex:credit_lending},  $\frac{u_-}{u_+} < \frac{c_-}{c_+}$ is necessary and sufficient.)
\end{asm}

This simplifying assumption quantifies the intuitive notion that institutions take a greater risk by accepting than the individual does by applying. For example, in the credit setting, a bank loses the amount loaned in the case of a default, but makes only interest in case of a payback. Using Assumption~\ref{asm:institution_util}, we can restrict the position of $\maxprof$ on the outcome curve in the following sense.

\begin{prop}[$\maxprof$ does not cause active harm]\label{prop:mp_noharm}
	Under Assumption~\ref{asm:institution_util}, $ 0\leq \delmean^{\maxprof} \leq \delmean^*$.
\end{prop}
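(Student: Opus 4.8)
The plan is to reduce the claim to an explicit description of the $\maxprof$ policy and then read off the sign of $\delmean^{\maxprof} := \delmean\supa(\pol^\maxprof)$ termwise from Assumption~\ref{asm:institution_util}. The upper bound $\delmean^{\maxprof} \le \delmean^*$ is essentially definitional, so the substantive part is the lower bound $0 \le \delmean^{\maxprof}$.

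First I would pin down the unconstrained optimizer. The utility $\Util(\pol)$ in~\eqref{eqn:inst_util} is linear in each coordinate $\pol\supj(x)$ with coefficient $g_{\popj}\distj(x)\util(x)$, and since $g_{\popj} \ge 0$ and $\distj(x) \ge 0$, the sign of this coefficient is the sign of $\util(x)$. Hence the unconstrained maximizer sets $\pol^\maxprof(x) = 1$ whenever $\util(x) > 0$ and $\pol^\maxprof(x) = 0$ whenever $\util(x) < 0$, with the break-even convention of not selecting when $\util(x) = 0$; this is the same threshold for both groups, matching the ``break-even'' description of $\maxprof$ in the introduction.

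Substituting this policy into the outcome definition~\eqref{eqn:delmean} gives
\[
\delmean^{\maxprof} = \sum_{x \in \X} \dista(x)\,\pol^\maxprof(x)\,\chg(x) = \sum_{x \,:\, \util(x) > 0} \dista(x)\,\chg(x).
\]
By Assumption~\ref{asm:institution_util}, every index $x$ in this sum satisfies $\util(x) > 0 \implies \chg(x) > 0$, and $\dista(x) \ge 0$, so the sum consists of nonnegative terms; therefore $\delmean^{\maxprof} \ge 0$. For the upper bound, note that $\pol^\maxprof$ is a threshold policy and so corresponds to a point $\accrate^{\maxprof}$ on the outcome curve, i.e.\ $\delmean^{\maxprof} = \delmean\supa(\ratef_{\dista}^{-1}(\accrate^{\maxprof}))$. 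Since $\accrate^*$ is by Definition~\ref{def:special_betas} the rate that maximizes $\delmean\supa$ along the curve, we get $\delmean^{\maxprof} \le \delmean\supa(\ratef_{\dista}^{-1}(\accrate^*)) = \delmean^*$.

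I expect the only real obstacle to be the tie-breaking at $\util(x) = 0$: Assumption~\ref{asm:institution_util} constrains the sign of $\chg(x)$ only where $\util(x) > 0$, so if $\maxprof$ were permitted to select states with $\util(x) = 0$ the termwise argument could fail. I would resolve this by fixing the break-even convention above (select only strictly profitable states), which is both utility-optimal and consistent with the threshold used to define $\pol^\maxprof$ in the text, guaranteeing that every selected state contributes a strictly positive $\chg(x)$ to the sum.
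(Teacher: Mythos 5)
Your proof is correct, and its overall skeleton is the same as the paper's: show that the $\maxprof$ policy selects exactly the states with $\util(x)>0$, invoke Assumption~\ref{asm:institution_util} to conclude every selected state has $\chg(x)>0$ so the sum in~\eqref{eqn:delmean} is termwise nonnegative, and note that the upper bound $\delmean^{\maxprof}\le\delmean^*$ is definitional. The one place you diverge is in how you establish the first step: the paper derives the characterization ``$\maxprof$ loans only to scores with $\util(x)>0$'' from its selection-rate machinery --- it parametrizes $\Util\supj$ by the selection rate $\accrate$, computes the left and right derivatives $\util(\RCDFpl\supj(\accrate))$ and $\util(\RCDF\supj(\accrate))$ via Lemma~\ref{lem:der_comp}, and reads off the optimal interval from concavity --- whereas you observe directly that the unconstrained objective~\eqref{eqn:inst_util} is linear and separable in the coordinates $\pol\supj(x)$ with coefficient $g_{\popj}\distj(x)\util(x)$, so the maximizer is determined coordinatewise by the sign of $\util(x)$. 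Your route is more elementary and does not even need $\util$ to be monotone for the lower bound (though monotonicity is what makes $\pol^{\maxprof}$ a threshold policy, which you correctly use for the comparison with $\delmean^*$); the paper's route has the advantage of producing the first-order characterization~\eqref{eq:max_prof_sol_char} that is then reused repeatedly in the proofs of Corollaries~\ref{cor:fairness_rel_imp}--\ref{cor:eqop_underloan}. Your explicit handling of the tie-breaking at $\util(x)=0$ is a point in your favor: Assumption~\ref{asm:institution_util} is silent about such states, and the paper glosses over this by simply asserting that $\maxprof$ ``loans only to scores $x$ s.t.\ $\util(x)>0$,'' which is the same convention you adopt.
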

We direct the reader to Appendix~\ref{app:main_results_proofs} for the proof of the above proposition, and all subsequent results presented in this section. The results are corollaries to theorems presented in Section~\ref{sec:main_thm_proofs}.



\subsection{Prospects and Pitfalls of Fairness Criteria}
\label{sec:active_harm_fairness}
We begin by characterizing general settings under which fairness criteria act to improve outcomes over unconstrained $\maxprof$ strategies. For this result, we will assume that group $\popa$ is disadvantaged in the sense that the $\maxprof$ acceptance rate for $\popb$ is large compared to relevant acceptance rates for $\popa$.

\begin{cor}[Fairness Criteria can cause Relative Improvement]\label{cor:fairness_rel_imp}
	
	(a) Under the assumption that $\accrate\supa^{\maxprof}<\accratebar$ and $\accrate\supb^{\maxprof} > \accrate^{\maxprof}\supa$,  there exist population proportions $g_0<g_1<1$ such that, for all $\fraca \in [g_0,g_1]$, $\accrate\supa^{\maxprof}<\accrate\supa^{\dempar} < \accratebar$. That is, $\dempar$ causes relative improvement. 
	
	(b) Under the assumption that there exist $\accrate\supa^{\maxprof}<\accrate<\accrate'<\accratebar$ such that $\accrate\supb^{\maxprof} > \transfer(\accrate), \transfer(\accrate')$,
	there exist population proportions $g_2<g_3<1$ such that, for all $\fraca \in [g_2,g_3]$, $\accrate\supa^{\maxprof}<\accrate\supa^{\eqop} <\accratebar$. That is, $\eqop$ causes relative improvement.
\end{cor}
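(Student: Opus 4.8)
The plan is to reduce both statements to a single geometric fact about the outcome curve and then track where the constrained optimum sits as $\fraca$ varies. Recall from Section~\ref{section:concavity_outcome} that the map $\accrate \mapsto \delmean\supa(\ratef_{\dista}^{-1}(\accrate))$ is concave, and that by Definition~\ref{def:special_betas} $\accratebar$ is the rate exceeding $\accrate\supa^{\maxprof}$ with outcome equal to $\delmean\supa^{\maxprof}$. A concave function taking equal values at the endpoints $\accrate\supa^{\maxprof} < \accratebar$ lies strictly above that common value throughout the open interval between them, the interior maximizer $\accrate^*$ (which lies strictly between them, with $\delmean\supa(\accrate^*) > \delmean\supa^{\maxprof}$) witnessing strictness. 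Hence in each part it suffices to show that the constrained-optimal selection rate for the protected group $\popa$ lands strictly inside $(\accrate\supa^{\maxprof}, \accratebar)$; this immediately gives $\delmean\supa > \delmean\supa^{\maxprof}$, i.e.\ relative improvement.

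Next I would pass to the single-parameter form of the utility afforded by the reduction to threshold policies in Section~\ref{sec:fairness_criteria}. Writing $\Util\supj(\accrate)$ for the utility contributed by group $\popj$ at selection rate $\accrate$ (each $\Util\supj$ concave with peak at $\accrate\supj^{\maxprof}$), the constrained objective in the variable $\accrate = \accrate\supa$ becomes $\fraca\,\Util\supa(\accrate) + (1-\fraca)\,\Util\supb(\accrate)$ for $\dempar$ and $\fraca\,\Util\supa(\accrate) + (1-\fraca)\,\Util\supb(\transfer(\accrate))$ for $\eqop$. For part~(a) the $\dempar$ objective is a convex combination of concave functions, hence concave, with a maximizer $\accrate\supa^{\dempar}(\fraca)$ that is continuous and monotone in $\fraca$; it equals $\accrate\supa^{\maxprof}$ at $\fraca = 1$ and tends to $\accrate\supb^{\maxprof}$ as $\fraca \downarrow 0$. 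Using $\accrate\supb^{\maxprof} > \accrate\supa^{\maxprof}$ (so the maximizer strictly increases as $\fraca$ drops below $1$) together with $\accrate\supa^{\maxprof} < \accratebar$ (so by continuity the maximizer still lies below $\accratebar$ for $\fraca$ near $1$), I would extract a subinterval $[g_0,g_1] \subset (0,1)$ with $g_1 < 1$ on which $\accrate\supa^{\maxprof} < \accrate\supa^{\dempar}(\fraca) < \accratebar$, closing part~(a) via the first paragraph.

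For part~(b) the same template applies, but the composed term $\Util\supb \circ \transfer$ need not be concave, so monotone comparative statics is unavailable and the global maximizer of the non-concave $\eqop$ objective could a priori sit outside $(\accrate\supa^{\maxprof}, \accratebar)$; this is the main obstacle. The role of the two witnesses $\accrate\supa^{\maxprof} < \accrate < \accrate' < \accratebar$ with $\transfer(\accrate), \transfer(\accrate') < \accrate\supb^{\maxprof}$ is to certify a subinterval $[\accrate,\accrate'] \subset (\accrate\supa^{\maxprof}, \accratebar)$ on which, by monotonicity of $\transfer$ and concavity of $\Util\supb$ below its peak $\accrate\supb^{\maxprof}$, the group-$\popb$ term $\Util\supb(\transfer(\cdot))$ is strictly increasing while $\Util\supa$ is strictly decreasing. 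I would then compare objective values at $\accrate, \accrate'$ and at the endpoints $\accrate\supa^{\maxprof}, \accratebar$: choosing $\fraca$ so the increasing $\popb$-contribution over $[\accrate,\accrate']$ just outweighs the decreasing $\popa$-contribution forces the maximizer strictly above $\accrate\supa^{\maxprof}$, while keeping $\fraca$ large enough prevents it from reaching $\accratebar$, exactly as in part~(a). The delicate point is establishing continuity and the $\fraca \to 1$ boundary value of the constrained maximizer without concavity of the objective; I expect to handle this by a maximum-theorem argument on the reduced one-dimensional problem, leaning on the strict concavity of $\Util\supa$ to pin down the limit, after which the two-point trap and continuity yield the interval $[g_2,g_3] \subset (0,1)$ with $g_3 < 1$ on which $\accrate\supa^{\maxprof} < \accrate\supa^{\eqop}(\fraca) < \accratebar$.
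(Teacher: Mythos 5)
There is a genuine gap, concentrated in part~(b), plus a rigor problem in part~(a). In part~(a) your reduction via concavity of the outcome curve and the single-parameter objective $\fraca\,\Util\supa(\accrate)+(1-\fraca)\,\Util\supb(\accrate)$ matches the paper's setup, but the comparative-statics step is not right as stated: because all functions here are piecewise linear, the argmax is an \emph{interval}, and the maximizer does \emph{not} strictly increase as soon as $\fraca$ drops below $1$ --- if $\partial_+\Util\supa(\accrate\supa^{\maxprof})<0$ strictly, the optimum stays at $\accrate\supa^{\maxprof}$ for all $\fraca$ near $1$, so "continuity near $\fraca=1$" and "strict increase below $\fraca=1$" cannot both be invoked on the same range of $\fraca$; moreover upper hemicontinuity does not rule out the optimal interval jumping across $(\accrate\supa^{\maxprof},\accratebar)$. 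The paper avoids all of this by \emph{fixing} target rates $\accrate,\accrate'$ strictly inside the window and inverting the explicit left/right-derivative conditions of Theorem~\ref{thm:dem_par_selection} to get closed-form thresholds $g_0,g_1$ in $\fraca$ for which \emph{every} optimal rate lies in $(\accrate,\accrate')$. Your part~(a) is fixable along those lines.

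Part~(b) is where the argument breaks. You correctly observe that $\Util\supb\circ\transfer$ need not be concave in $\accratea$, but you then try to control the \emph{global} maximizer of a non-concave objective by comparing its values at four points ($\accrate\supa^{\maxprof}$, $\accrate$, $\accrate'$, $\accratebar$); this cannot work --- a non-concave function can attain its maximum far outside $[\accrate,\accrate']$ while still satisfying any finite set of pointwise comparisons there, and a Berge-type maximum theorem only gives upper hemicontinuity of the argmax, not its location. The missing idea is the paper's reparameterization: writing the $\eqop$ problem in terms of the common true positive rate $t$ via the bijection $\constfj$ (Lemma~\ref{lem:bijection_tpr_accrate}), the objective $\sum_j \fracj\,\Util\supj(\ratef_{\distj}^{-1}(\constfj^{-1}(t)))$ \emph{is} concave in $t$ (Lemma~\ref{lem:const_to_util_derv}), so the optimal rates again form an interval characterized by first-order conditions (Theorem~\ref{thm:eqop_selection}), and part~(b) reduces to exactly the same derivative-inversion in $\fraca$ as part~(a), with $\util(\RCDF\supb(\cdot))$ replaced by $\util(\RCDF\supb(\transfer(\cdot)))/\boldwb(\RCDF\supb(\transfer(\cdot)))$. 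Your assumption $\accrate\supb^{\maxprof}>\transfer(\accrate),\transfer(\accrate')$ is then used to sign these derivatives, not to set up a value comparison. Without recovering concavity in some parameterization, your part~(b) does not close.
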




This result gives the conditions under which we can guarantee the existence of settings in which fairness criteria cause improvement relative to $\maxprof$. Relying on machinery proved in Section~\ref{sec:main_thm_proofs}, the result follows from comparing the position of optima on the utility curve to the outcome curve. Figure~\ref{fig:outcome_util_curves} displays a illustrative example of both the outcome curve and the institutions' utility $\Util$ as a function of the selection rates in group~$\popa$. In the utility function~\eqref{eqn:inst_util}, the contributions of each group are weighted by their population proportions $\fracj$, and thus the resulting selection rates are sensitive to these proportions.

As we see in the remainder of this section, fairness criteria can achieve nearly any position along the outcome curve under the right conditions. This fact comes from the potential mismatch between the outcomes, controlled by $\chg$, and the institution's utility $\util$.

\begin{figure}
\centering
\includegraphics[width=.4\textwidth]{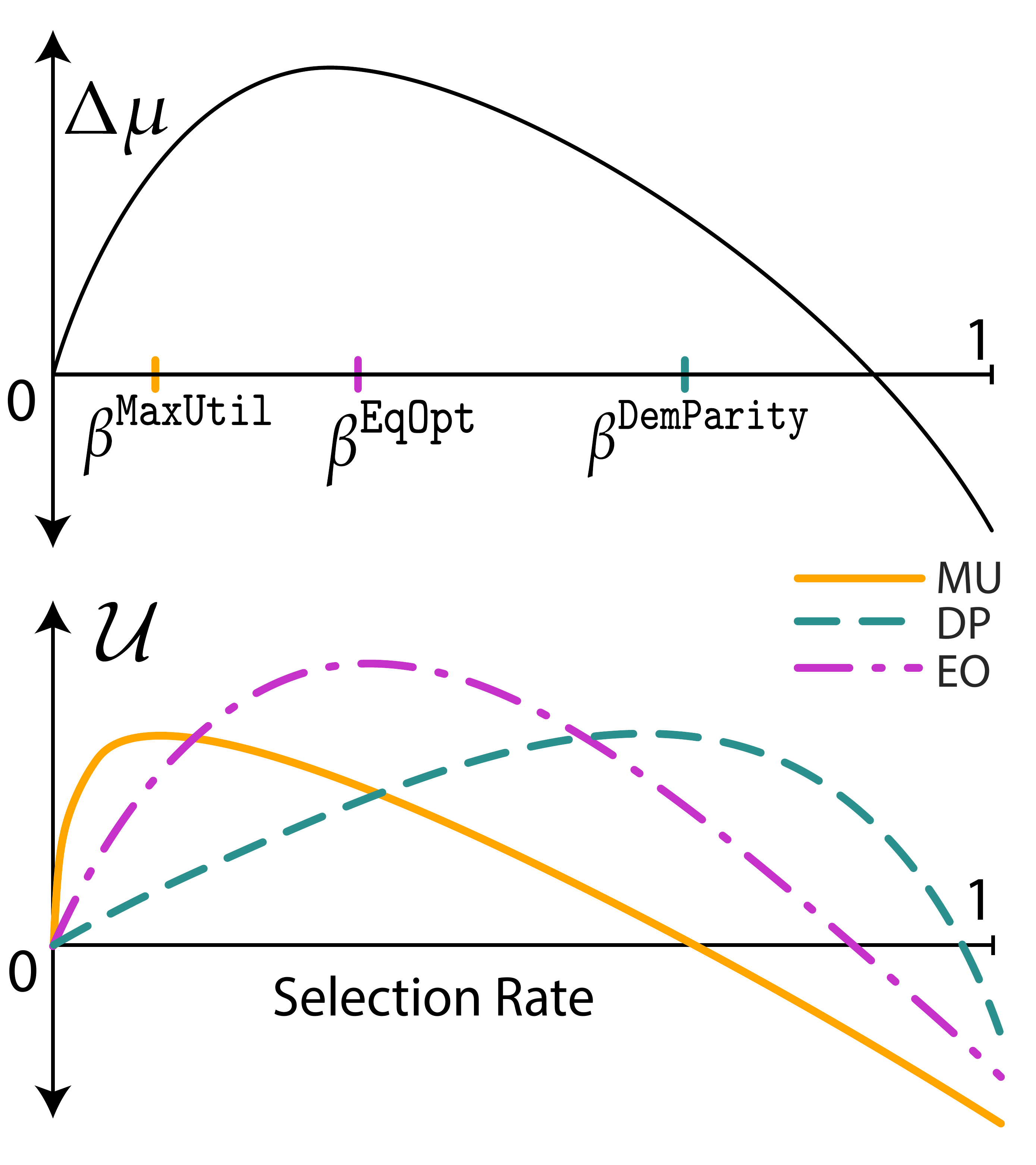}
\caption{Both outcomes $\delmean$ and institution utilities $\Util$ can be plotted as a function of selection rate for one group. The maxima of the utility curves determine the selection rates resulting from various decision rules.}
\label{fig:outcome_util_curves}
\end{figure}

The next theorem implies that $\dempar$ can be bad for long term well-being of the protected group by being over-generous, under the mild assumption that $\delmean\supa(\accrate\supb^{\maxprof})~<~0$:
\begin{cor}[$\dempar$ can cause harm by being over-eager] \label{cor:dp_overeager}  Fix a selection rate $\accrate$. Assume that $\accrate\supb^{\maxprof}~>~\accrate~>~\accrate\supa^{\maxprof}  $. Then, there exists a population proportion $g_0$ such that, for all $\fraca \in [0,g_0]$, $\accrate\supa^{\dempar} > \accrate$. In particular, when $\accrate=\accrate_0$, $\dempar$ causes active harm, and when $\accrate = \accratebar$, $\dempar$ causes relative harm.
\end{cor}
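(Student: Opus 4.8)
The plan is to reduce the $\dempar$-constrained problem to a one-dimensional maximization over the common selection rate, and then to invoke the concavity of the utility-versus-rate curve together with the concavity of the outcome curve. Restricting to threshold policies (Section~\ref{section:opt_threshold}), for each group $\popj$ let $V\supj(\accrate):=\sum_{x\in\X}\dist\supj(x)\,\pol^{\accrate}\supj(x)\,\util(x)$ be the per-capita institution utility from selecting group $\popj$ at rate $\accrate$, where $\pol^{\accrate}\supj$ is the threshold policy of rate $\accrate$ (namely $\ratef_{\dist\supj}^{-1}(\accrate)$). Since $\dempar$ forces $\accrate\supa=\accrate\supb$, the constrained institution maximizes the single-variable objective $\Util_{\fraca}(\accrate)=\fraca V\supa(\accrate)+(1-\fraca)V\supb(\accrate)$, whose maximizer is exactly $\accrate\supa^{\dempar}$; by definition $\accrate\supj^{\maxprof}=\argmax_{\accrate}V\supj(\accrate)$. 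As for the outcome curve, each $V\supj$ is concave in $\accrate$ (its one-sided derivative is $\util$ evaluated at the threshold score, which is nonincreasing in $\accrate$ because $\util$ is nondecreasing in the score), so $\Util_{\fraca}$ is concave.

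The core step is to show the maximizer of $\Util_{\fraca}$ overshoots $\accrate$ once $\fraca$ is small. Because $\accrate<\accrate\supb^{\maxprof}$ and $V\supb$ is concave with maximizer $\accrate\supb^{\maxprof}$, choosing any $\accrate'\in(\accrate,\accrate\supb^{\maxprof})$ gives $\delta:=V\supb(\accrate')-V\supb(\accrate)>0$. With $M:=|V\supa(\accrate')-V\supa(\accrate)|<\infty$ we obtain $\Util_{\fraca}(\accrate')-\Util_{\fraca}(\accrate)\ge(1-\fraca)\delta-\fraca M$, which is positive for all $\fraca$ below the threshold $g_0:=\delta/(\delta+M)$. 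For such $\fraca$ we have $\Util_{\fraca}(\accrate')>\Util_{\fraca}(\accrate)$ with $\accrate'>\accrate$, so concavity of $\Util_{\fraca}$ forces its maximizer $\accrate\supa^{\dempar}$ to exceed $\accrate$. (As $\fraca\to0$ the objective collapses to $V\supb$ and $\accrate\supa^{\dempar}\to\accrate\supb^{\maxprof}>\accrate$; the hypothesis $\accrate>\accrate\supa^{\maxprof}$ is what makes ``$\accrate\supa^{\dempar}>\accrate$'' an instance of over-eagerness, i.e.\ lending past $\popa$'s own optimum.)

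It remains to convert the rate inequality $\accrate\supa^{\dempar}>\accrate$ into the two outcome claims via concavity of the outcome curve (Section~\ref{section:concavity_outcome}). By Proposition~\ref{prop:mp_noharm} the $\maxprof$ rate lies on the increasing branch, $\accrate\supa^{\maxprof}\le\accrate^*$, so the two distinguished rates sit on the far (decreasing) side of the peak. For $\accrate=\accrate_0$: since $\delmean\supa$ vanishes at $\accrate_0>\accrate^*$ and is decreasing beyond the peak, $\accrate\supa^{\dempar}>\accrate_0$ yields $\delmean\supa(\accrate\supa^{\dempar})<0$, i.e.\ active harm. For $\accrate=\accratebar$: by definition $\accratebar\ge\accrate^*$ and $\delmean\supa(\accratebar)=\delmean^{\maxprof}$, so the decreasing branch gives $\accrate\supa^{\dempar}>\accratebar\Rightarrow\delmean\supa(\accrate\supa^{\dempar})<\delmean^{\maxprof}$, i.e.\ relative harm.

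The main obstacle is the concavity bookkeeping in the middle step: I must ensure the value increase at $\accrate'$ actually pushes the \emph{maximizer} to the right rather than merely raising the objective, and that the fixed rate $\accrate$ lies on the decreasing branch of the outcome curve so that ``past $\accrate$'' carries the claimed sign of $\delmean\supa$. Both points rest on the concavity results of Section~\ref{section:concavity_outcome} and on the rate ordering supplied by Proposition~\ref{prop:mp_noharm}.
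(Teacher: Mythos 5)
Your proof is correct, and it reaches the conclusion by a genuinely different mechanism for the core step. The paper also reduces $\dempar$ to a one-dimensional concave maximization over the common selection rate (this is the content of Proposition~\ref{prop:opt_threshold_fair} and the setup of Theorem~\ref{thm:dem_par_selection}), but it then finishes by computing the one-sided derivatives $\fraca\util(\RCDF\supa(\accrate)) + \fracb\util(\RCDF\supb(\accrate))$, checking their signs via \eqref{eq:max_prof_sol_char}, and inverting the first-order condition to get the explicit threshold $g_0 = 1/(1-\util(\RCDF\supa(\accrate))/\util(\RCDF\supb(\accrate)))$. You instead avoid the quantile-derivative machinery entirely and run a secant/value-comparison argument: pick $\accrate'\in(\accrate,\accrate\supb^{\maxprof})$, note $V\supb(\accrate')-V\supb(\accrate)=\delta>0$ by concavity of $V\supb$ and the fact that $\accrate$ is not its maximizer, and conclude that for $\fraca<\delta/(\delta+M)$ the weighted objective is strictly larger at $\accrate'$, which by concavity pushes every maximizer past $\accrate$. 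This buys a more elementary and robust argument (it needs only concavity of the group utility curves and the definition of $\accrate\supj^{\maxprof}$ as a maximizer, not Lemma~\ref{lem:der_comp} or Theorem~\ref{thm:dem_par_selection}), at the cost of a less explicit, non-constructive $g_0$; the paper's derivative route yields the closed-form threshold that is then reused in Corollaries~\ref{cor:fairness_rel_imp} and~\ref{cor:avoid_harm}. Two small points to tidy up: your $g_0=\delta/(\delta+M)$ gives strict improvement only for $\fraca<g_0$, so shrink it slightly to cover the closed interval $[0,g_0]$ stated in the corollary (the paper's own $g_0$ has the same endpoint quibble); and the rate ordering $\accrate\supa^{\maxprof}\le\accrate^*$ that you attribute to Proposition~\ref{prop:mp_noharm} is really established in that proposition's \emph{proof} (via Assumption~\ref{asm:institution_util}: the marginal score accepted by $\maxprof$ has $\util>0$, hence $\chg>0$), since the statement itself only orders the values $\delmean^{\maxprof}\le\delmean^*$, not the rates.
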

The assumption $\delmean\supa(\accrate\supb^{\maxprof})~<~0$ implies that a policy which selects individuals from group $\popa$ at the selection rate that $\maxprof$ would have used for group $\popb$ necessarily lowers average score in $\popa$. This is one natural notion of protected group $\popa$'s `disadvantage' relative to group $\popb$. In this case, $\dempar$ penalizes the scores of group $\popa$ even more than a naive $\maxprof$ policy, as long as group proportion $\fraca$ is small enough. Again, small $\fraca$ is another notion of group disadvantage. 

Using credit scores as an example, Corollary~\ref{cor:dp_overeager} tells us that an overly aggressive fairness criterion will give too many loans to people in a protected group who cannot pay them back, hurting the group's credit scores on average. In the following theorem, we show that an analogous result holds for $\eqop$.

\begin{cor}[$\eqop$ can cause harm by being over-eager] \label{cor:eqop_overeager} Suppose that $\accrate\supb^{\maxprof}~>~\transfer(\accrate)$ and~$\accrate~>~\accrate\supa^{\maxprof}$. Then, there exists a population proportion $g_0$ such that, for all $\fraca  \in [0,g_0]$, $\accrate^{\eqop}_{\popa}~>~\accrate$. In particular, when $\accrate~=~\accrate_0$, $\eqop$ causes active harm, and when $\accrate~=~\accratebar$, $\eqop$ causes relative harm.
\end{cor}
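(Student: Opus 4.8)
The plan is to follow the template of Corollary~\ref{cor:dp_overeager}, replacing the $\dempar$ coupling with the $\eqop$ coupling encoded by the transfer function, and to locate the $\eqop$ optimum on the $\popa$-rate axis by analyzing the limit $\fraca \to 0$. By the reduction to threshold policies (Section~\ref{section:opt_threshold}), the $\eqop$ constraint forces group $\popb$ to be selected at rate $\transfer(\accrate)$ once group $\popa$ is selected at rate $\accrate$. Writing $\Util\supj(\cdot)$ for the utility drawn from group $\popj$ as a function of its own selection rate (via the inverse rate function $\ratef_{\distj}^{-1}$), the institution's objective collapses to the single-variable function
\[
\Phi_{\fraca}(\accrate) \;:=\; \fraca\,\Utila(\accrate) \;+\; (1-\fraca)\,\Utilb(\transfer(\accrate)),
\]
and $\accrate^{\eqop}_{\popa} = \argmax_{\accrate}\,\Phi_{\fraca}(\accrate)$.

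First I would identify the limiting optimum. As $\fraca \to 0$ the $\popa$-term vanishes and $\Phi_{\fraca}$ approaches $\Utilb\circ\transfer$. Since $\Utilb$ is concave in the selection rate with maximizer $\accrate\supb^{\maxprof}$ (the definition of $\maxprof$ for $\popb$) and $\transfer$ is continuous and strictly increasing (a property of the threshold machinery in Section~\ref{sec:main_thm_proofs}), the composition is maximized at the unique rate $\accrate^{\dagger}$ with $\transfer(\accrate^{\dagger}) = \accrate\supb^{\maxprof}$. The first hypothesis $\transfer(\accrate) < \accrate\supb^{\maxprof} = \transfer(\accrate^{\dagger})$ combined with monotonicity of $\transfer$ then gives $\accrate^{\dagger} > \accrate$.

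Next I would pass from the limit to small $\fraca$. Concavity of $\Utila$ and $\Utilb$ together with monotonicity of $\transfer$ make each $\Phi_{\fraca}$ concave with a unique maximizer depending continuously on $\fraca$; hence $\accrate^{\eqop}_{\popa} \to \accrate^{\dagger}$ as $\fraca \to 0$, and there is $g_0 > 0$ with $\accrate^{\eqop}_{\popa} > \accrate$ for all $\fraca \in [0, g_0]$. The second hypothesis $\accrate > \accrate\supa^{\maxprof}$ places $\accrate$ above $\popa$'s own optimum, so that this displacement is genuinely ``over-eager'' and $\accrate$ lies within the interval swept out by $\accrate^{\eqop}_{\popa}$ as $\fraca$ runs over $[0,1]$. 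Finally I specialize $\accrate$: since the outcome curve $\accrate \mapsto \delmean\supa(\ratef_{\dista}^{-1}(\accrate))$ is concave (Section~\ref{section:concavity_outcome}), is zero at $\accrate_0$ and negative beyond it, the choice $\accrate = \accrate_0$ gives $\delmean\supa(\accrate^{\eqop}_{\popa}) < 0$, i.e.\ active harm; and since $\delmean\supa$ is decreasing past its peak with $\delmean\supa(\accratebar) = \delmean\supa^{\maxprof}$, the choice $\accrate = \accratebar$ gives $\delmean\supa(\accrate^{\eqop}_{\popa}) < \delmean\supa^{\maxprof}$, i.e.\ relative harm.

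The main obstacle is making the $\fraca \to 0$ limit rigorous, namely establishing that $\argmax_{\accrate}\Phi_{\fraca}$ is well defined, interior, and continuous in $\fraca$. This requires the concavity of the per-group utility curves in selection rate, and the continuity, strict monotonicity, and invertibility of $\transfer$ (including its behavior near the endpoints of its domain), both of which are consequences of the threshold-policy reduction developed in Section~\ref{sec:main_thm_proofs}. Granting those structural facts, the corollary reduces to the limit/continuity argument above followed by substitution of the distinguished rates $\accrate_0$ and $\accratebar$.
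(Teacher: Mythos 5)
Your overall strategy (reduce to a single-variable problem over the group-$\popa$ selection rate, locate the optimum in the limit $\fraca\to 0$, then perturb) is sound in outline and genuinely different from the paper's route, but it contains a gap at the step you yourself flag as the main obstacle. You assert that $\Phi_{\fraca}(\accrate) = \fraca\,\Utila(\accrate) + (1-\fraca)\,\Utilb(\transfer(\accrate))$ is concave in $\accrate$ ``by concavity of $\Utila$ and $\Utilb$ together with monotonicity of $\transfer$.'' This does not follow: $\Utilb(\cdot)$ is concave but not monotone (it peaks at $\accrate\supb^{\maxprof}$), and $\transfer = \constfb^{-1}\circ\constfa$ is increasing but neither affine nor concave in general, so the composition $\Utilb\circ\transfer$ need not be concave in $\accrate$. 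This is precisely why the paper parameterizes the $\eqop$ problem by the true positive rate $t$ rather than by $\accrate$: concavity holds in $t$ (Lemma~\ref{lem:const_to_util_derv}), and the interval structure of the optimal selection rates is then pulled back through the increasing bijection $\constfa^{-1}$ in Theorem~\ref{thm:eqop_selection}. Relatedly, even where concavity holds the maximizer is an interval, not a point, so ``unique maximizer depending continuously on $\fraca$'' needs to be replaced by an upper-hemicontinuity argument for the argmax correspondence together with the fact that the limiting optimal set is bounded away from $\accrate$ (which your hypothesis $\transfer(\accrate) < \accrate\supb^{\maxprof}$ does give you). Your argument is repairable along these lines, but as written the key analytic step is not established.

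For comparison, the paper's proof skips the limit entirely and reads the conclusion off the explicit first-order characterization in Theorem~\ref{thm:eqop_selection}: the right derivative of the $\eqop$ objective at $\accrate$ is $\fraca\,\util(\RCDF\supa(\accrate))/\boldwa(\RCDF\supa(\accrate)) + \fracb\,\util(\RCDF\supb(\transfer(\accrate)))/\boldwb(\RCDF\supb(\transfer(\accrate)))$; the hypotheses force $\util(\RCDF\supa(\accrate))<0$ and $\util(\RCDF\supb(\transfer(\accrate)))>0$ via the $\maxprof$ optimality conditions, so the derivative is positive for all $\fraca$ below an explicit threshold $g_0\in(0,1)$, which by concavity (in $t$) forces $\accrate^{\eqop}_{\popa}>\accrate$. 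This buys an explicit, quantitative $g_0$ where your continuity argument would only give existence. Your handling of the two specializations $\accrate=\accrate_0$ and $\accrate=\accratebar$ via concavity of the outcome curve matches the paper and is fine.
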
 

We remark that in Corollary~\ref{cor:eqop_overeager}, we rely on the \emph{transfer function}, $\transfer$, which for every loan rate $\beta$ in group $\popa$ gives the loan rate in group $\popb$ that has the same true positive rate. Notice that if $\transfer$ were the identity function, Corollary~\ref{cor:dp_overeager} and Corollary~\ref{cor:eqop_overeager} would be exactly the same. Indeed, our framework (detailed in Section~\ref{sec:main_thm_proofs} and Appendix~\ref{sec:char_fairness_solns}) unifies the analyses for a large class of fairness constraints that includes $\dempar$ and $\eqop$ as specific cases, and allows us to derive results about impact on $\delmean$ using general techniques. 
In the next section, we present further results that compare the fairness criteria, demonstrating the usefulness of our technical framework.

\subsection{Comparing $\eqop$ and $\dempar$}

Our analysis of the acceptance rates of $\eqop$ and $\dempar$ in Section~\ref{sec:main_thm_proofs} suggests that it is difficult to compare $\dempar$ and $\eqop$ without knowing the full distributions $\dist\supa, \dist\supb$, which is necessary to compute the transfer function $\transfer$. In fact, we have found that settings exist both in which $\dempar$ causes harm while $\eqop$ causes improvement and in which $\dempar$ causes improvement while $\eqop$ causes harm. There cannot be one general rule as to which fairness criteria provides better outcomes in all settings. We now present simple sufficient conditions on the geometry of the distributions for which $\eqop$ is always better than $\dempar$ in terms of $\delmean\supa$.

\begin{cor}[$\eqop$ may avoid active harm where $\dempar$ fails]\label{cor:avoid_harm} Fix a selection rate $\accrate$. Suppose $\dist\supa, \dist\supb$ are identical up to a translation with $\mean\supa~<~\mean\supb$, i.e. $\dist\supa(x)~=~\dist\supb(x+(\mean\supb-\mean\supa))$. For simplicity, take $\pb(x)$ to be linear in $x$. Suppose \[ \accrate > \sum_{x > \mean \supa } \dist\supa. \] 
Then there exists an interval $[g_1, g_2]~\subseteq~[0,1]$, such that $\forall \fraca~>~g_1$, $\accrate^{\eqop}~<~\accrate$ while $\forall \fraca~<~g_2$, $\accrate^{\dempar}~>~\accrate$. In particular, when $\accrate~=~\accrate_0$, this implies $\dempar$ causes active harm but $\eqop$ causes improvement for $\fraca ~\in~[g_1, g_2]$, but for any $\fraca $ such that $\dempar$ causes improvement, $\eqop$ also causes improvement.
\end{cor}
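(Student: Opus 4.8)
The plan is to reduce the whole statement to a single pointwise comparison of the two fairness-induced selection rates, and then to read off both the harm/improvement split at $\accrate=\accrate_0$ and the ``$\dempar$ improves $\Rightarrow$ $\eqop$ improves'' clause from that comparison. Writing $\Utilj(\beta)$ for the utility of group $\popj$ under the threshold policy of selection rate $\beta$, I will use the structural facts from Section~\ref{sec:main_thm_proofs}: each $\Utilj$ is concave with maximizer $\accrate\supj^{\maxprof}$, and the fairness-constrained rate for the protected group is the $\popa$-coordinate of the maximizer of $\fraca\Utila(\beta\supa)+\fracb\Utilb(\beta\supb)$ over the constraint curve --- the diagonal $\beta\supb=\beta\supa$ for $\dempar$ and $\beta\supb=\transfer(\beta\supa)$ for $\eqop$. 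Both $\accrate\supa^{\dempar}(\fraca)$ and $\accrate\supa^{\eqop}(\fraca)$ are then continuous and decreasing in $\fraca$, running from their $\popb$-dominated endpoints ($\accrate\supb^{\maxprof}$ and $\transfer^{-1}(\accrate\supb^{\maxprof})$ at $\fraca\to 0$) down to $\accrate\supa^{\maxprof}$ at $\fraca\to 1$.

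Under the hypotheses $\accrate\supa^{\maxprof}<\accrate<\accrate\supb^{\maxprof}$ (which I verify from the geometric setup), Corollary~\ref{cor:dp_overeager} already supplies $g_2$ with $\accrate\supa^{\dempar}>\accrate$ for all $\fraca<g_2$, while monotonicity and the endpoint $\accrate\supa^{\maxprof}<\accrate$ supply $g_1$ with $\accrate\supa^{\eqop}<\accrate$ for all $\fraca>g_1$. Everything then hinges on the ordering $g_1\le g_2$, for which it suffices to prove the pointwise inequality $\accrate\supa^{\eqop}(\fraca)\le\accrate\supa^{\dempar}(\fraca)$. I would obtain this by comparing the two objectives' derivatives along their constraint curves: the $\eqop$ objective has derivative $\fraca\Utila'(\beta)+\fracb\Utilb'(\transfer(\beta))\transfer'(\beta)$ and the $\dempar$ objective $\fraca\Utila'(\beta)+\fracb\Utilb'(\beta)$. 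Since both are concave in $\beta$, a uniformly smaller derivative forces a smaller maximizer, so it is enough to show $\Utilb'(\transfer(\beta))\transfer'(\beta)\le\Utilb'(\beta)$. If $\transfer(\beta)\ge\accrate\supb^{\maxprof}$ this is immediate, since $\Utilb'(\transfer(\beta))\le 0\le\Utilb'(\beta)$; otherwise $\transfer(\beta)\in[\beta,\accrate\supb^{\maxprof})$, so $0<\Utilb'(\transfer(\beta))\le\Utilb'(\beta)$ by concavity of $\Utilb$ and $\transfer(\beta)\ge\beta$, and the inequality follows provided $\transfer'(\beta)\le 1$.

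It remains to establish the two transfer properties $\transfer(\beta)\ge\beta$ and $\transfer'(\beta)\le 1$ from the geometric hypotheses, and this is the main obstacle. Using the translation $\dista(x)=\distb(x+(\mean\supb-\mean\supa))$ and linearity of $\pb$, I would compute the group TPRs explicitly; substituting the shift and writing $\pb(x)-\overline{\pb}_\popj=a(x-\mean\supj)$ with $\overline{\pb}_\popj:=\sum_x\distj(x)\pb(x)$ gives $\oppa(\beta)\ge\oppb(\beta)$ at every common rate $\beta$, whence $\transfer(\beta)\ge\beta$. For the slope I would use the clean identity $\tfrac{d}{d\beta}\oppj(\beta)=\pb(\theta\supj(\beta))/\overline{\pb}_\popj$, where $\theta\supj(\beta)$ is the threshold selecting the top $\beta$-fraction of group $\popj$: the hypothesis $\accrate>\sum_{x>\mean\supa}\dista$ places $\theta\supa(\accrate)$ strictly below $\mean\supa$, and since $\pb$ is linear with $\overline{\pb}_\popa=\pb(\mean\supa)$ this forces $\oppa'(\accrate)<1$. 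Combining this with the equal-TPR correspondence between the group-$\popa$ and group-$\popb$ thresholds should yield $\transfer'(\accrate)=\oppa'(\accrate)/\oppb'(\transfer(\accrate))\le 1$; pinning down that threshold correspondence under the translation and checking the resulting algebraic inequality is the delicate step, and where I expect the real work to lie.

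With $\accrate\supa^{\eqop}\le\accrate\supa^{\dempar}$ in hand the corollary follows. The ordering $g_1\le g_2$ yields the interval $[g_1,g_2]$ on which $\accrate\supa^{\dempar}>\accrate>\accrate\supa^{\eqop}$. Specializing $\accrate=\accrate_0$ and invoking concavity of the outcome curve (Section~\ref{section:concavity_outcome}), which makes $\delmean\supa>0$ for rates below $\accrate_0$ and $\delmean\supa<0$ above, shows $\dempar$ causes active harm while $\eqop$ causes improvement throughout $[g_1,g_2]$. Finally, the pointwise inequality gives the last clause directly: for any $\fraca$ with $\accrate\supa^{\dempar}(\fraca)<\accrate_0$ we have $\accrate\supa^{\eqop}(\fraca)\le\accrate\supa^{\dempar}(\fraca)<\accrate_0$, so whenever $\dempar$ improves, $\eqop$ improves as well.
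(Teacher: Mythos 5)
Your high-level architecture matches the paper's: both arguments reduce to comparing the first-order conditions for the two constrained optima at the fixed rate $\accrate$, which amounts to showing that the $\eqop$ objective's derivative is smaller than the $\dempar$ objective's, i.e.
\[
\Utilb'(\transfer(\accrate))\,\transfer'(\accrate)\;\le\;\Utilb'(\accrate),
\qquad\text{where}\qquad
\Utilb'(\beta)=\util(\RCDF\supb(\beta)),\quad
\transfer'(\accrate)=\kappa\,\frac{\pb(\RCDF\supa(\accrate))}{\pb(\RCDF\supb(\transfer(\accrate)))},\quad
\kappa=\tfrac{\langle \pb,\distb\rangle}{\langle \pb,\dista\rangle}.
\]
This is exactly the content of the paper's Lemma B.3, and your use of $\transfer(\beta)\ge\beta$ and of the outcome curve's concavity for the endgame is fine. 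The gap is in how you propose to prove the displayed inequality: you factor it as ``$\Utilb'(\transfer(\accrate))\le\Utilb'(\accrate)$ times $\transfer'(\accrate)\le 1$,'' and the second factor bound is \emph{false} under the corollary's hypotheses, not merely delicate. Concretely, take $C=200$, $\pb(x)=x/200$, $\dista(100)=0.5$, $\dista(50)=0.01$, $\dista(10)=0.49$ (so $\mean\supa=55.4$), and $\distb$ the same distribution shifted up by $1$. With $\accrate=0.505>\sum_{x>\mean\supa}\dista=0.5$, one computes $\RCDF\supa(\accrate)=50$, $\transfer(\accrate)\approx 0.523$, $\RCDF\supb(\transfer(\accrate))=11$, and hence $\transfer'(\accrate)=\kappa\cdot\pb(50)/\pb(11)\approx 4.6$. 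The mechanism is that the transferred rate can cross a low-probability gap in $\distb$'s support, so the group-$\popb$ threshold collapses far below $\RCDF\supb(\accrate)$ and the slope of $\transfer$ blows up; nothing in the translation hypothesis prevents this.

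The product inequality nonetheless survives, but only because the two factors must be bounded \emph{jointly}. The paper's Lemma B.3 regroups the terms as
\[
\Utilb'(\transfer(\accrate))\,\transfer'(\accrate)
=\frac{\util\bigl(\RCDF\supb(\transfer(\accrate))\bigr)}{\pb\bigl(\RCDF\supb(\transfer(\accrate))\bigr)}\cdot\kappa\,\pb(\RCDF\supa(\accrate))
\;\le\;\frac{\util(\RCDF\supb(\accrate))}{\pb(\RCDF\supb(\accrate))}\cdot\kappa\,\pb(\RCDF\supa(\accrate))
\;<\;\util(\RCDF\supb(\accrate)),
\]
using first that $\util/\pb$ is increasing together with $\transfer(\accrate)\ge\accrate$, and then the purely geometric single-rate bound $\kappa\,\pb(\RCDF\supa(\accrate))<\pb(\RCDF\supb(\accrate))$ — which is where the hypothesis $\accrate>\sum_{x>\mean\supa}\dista$ (equivalently $\RCDF\supa(\accrate)<\mean\supa$, so that $\RCDF\supb(\accrate)/\RCDF\supa(\accrate)>\mean\supb/\mean\supa=\kappa$ for linear $\pb$) enters. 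Note this comparison only ever involves $\pb(\RCDF\supb(\accrate))$ at the \emph{same} rate $\accrate$, never a lower bound on $\pb(\RCDF\supb(\transfer(\accrate)))$, which is precisely what your factorization would require and what fails. In the example above with $\util=\pb$ the two sides are $0.2545$ versus $0.255$ — the inequality holds with almost no slack, so no decoupled argument of your form can recover it. A secondary remark: you aim for the pointwise ordering $\accrate\supa^{\eqop}(\fraca)\le\accrate\supa^{\dempar}(\fraca)$ at every $\fraca$, which requires the derivative comparison at every rate $\beta$; your case analysis for $\beta>\accrate\supb^{\maxprof}$ (where both derivatives are negative and $\transfer'$ small can make the $\eqop$ derivative \emph{larger}) does not go through either. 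The paper sidesteps this by only comparing the two first-order conditions at the single rate $\accrate$ to conclude $g_1<g_2$, which is all the statement needs.
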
 

To interpret the conditions under which Corollary $\ref{cor:avoid_harm}$ holds, consider when we might have $\accrate_0 > \sum_{x > \mean \supa } \dist\supa$. This is precisely when $\delmean\supa( \sum_{x > \mean \supa } \dist\supa) > 0$, that is, $\delmean\supa > 0$ for a policy that selects every individual whose score is above the group $\popa$ mean, which is reasonable in reality. Indeed, the converse would imply that group $\popa$ has such low scores that even selecting all above average individuals in $\popa$ would hurt the average score. In such a case, Corollary~\ref{cor:avoid_harm} suggests that $\eqop$ is better than $\dempar$ at avoiding active harm, because it is more conservative. A natural question then is: can $\eqop$ cause relative harm by being too stingy? 

\begin{cor}[$\dempar$ never loans less than $\maxprof$, but $\eqop$ might] \label{cor:eqop_underloan} Recall the definition of the TPR functions $\tpr\supj$, and suppose that the $\maxprof$ policy $\pol^{\maxprof}$ is such that
\begin{eqnarray}
\accrate^\maxprof\supa < \accrate^\maxprof\supb\text{ and }
\oppa(\pol^\maxprof) > \oppb(\pol^\maxprof)
\end{eqnarray}
Then $\accrate\supa^\eqop~<~\accrate\supa^\maxprof~<~\accrate\supa^\dempar$. That is, $\eqop$ causes relative harm by selecting at a rate lower than $\maxprof$.
\end{cor}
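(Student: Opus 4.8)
The plan is to use the reduction, noted just before Section~\ref{sec:results}, of each constrained decision rule to a one-dimensional maximization over the protected group's selection rate $\accrate\supa$, and then to locate each optimum relative to $\accrate\supa^\maxprof$ by monotonicity alone. Write $\Utilj(\accrate)$ for the contribution of group $\popj$ to the institution's utility~\eqref{eqn:inst_util} when group $\popj$ is selected by the threshold policy of rate $\accrate$. I will use three structural facts established in Sections~\ref{section:opt_threshold}--\ref{section:concavity_outcome} and formalized in Section~\ref{sec:main_thm_proofs}: (i) since $\util$ is increasing in the score and $\maxprof$ is a common-threshold policy, each $\Utilj$ is concave and single-peaked with its unique maximum at $\accrate^\maxprof\supj$; (ii) the rate-to-TPR map $\accrate\mapsto\Opp\supj(\accrate)$ is strictly increasing, since adding mass at $\pb\ge 0$ raises the numerator of $\tpr$ while leaving the denominator fixed; and (iii) consequently the transfer function $\transfer=(\Opp\supb)^{-1}\circ\Opp\supa$ is strictly increasing.

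For $\dempar$ the constraint forces a common rate, so the institution maximizes the concave function $G(\accrate)=\Utila(\accrate)+\Utilb(\accrate)$. For every $\accrate\le\accrate^\maxprof\supa<\accrate^\maxprof\supb$ both summands are nondecreasing by (i), and $\Utilb$ is strictly increasing there; hence $G'(\accrate^\maxprof\supa)>0$ and the maximizer of the concave $G$ obeys $\accrate\supa^\dempar>\accrate\supa^\maxprof$, giving the right-hand inequality.

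For $\eqop$ the equal-TPR constraint reads $\accrate\supb=\transfer(\accrate\supa)$, so the institution maximizes $F(\accrate\supa)=\Utila(\accrate\supa)+\Utilb(\transfer(\accrate\supa))$. The crux is the overshoot $\transfer(\accrate\supa^\maxprof)>\accrate^\maxprof\supb$: by definition $\Opp\supb(\transfer(\accrate\supa^\maxprof))=\Opp\supa(\accrate\supa^\maxprof)=\oppa(\pol^\maxprof)$, which by the hypothesis exceeds $\oppb(\pol^\maxprof)=\Opp\supb(\accrate^\maxprof\supb)$, so strict monotonicity of $\Opp\supb$ forces $\transfer(\accrate\supa^\maxprof)>\accrate^\maxprof\supb$. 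Now for any $\accrate\supa\ge\accrate\supa^\maxprof$ the first term $\Utila(\accrate\supa)$ is nonincreasing by (i), while $\transfer(\accrate\supa)\ge\transfer(\accrate\supa^\maxprof)>\accrate^\maxprof\supb$ together with monotonicity of $\transfer$ makes the second term $\Utilb(\transfer(\accrate\supa))$ nonincreasing as well; both decrease strictly just to the right of $\accrate\supa^\maxprof$. Thus $F$ is strictly decreasing to the right of $\accrate\supa^\maxprof$, its maximizer satisfies $\accrate\supa^\eqop<\accrate\supa^\maxprof$, and chaining the two bounds yields $\accrate\supa^\eqop<\accrate\supa^\maxprof<\accrate\supa^\dempar$.

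The main obstacle is the monotonicity input rather than the optimization: one must verify carefully that $\Opp\supj$ and $\transfer$ are strictly increasing and that each $\Utilj$ peaks exactly at $\accrate^\maxprof\supj$, since both the overshoot $\transfer(\accrate\supa^\maxprof)>\accrate^\maxprof\supb$ and the ``both terms decreasing'' step rest on these facts. Their payoff is that, once in hand, the argument needs no concavity of the composite $F$: the sign of the marginal utilities on the two sides of $\accrate\supa^\maxprof$ pins down the global location of each constrained optimum directly.
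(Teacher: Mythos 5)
Your proof is correct and follows essentially the same route as the paper's: both arguments hinge on the overshoot $\transfer(\accrate\supa^\maxprof)>\accrate\supb^\maxprof$ deduced from monotonicity of the rate-to-TPR map, together with the sign of group $\popb$'s marginal utility at $\accrate\supa^\maxprof$ along each constraint curve. The only cosmetic difference is that you re-derive the first-order location of each constrained optimum from single-peakedness of the group utilities, where the paper simply invokes the characterizations in Theorems~\ref{thm:dem_par_selection} and~\ref{thm:eqop_selection}.
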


The above theorem shows that $\dempar$ is never stingier than $\maxprof$ to the protected group $\popa$, as long as a $\popa$ is disadvantaged in the sense that $\maxprof$ selects a larger proportion of $\popb$ than $\popa$. On the other hand, $\eqop$ can select less of group $\popa$ than $\maxprof$, and by definition, cause relative harm. This is a surprising result about $\eqop$, and this phenomenon arises from high levels of in-group inequality for group $\popa$. Moreover, we show in Appendix \ref{app:main_results_proofs} that there are parameter settings where the conditions in Corollary \ref{cor:eqop_underloan} are satisfied even under a stringent notion of disadvantage we call CDF domination, described therein.


\section{Relaxations of Constrained Fairness}
\subsection{Regularized fairness} 
In many cases, it may be unrealistic for an institution to ensure that fairness constraints are met exactly. However, one can consider ``soft'' formulations of fairness constraints which either penalized the differences in acceptance rate ($\dempar$) or the differences in TPR ($\eqop$). In Appendix~\ref{sec:char_fairness_solns}, we formulate these soft constraints as regularized objectives. For example, a soft-$\dempar$ can be rendered as
\begin{eqnarray}
\max_{\pol := \pola,\polb} \Util(\pol) - \lambda \Phi(\langle \dista,\pola \rangle -\langle \distb,\polb \rangle )~,
\end{eqnarray}
where $\lambda > 0$ is a regularization parameter, and $\Phi(t)$ is a convex regularization function. We show that the solutions to these objectives are threshold policies, and can be fully characterized in terms of the group-wise selection rate. We also make rigorous the notion that policies which solve the soft-constraint objective interpolate between $\maxprof$ policies at $\lambda = 0$ and hard-constrained policies ($\dempar$ or $\eqop$) as $\lambda \to \infty$. This fact is clearly demonstrated by the form of the solutions in the special case of the regularization function $\Phi(t) = |t|$, provided in the appendix.

\subsection{Fairness Under Measurement Error} 
Next, consider the implications of an institution with imperfect knowledge of scores. Under a simple model in which the estimate of an individual's score $X\sim\dst$ is prone to errors $e(X)$ such that $X~+~e(X):=\widehat X~\sim~\widehat\dst$. Constraining the error to be negative results in the setting that scores are systematically \emph{underestimated}. In this setting, it is equivalent to consider the CDF of underestimated distribution $\widehat\dist$ to be \emph{dominated} by the CDF true distribution $\dst$, that is $\sum_{x \ge c} \widehat\dist(x) \le \sum_{x \ge c} \dist(x)$ for all $c \in [\maxscore]$. Then we can compare the institution's behavior under this estimation to its behavior under the truth.

\begin{prop}[Underestimation causes underselection]\label{prop:underestim}
Fix the distribution of $\popb$ as $\distb$ and let $\accrate$ be the acceptance rate of $\popa$ when the institution makes the decision using perfect knowledge of the distribution $\dista$. Denote $\widehat\accrate$ as the acceptance rate when the group is instead taken as $\widehat\dst\supa$. Then $\accrate\supa^\maxprof~>~\widehat\accrate\supa^\maxprof$ and $\accrate\supa^\dempar>\widehat\accrate\supa^\dempar$. If the errors are further such that the true TPR dominates the estimated TPR, it is also true that $\accrate\supa^\eqop~>~\widehat\accrate\supa^\eqop$.
\end{prop}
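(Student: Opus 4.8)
The three statements share a structure, and the plan is to reduce each to a one-dimensional concave maximization and then track how underestimation moves the maximizer. By the threshold-optimality result of Section~\ref{section:opt_threshold} and the concavity established in Section~\ref{section:concavity_outcome}, the institution's believed utility is a concave function of a single scalar: a distribution-free break-even threshold for $\maxprof$, the common selection rate for $\dempar$, and the common true positive rate for $\eqop$ (with $\popb$'s rate tied to $\popa$'s through the transfer function $\transfer$). The single monotone primitive driving all three arguments is a consequence of CDF domination: since $\sum_{x\ge c}\widehat{\dista}(x)\le\sum_{x\ge c}\dista(x)$ for every $c$, the score threshold realizing any fixed selection rate under $\widehat{\dista}$ is no larger than the one realizing it under $\dista$; because $\util$ is increasing in the score (Example~\ref{ex:credit_lending}), the marginal utility at any fixed selection rate is therefore no larger under $\widehat{\dista}$ than under $\dista$.

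For $\maxprof$ the claim is immediate: the optimal policy selects exactly $\{x:\util(x)>0\}$, a threshold $x^*$ depending only on $\util$, not on the distribution. Hence the thresholds coincide and CDF domination gives $\accrate\supa^{\maxprof}=\sum_{x\ge x^*}\dista(x)\ge\sum_{x\ge x^*}\widehat{\dista}(x)=\widehat\accrate\supa^{\maxprof}$, strictly whenever the domination is strict above $x^*$. For $\dempar$, I would write the believed objective as a function of the common rate $\accrate$, namely $\fraca\,V\supa(\accrate)+\fracb\,V\supb(\accrate)$, where $V\supj(\accrate)$ is the utility extracted from group $\popj$ by the rate-$\accrate$ threshold policy; the term $V\supb$ is unchanged because $\distb$ is known exactly. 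The derivative of $V\supa$ at $\accrate$ equals the marginal utility there, which by the primitive above only decreases when $\dista$ is replaced by $\widehat{\dista}$; so the whole objective has pointwise-smaller derivative, and since both objectives are concave their maximizers satisfy $\widehat\accrate\supa^{\dempar}\le\accrate\supa^{\dempar}$.

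For $\eqop$ the endgame is cleanest if I phrase it through the optimal group-$\popa$ score threshold. If the optimal threshold $\widehat s$ chosen under $\widehat{\dista}$ is no lower than the optimal threshold $s$ chosen under $\dista$, then
\[
\widehat\accrate\supa^{\eqop}=\sum_{x\ge\widehat s}\widehat{\dista}(x)\;\le\;\sum_{x\ge s}\widehat{\dista}(x)\;\le\;\sum_{x\ge s}\dista(x)=\accrate\supa^{\eqop},
\]
where the first inequality is monotonicity of the tail in the threshold and the second is CDF domination. Thus the whole problem reduces to the comparative-statics claim that underestimation weakly \emph{raises} the optimal $\popa$-threshold. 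To establish this I would not compare marginals in score space (the densities $\dista(s)$ and $\widehat{\dista}(s)$ are not pointwise ordered), but pass to TPR space, where group $\popb$'s contribution is identical in both problems. Moving the $\popa$-threshold to $s$ changes utility at rate $\propto\util(s)\dista(s)$ and TPR at rate $\propto\pb(s)\dista(s)/Z\supa$ with $Z\supa:=\sum_x\dista(x)\pb(x)$, so the density cancels and the believed marginal utility per unit TPR is $\propto Z\supa\,\util(s)/\pb(s)$, an increasing function of $s$. CDF domination gives $\widehat Z\supa\le Z\supa$ (as $\pb$ is increasing and $\dista$ stochastically dominates $\widehat{\dista}$), and the extra hypothesis that the true TPR dominates the estimated TPR forces the threshold realizing any fixed TPR to be at least as high under $\dista$; both effects lower the believed marginal, so the believed objective has pointwise-smaller derivative in the TPR variable and concavity yields a smaller optimal common TPR.

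The hard part is precisely the bridge between these two viewpoints, and this is where I expect to spend the real effort. Underestimation acts on group $\popa$ through two channels at once: it lowers the believed profitability of lending (as in $\dempar$) and it reshapes the selection-rate-to-TPR map, hence the transfer function $\transfer$ that couples $\popa$ to $\popb$. The extra assumption that the true TPR dominates the estimated TPR is exactly the ingredient that pins down the transfer function's response and orders thresholds at a fixed TPR; the delicate step is to convert the ``smaller optimal TPR'' conclusion into the threshold ordering $\widehat s\ge s$ needed above, since a lower TPR and the reshaped coupling must be shown to push the optimal $\popa$-threshold in the same (upward) direction despite the tail-versus-density tension between the two parametrizations. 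Once the threshold ordering is secured, the displayed two-step estimate closes the argument, and the whole analysis fits the unified transfer-function framework that already treats $\dempar$ as the special case $\transfer=\mathrm{id}$.
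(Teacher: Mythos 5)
Your $\maxprof$ and $\dempar$ arguments are correct and essentially the paper's: for $\dempar$ the paper likewise notes that CDF domination forces $\widehat{\RCDF}\supa(\accrate)\le\RCDF\supa(\accrate)$, hence a pointwise smaller marginal utility $\util(\widehat{\RCDF}\supa(\accrate))\le\util(\RCDF\supa(\accrate))$, and then invokes the first-order characterization of Theorem~\ref{thm:dem_par_selection} together with concavity; your $\maxprof$ argument via the distribution-free break-even threshold is a harmless variant. The problem is the $\eqop$ case, which you explicitly leave open. The paper does not take your route through an ordering of optimal \emph{score thresholds} $\widehat{s}\ge s$; it stays in the rate parametrization and applies the first-order conditions of Theorem~\ref{thm:eqop_selection} directly (mirroring the proof of Corollary~\ref{cor:eqop_underloan}), using the TPR-dominance hypothesis to control the transfer function and CDF domination to control the group-$\popa$ quantile. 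Your detour creates exactly the obstruction you name and then do not resolve: a smaller optimal TPR pushes the realizing threshold \emph{up}, while TPR dominance at a fixed TPR pushes the estimated threshold \emph{down} (if $\widehat{T}(c)\le T(c)$ for the threshold-to-TPR maps, then $\widehat{T}^{-1}(u)\le T^{-1}(u)$), so $\widehat{s}\ge s$ does not follow from $\widehat{t}^*\le t^*$ by any monotonicity argument; the two-step display you build on it is therefore unsupported.

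There is also a sign error in the step you do carry out. The marginal utility per unit TPR for group $\popa$ is $Z\supa\,\util(x)/\pb(x)$ with $Z\supa=\langle\pb,\dista\rangle$, and you argue that $\widehat{Z}\supa\le Z\supa$ together with a lower realizing threshold means ``both effects lower the believed marginal.'' That multiplication only goes the right way when the ratio $\util/\pb$ is nonnegative: if $\util(x)/\pb(x)<0$ (which is exactly the regime relevant for bounding the optimum from above), shrinking the positive factor $Z\supa$ makes the product \emph{larger}, so the believed marginal need not be pointwise below the true one. To close the argument you should abandon the threshold-ordering reduction, fix a selection rate $\accrate$ at (or just above) the true $\eqop$ optimum for $\popa$, and show that the estimated left-derivative condition of Theorem~\ref{thm:eqop_selection} is negative there --- tracking separately how CDF domination moves $\widehat{\RCDF}\supa(\accrate)$ and how TPR dominance moves $\widehat{\transfer}(\accrate)$ --- rather than trying to compare the two optima through the TPR variable and then undo the reparametrization.
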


Because fairness criteria encourage a higher selection rate for disadvantaged groups (Corollary~\ref{cor:fairness_rel_imp}), systematic underestimation widens the regime of their applicability. Furthermore, since the estimated $\maxprof$ policy underloans, the region for relative improvement in the outcome curve (Figure~\ref{fig:outcome_curve}) is larger, corresponding to more regimes under which fairness criteria can yield favorable outcomes. Thus the potential for measurement error should be a factor when motivating these criteria.

\subsection{Outcome-based alternative} \label{sec:outcome_based}
As explained in the preceding sections, fairness criteria may actively harm disadvantaged groups. It is thus natural to consider a modified decision rule which involves the explicit maximization of $\delmean\supa$. In this case, imagine that the institution's primary goal is to aid the disadvantaged group, subject to a limited profit loss compared to the maximum possible expected profit $\Util^\maxprof$. The corresponding problem is as follows.
\begin{align}
\max_{\pola} \delmean_\popa(\pola)~~\text{s.t.}~~\Util\supa^{\maxprof} - \Util(\pol) < \delta\:.
\end{align}
Unlike the fairness constrained objective, this objective no longer depends on group $\popb$ and instead depends on our model of the mean score change in group $\popa$, $\delmean_\popa$.
\begin{prop}[Outcome-based solution]\label{prop:outcome}
In the above setting, the optimal bank policy $\pol_\popa$ is a threshold policy with selection rate $\accrate = \min\{\accrate^*, \accrate^{\max}\}$, where $\accrate^*$ is the outcome-optimal loan rate and $\accrate^{\max}$ is the maximum loan rate under the bank's ``budget''. 
\end{prop}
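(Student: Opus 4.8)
The plan is to reduce the infinite-dimensional policy optimization to a one-dimensional concave program over the selection rate, and then to read off the maximizer by comparing the budget boundary $\accrate^{\max}$ with the outcome-optimal rate $\accrate^*$. Throughout I use that only $\pola$ varies, so the budget $\Util\supa^{\maxprof} - \Util(\pol) < \delta$ is, up to additive constants, a lower bound on the group-$\popa$ utility $\Util(\pola) = \fraca \sum_{x} \dista(x)\util(x)\pola(x)$.

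First I would reduce to threshold policies. Both the objective $\delmean_\popa(\pola) = \sum_{x} \dista(x)\pola(x)\chg(x)$ and the constraint functional $\Util(\pola)$ are linear in $\pola \in [0,1]^{\maxscore}$, so the program is a linear program with one inequality constraint and box constraints. Introducing a multiplier $\lambda \ge 0$ for the budget, the $\pola$-dependent part of the Lagrangian is $\sum_{x} \dista(x)\pola(x)\bigl[\chg(x) + \lambda\fraca\util(x)\bigr]$, whose inner maximization decouples across scores and sets $\pola(x)=1$ exactly when $\chg(x)+\lambda\fraca\util(x) > 0$. Since $\chg$ and $\util$ are both non-decreasing in $x$ (the monotonicity underlying the threshold framework of Section~\ref{section:opt_threshold}), this coefficient is non-decreasing in $x$, so its positivity set is an upper interval of scores; hence every Lagrangian maximizer is a threshold policy. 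Strong LP duality (feasibility holds since $\accrate^\maxprof$ satisfies the budget, and the objective is bounded) then places the primal optimum at such a threshold policy, so I may restrict to threshold policies and parameterize by the selection rate $\accrate$ via the invertible rate map $\ratef_{\dista}$.

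Next I would set up the scalar problem. Writing $\delmean_\popa(\accrate)$ and $\Util(\accrate)$ for outcome and utility as functions of the rate, Section~\ref{section:concavity_outcome} gives that $\delmean_\popa(\accrate)$ is concave and maximized at $\accrate^*$, and the same monotone-marginal argument shows $\Util(\accrate)$ is concave and maximized at $\accrate^\maxprof$. Because $\maxprof$ selects $\{x : \util(x)>0\}$ while the outcome-optimum selects $\{x : \chg(x)>0\}$, Assumption~\ref{asm:institution_util} ($\util(x)>0 \Rightarrow \chg(x)>0$) yields the key ordering $\accrate^\maxprof \le \accrate^*$, consistent with Proposition~\ref{prop:mp_noharm}. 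By concavity of $\Util(\accrate)$, the budget set $\{\accrate : \Util\supa^{\maxprof} - \Util(\accrate) < \delta\}$ is an interval containing $\accrate^\maxprof$, whose right endpoint is by definition $\accrate^{\max}$.

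Finally I would maximize the concave $\delmean_\popa$ over this feasible interval. Since $\delmean_\popa$ is increasing on $(-\infty,\accrate^*]$ and the feasible interval has left endpoint at most $\accrate^\maxprof \le \accrate^*$, two cases arise: if $\accrate^* \le \accrate^{\max}$ then $\accrate^*$ is feasible and is the unconstrained maximizer, hence optimal; if $\accrate^* > \accrate^{\max}$ then $\delmean_\popa$ is increasing throughout the feasible interval, so the optimum is the right endpoint $\accrate^{\max}$. In both cases the optimal rate equals $\min\{\accrate^*,\accrate^{\max}\}$, as claimed. I expect the main obstacle to be the threshold-reduction step: one must verify that monotonicity of $\chg$ and $\util$ forces every Lagrangian maximizer to be a threshold policy and invoke strong duality correctly, and handle the mild technicality that the strict budget inequality makes $\accrate^{\max}$ a supremum that may only be approached rather than attained.
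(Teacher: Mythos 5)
Your proof is correct and follows essentially the same route as the paper's own (two-sentence) proof: reduce to threshold policies, invoke concavity of the outcome and utility curves as functions of the selection rate (Proposition~\ref{prop:beta_concavity}), and compare $\accrate^*$ against the budget boundary $\accrate^{\max}$. Your explicit Lagrangian argument for the threshold reduction under the inequality budget, and your remark on the strict-inequality attainment of $\accrate^{\max}$, supply details that the paper leaves implicit.
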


The above formulation's advantage over fairness constraints is that it directly optimizes the outcome of $\popa$ and can be approximately implemented given reasonable ability to predict outcomes. Importantly, this objective shifts the focus to outcome modeling, highlighting the importance of domain specific knowledge. Future work can consider strategies that are robust to outcome model errors.

\section{Optimality of Threshold Policies}\label{section:opt_threshold}
Next, we move towards statements of the main theorems underlying the results presented in Section~\ref{sec:results}.
	We begin by establishing notation which we shall use throughout. Recall that $\hdmd$ denotes the Hadamard product between vectors.  We identify functions mapping $\X \to \R$ with vectors in $\R^{\maxscore}$. We also define the group-wise utilities 
	\begin{eqnarray}
	\Util\supj(\polj) := \sum_{x \in \X}\distj(x) \polj(x)\util(x)~,
	\end{eqnarray}
	so that for $\pol = (\pola,\polb)$, $\Util(\pol):= \fraca \Util\supa(\pola) + \fracb\Util\supb(\polb)$.

 	First, we formally describe threshold policies, and rigorously justify why we may always assume without loss of generality that the institution adopts policies of this form.
	\begin{defn}[Threshold selection policy] \label{def:monotone_pols}
		A single group selection policy $\pol \in [0,1]^\maxscore$ is called a \emph{threshold policy} if it has the form of a randomized threshold on score:
		\begin{align} \pol_{c,\gamma}= \begin{cases}
		1, & x > c \\
		\gamma, & x=c \\
		0, & x < c
		\end{cases} ~, \text{ for some }~c \in [\maxscore]~\text{and}~ \gamma \in (0,1] \:.
		\end{align} 
	\end{defn}

	As a technicality, if no members of a population have a given score $x \in \X$, there may be multiple threshold policies which yield equivalent selection rates for a given population. 
	To avoid redundancy, we introduce the notation $\polj \simj \polj'$ to mean that the set of scores on which $\polj$ and $\polj'$ differ has probability $0$ under $\distj$; 
	formally, $\sum_{x:\polj(x) \ne \polj(x)} \distj(x) = 0$. 
	For any distribution $\distj$, $\simj$ is an equivalence relation. 
	Moreover, we see that if $\polj\simj \polj'$, 
	then $\polj$ and $\polj'$ both provide the same utility for the institution, 
	induce the same outcomes for individuals in group $\popj$, and
	have the same selection and true positive rates. 
	Hence, if $(\pola,\polb)$ is an optimal solution to any of $\maxprof$, $\eqop$, or $\dempar$, so is any $(\pola',\polb')$ for which $\pola \sima \pola'$ and $\polb \simb \polb'$.  

	For threshold policies in particular, their equivalence class under $\simj$ is uniquely determined by the selection rate function,
		\begin{eqnarray}
		\ratef_{\distj}(\polj) := \sum_{x \in \X} \distj(x) \polj(x)~,
		\end{eqnarray}
		which denotes the fraction of group $\popj$ which is selected. Indeed, we have the following lemma (proved in Appendix~\ref{sec:lem_pol_sim}): 

	\begin{lem}\label{lem:pol_sim_lemma} Let $\polj$ and $\polj'$ be threshold policies. Then $\polj \simj \polj'$ if and only if $\ratef_{\distj}(\polj) = \ratef_{\distj}(\polj')$. 
	Further, $\ratef_{\distj}(\polj)$ is a bijection from $\Monotone(\distj)$ to $[0,1]$, where $\Monotone(\distj)$ is the set of equivalence classes between threshold policies under $\simj$. Finally, $\distj \circ \ratef_{\distj}^{-1}(\accratej)$ is well defined. 
	\end{lem}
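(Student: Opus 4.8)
The plan is to establish the three assertions in sequence, the first-part equivalence carrying the argument. \emph{First part, forward direction.} Suppose $\polj \simj \polj'$, and let $D = \{x \in \X : \polj(x) \ne \polj'(x)\}$, so that $\sum_{x \in D}\distj(x) = 0$ by definition of $\simj$. Since $\distj(x) \ge 0$, this forces $\distj(x) = 0$ for every $x \in D$, whence
\[
\ratef_{\distj}(\polj) - \ratef_{\distj}(\polj') = \sum_{x \in \X}\distj(x)(\polj(x)-\polj'(x)) = \sum_{x \in D}\distj(x)(\polj(x)-\polj'(x)) = 0 .
\]

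\emph{First part, backward direction (the crux).} Here I would use the structural fact that any two threshold policies are pointwise comparable. Writing $\polj = \pol_{c,\gamma}$ and $\polj' = \pol_{c',\gamma'}$ and assuming without loss of generality that $c \le c'$, inspection of Definition~\ref{def:monotone_pols} gives $\polj(x) \ge \polj'(x)$ for all $x$ when $c < c'$, while if $c = c'$ the policies can differ only at the single score $x = c$. In every case $\polj - \polj'$ has one sign across all of $\X$, so $\sum_{x}\distj(x)(\polj(x)-\polj'(x))$ is a sum of terms of that one sign. If this sum is zero then each term vanishes, i.e.\ $\distj(x) = 0$ wherever $\polj(x) \ne \polj'(x)$, which is exactly $\polj \simj \polj'$. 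I expect this comparability step to be the main obstacle, since it is precisely the property that fails for arbitrary (non-threshold) policies and is what makes the restriction to $\Monotone(\distj)$ essential.

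\emph{Second part.} The forward direction shows $\ratef_{\distj}$ is constant on each $\simj$-class, hence descends to a well-defined map on $\Monotone(\distj)$, and the backward direction shows this descended map is injective. For surjectivity, fix a threshold $c$ and observe that as $\gamma$ ranges over $(0,1]$,
\[
\ratef_{\distj}(\pol_{c,\gamma}) = \sum_{x > c}\distj(x) + \gamma\,\distj(c)
\]
sweeps the half-open interval $(\sum_{x>c}\distj(x),\, \sum_{x \ge c}\distj(x)]$. As $c$ decreases from $\maxscore$ to $1$, the right endpoint of the interval at $c+1$ coincides with the left endpoint of the interval at $c$, so these intervals abut and telescope to cover $(0,1]$, with rate $1$ realized by $\pol_{1,1}$. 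Rate $0$ corresponds to the reject-all class; reconciling it with the convention $\gamma \in (0,1]$ is the one delicate endpoint, handled by adjoining the empty-selection class to $\Monotone(\distj)$.

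\emph{Third part.} Finally, $\ratef_{\distj}^{-1}(\accratej)$ denotes a $\simj$-class, so I must verify the vector $\distj \circ \ratef_{\distj}^{-1}(\accratej)$ is independent of the representative. If $\polj \simj \polj'$, they agree off the set $\{x : \distj(x) = 0\}$, and on that set the corresponding entries of $\distj \circ \polj$ and $\distj \circ \polj'$ are both $0$, while elsewhere the policies coincide; hence $\distj \circ \polj = \distj \circ \polj'$ as vectors, so $\distj \circ \ratef_{\distj}^{-1}(\accratej)$ is well defined. This is the quantity through which the utility $\Util\supj$, the outcome $\delmean\supj$, and the true positive rate all factor, which is why this well-definedness is the useful payoff of the lemma.
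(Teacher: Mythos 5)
Your proof is correct and follows essentially the same route as the paper's: the forward direction via the null-set definition of $\simj$, the backward direction by exploiting that two threshold policies are pointwise comparable so the rate difference is a one-signed sum that must vanish termwise, and surjectivity by an explicit construction equivalent to the paper's quantile-function argument. The only substantive difference is cosmetic — you unify the paper's two cases ($c=c'$ and $c\ne c'$) into a single comparability observation, and you spell out the well-definedness of $\distj\circ\ratef_{\distj}^{-1}$, which the paper's appendix leaves implicit.
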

	Remark that $\ratef_{\distj}^{-1}(\accratej)$ is an equivalence class rather than a single policy. 
	However, $\distj \circ \ratef_{\distj}^{-1}(\polj)$  is well defined, meaning that $\distj \circ \polj = \distj \circ \polj'$ for any two policies in the same equivalence class. 
	Since all quantities of interest will only depend on policies $\polj$ through $\distj \circ \polj$,  it does not matter \emph{which} representative of $\ratef_{\distj}^{-1}(\accratej)$ we pick.
	Hence, abusing notation slightly, we shall represent $\Monotone(\distj)$ by choosing one representative from each equivalence class under $\simj$\footnote{One  way to do this is to consider the set of all threshold policies $\pol_{c,\gamma}$ such that, $\gamma = 1$ if $\distj(c) = 0$ and $\distj(c-1) > 0$ if $\gamma = 1$ and $c > 1$.}. 

	It turns out the policies which arise in this away are always optimal in the sense that, for a given loan rate $\beta_j$, 
	the threshold policy $\ratef_{\distj}^{-1}(\beta_j)$ is the (essentially unique) policy which maximizes both the institution's utility and the utility of the group. Defining the group-wise utility, 
	\begin{eqnarray} \label{eq:util_group_shorthand}
			\Util\supj(\polj) := \sum_{x \in \X} \util(x) \distj(x) \polj(x)~,
	\end{eqnarray}
	we have the following result: 
	\begin{prop}[Threshold policies are preferable]\label{prop:monotone_best} Suppose that $\util(x)$ and $\chg(x)$ are strictly increasing in $x$. Given any loaning policy $\polj$ for population with distribution $\dist_j$, then the policy $\pol\supj^{\mono} := \ratef_{\distj}^{-1}(\ratef_{\distj}(\polj)) \in \Monotone(\distj)$ satisfies 
	\begin{eqnarray}
	\delmean\supj(\polj^\mono) \ge \delmean\supj(\polj)~\text{and}~\Util\supj(\polj^\mono) \ge \Util\supj(\polj)~.
	\end{eqnarray}
	 Moreover, both inequalities hold with equality if and only if $\pol\supj \simj \pol\supj^{\mono}$. 
	\end{prop}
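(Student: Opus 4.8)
The plan is to prove both claimed inequalities at once by observing that they share a common form. Both $\Util\supj(\polj) = \sum_{x \in \X}\distj(x)\polj(x)\util(x)$ and $\delmean\supj(\polj) = \sum_{x \in \X}\distj(x)\polj(x)\chg(x)$ are instances of the functional $V_f(\polj) := \sum_{x \in \X}\distj(x)\polj(x)f(x)$ for a strictly increasing $f:\X\to\R$ (taking $f=\util$ and $f=\chg$ respectively). So it suffices to show: for every strictly increasing $f$, $V_f(\polj^\mono)\ge V_f(\polj)$, with equality iff $\polj\simj\polj^\mono$. By Lemma~\ref{lem:pol_sim_lemma}, $\polj^\mono = \ratef_{\distj}^{-1}(\ratef_{\distj}(\polj)) \in \Monotone(\distj)$ is well defined and shares the selection rate $\accrate := \ratef_{\distj}(\polj)$ with $\polj$.

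Next I would pass to the \emph{selected-mass} variables $m(x) := \distj(x)\polj(x)$ and $m^\mono(x) := \distj(x)\polj^\mono(x)$, which satisfy $0 \le m(x), m^\mono(x) \le \distj(x)$ pointwise and $\sum_x m(x) = \sum_x m^\mono(x) = \accrate$. The crux of the argument is a tail-domination claim: writing $S(c):=\sum_{x\ge c}m(x)$ and $S^\mono(c):=\sum_{x\ge c}m^\mono(x)$, I claim $S^\mono(c)\ge S(c)$ for every $c\in\X$. This holds because $\polj^\mono$ is a ``water-filling from the top'' allocation. Any feasible $m$ obeys $S(c)\le\min\{\accrate,\ \sum_{x\ge c}\distj(x)\}$, and a short case analysis—splitting on whether $c$ lies above, at, or below the threshold $c_0$ of $\polj^\mono$—shows that $S^\mono(c)$ attains this bound exactly. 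The case $c=c_0$ uses that the randomization level $\gamma$ exactly exhausts the remaining budget, namely $\accrate = \gamma\distj(c_0)+\sum_{x>c_0}\distj(x)$.

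Granting tail domination, I would finish with summation by parts. Set $d(x):=m^\mono(x)-m(x)$, so that $\sum_x d(x)=0$ and $D(c):=\sum_{x\ge c}d(x)=S^\mono(c)-S(c)\ge 0$ for all $c$. Writing $f(x)=f(\maxscore)-\sum_{k>x}\bigl(f(k)-f(k-1)\bigr)$ and rearranging the double sum gives
\[
V_f(\polj^\mono)-V_f(\polj)=\sum_{x\in\X}d(x)f(x)=\sum_{k=2}^{\maxscore}\bigl(f(k)-f(k-1)\bigr)\,D(k).
\]
Since $f$ is strictly increasing, every increment $f(k)-f(k-1)$ is positive, and every $D(k)\ge 0$, so the sum is nonnegative; applying this to $f=\util$ and $f=\chg$ yields both inequalities simultaneously.

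For the equality characterization, the displayed sum vanishes iff $D(k)=0$ for each $k$ with a strictly positive increment, i.e. every $k\ge 2$; combined with $D(1)=\sum_x d(x)=0$ this forces $D(c)=0$ for all $c$, whence $m^\mono(x)=m(x)$, i.e. $\distj(x)\polj(x)=\distj(x)\polj^\mono(x)$ for every $x$, which is exactly $\polj\simj\polj^\mono$. The one step needing genuine care is the tail-domination claim at the boundary index $c_0$—handling the fractional mass $\gamma$ and scores carrying zero probability under $\distj$—whereas the summation-by-parts manipulation and the equality bookkeeping are routine. I therefore expect the threshold boundary case to be the main obstacle.
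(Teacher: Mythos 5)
Your proof is correct, but it takes a genuinely different route from the paper's. The paper derives Proposition~\ref{prop:monotone_best} as a consequence of Lemma~\ref{lem:optimal_best}, which is proved by convex-analytic means (normal cones and first-order KKT conditions for the linear program $\max \langle \boldv\circ\dist,\pol\rangle$ subject to a linear rate constraint); the proposition then follows by identifying the threshold policy as the essentially unique maximizer at the fixed selection rate. You instead give a direct, elementary majorization argument: pass to the selected-mass vector $m(x)=\distj(x)\polj(x)$, show that the threshold policy's tails $S^\mono(c)$ achieve the universal upper bound $\min\{\accrate,\sum_{x\ge c}\distj(x)\}$ (your case analysis around $c_0$ and the budget identity $\accrate=\gamma\distj(c_0)+\sum_{x>c_0}\distj(x)$ is exactly right, and the $c\le c_0$ case uses $\gamma\le 1$ as needed), and then convert tail domination into the inequality via Abel summation, with the equality case falling out of the strict positivity of the increments $f(k)-f(k-1)$ together with $D(1)=0$. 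Your approach buys self-containedness and a cleaner equality characterization --- it avoids invoking any optimization machinery and treats $\util$ and $\chg$ uniformly through the functional $V_f$ --- while the paper's route via Lemma~\ref{lem:optimal_best} buys generality: that lemma also handles the weighted constraints $\langle \dist\circ\boldw,\pol\rangle=t$ needed later for the $\eqop$ analysis (Proposition~\ref{prop:opt_threshold_fair}), which your unweighted tail-domination argument does not immediately cover. For the statement as given, your proof is complete and arguably tidier than the paper's own write-up.
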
 
	The map $\polj \mapsto \ratefj^{-1}(\ratefj(\polj))$ can be thought of transforming an arbitrary policy $\polj$ into a threshold policy with the same selection rate. 
	In this language, the above proposition states that this map never reduces institution utility or individual outcomes. 
	We can also show that optimal $\maxprof$ and $\dempar$ policies are threshold policies, as well as all $\eqop$ policies under an additional assumption:
	\begin{prop}[Existance of optimal threshold policies under fairness constraints]\label{prop:opt_threshold_fair} Suppose that $\util(x)$ is strictly increasing in $x$. Then all optimal $\maxprof$ policies $(\pola,\polb)$ 
	satisfy $\polj \simj \ratefj^{-1}\left(\ratefj(\polj)\right)$  for $\popj \in \pops$.
	The same holds for all optimal $\dempar$ policies, and if in addition $\util(x)/\pb(x)$ is increasing, the same is true for all optimal $\eqop$ policies. 
	\end{prop}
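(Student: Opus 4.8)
The plan is to treat all three criteria with a single device: the $\Util$-half of Proposition~\ref{prop:monotone_best}, which asserts that for any group policy $\polj$ the threshold-ified policy $\polj^{\mono}:=\ratefj^{-1}(\ratefj(\polj))$ satisfies $\Util\supj(\polj^{\mono})\ge\Util\supj(\polj)$, with equality if and only if $\polj\simj\polj^{\mono}$. This inequality requires only that $\util$ be increasing (the standing hypothesis here), not that $\chg$ be increasing: its proof is a rearrangement argument that moves selection mass onto higher scores, which never lowers a $\util$-weighted average when $\util$ is monotone. Observe also that the target conclusion ``$\polj\simj\ratefj^{-1}(\ratefj(\polj))$'' is precisely the assertion that $\polj$ is $\simj$-equivalent to a threshold policy, since $\ratefj^{-1}(\ratefj(\polj))$ is the canonical threshold policy of the same selection rate. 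So in each case it suffices to show that replacing an optimal policy by its threshold-ification keeps it feasible while not lowering utility, and then to invoke optimality to force the equality case.

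For $\maxprof$ the objective $\Util(\pol)=\fraca\Util\supa(\pola)+\fracb\Util\supb(\polb)$ decouples across groups and carries no constraint, so the threshold-ification $(\pola^{\mono},\polb^{\mono})$ of any optimal policy is automatically feasible and has utility $\ge\Util(\pol)$; optimality gives equality group-by-group, hence $\polj\simj\polj^{\mono}$ for each $\popj\in\pops$. For $\dempar$ the key point is that threshold-ification preserves the selection rate, $\ratefj(\polj^{\mono})=\ratefj(\polj)$ by Lemma~\ref{lem:pol_sim_lemma}, which is exactly the quantity the $\dempar$ constraint equates across groups; thus $(\pola^{\mono},\polb^{\mono})$ again satisfies the constraint, and the identical optimality argument applies.

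The $\eqop$ case is the main obstacle, because threshold-ification preserves the plain selection rate but \emph{not} the true positive rate, so the feasibility step above fails verbatim. I would resolve this by reweighting. Introduce the $\pb$-reweighted distribution $\widetilde{\distj}$ with $\widetilde{\distj}(x)\propto\distj(x)\pb(x)$; then $\oppj(\polj)$ is exactly the selection rate of $\polj$ under $\widetilde{\distj}$, while $\Util\supj(\polj)=\big(\sum_{x}\distj(x)\pb(x)\big)\sum_{x}\tfrac{\util(x)}{\pb(x)}\widetilde{\distj}(x)\polj(x)$ is, up to a positive constant, a $\tfrac{\util}{\pb}$-weighted average under $\widetilde{\distj}$. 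Now apply the $\Util$-half of Proposition~\ref{prop:monotone_best} with $\distj$ replaced by $\widetilde{\distj}$ and $\util$ replaced by $\util/\pb$ (increasing by the extra hypothesis): the threshold policy on $x$ achieving a given TPR maximizes utility among all policies of that TPR, with equality only up to $\simj$. Since reweighting by $\pb$ leaves the ordering and support of scores unchanged, a $\widetilde{\distj}$-threshold policy is a genuine threshold policy on $x$, and matching the TPR preserves the $\eqop$ constraint. Hence threshold-ifying an optimal $\eqop$ policy at fixed TPR yields a feasible and weakly better policy, forcing the equality case and showing the optimum is $\simj$-equivalent to a threshold policy, i.e. to $\ratefj^{-1}(\ratefj(\polj))$.

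The remaining care concerns scores with $\pb(x)=0$, where $\widetilde{\distj}$ assigns no mass, so $\widetilde{\distj}$-null sets may differ from $\distj$-null sets and the conclusion must still be read through the original $\simj$. This is harmless: $\mathrm{TPR}$ is defined only when $\sum_{x}\distj(x)\pb(x)>0$, and scores with $\pb(x)=0$ contribute nothing to either the reweighted utility or the TPR, so restricting the rearrangement to $\{x:\pb(x)>0\}$ and extending arbitrarily elsewhere produces a threshold policy on $x$ in the same $\simj$-class, completing the argument.
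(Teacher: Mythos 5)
Your proof is correct in substance and rests on the same underlying fact as the paper's --- that among all policies attaining a fixed value of a monotone linear constraint, threshold policies are the (essentially unique) maximizers of a monotone linear objective --- but it packages that fact differently. The paper fixes the constraint value (the common selection rate for $\dempar$, the common TPR $t$ for $\eqop$), notes that the inner optimization decouples across groups, and applies Lemma~\ref{lem:optimal_best} directly with $\boldw\equiv\boldzero$, $\boldw\equiv\boldone$, and $\boldw=\pb/\langle\pb,\distj\rangle$ respectively, so the weighted $\eqop$ case is dispatched by the general KKT lemma. You instead reuse only the unweighted threshold-ification ($\Util$-half of Proposition~\ref{prop:monotone_best}) and, for $\eqop$, reduce the weighted constraint to an unweighted one via the change of measure $\widetilde{\distj}\propto\distj\hdmd\pb$, under which the TPR becomes a selection rate and $\Util\supj$ becomes (up to the positive constant $\langle\pb,\distj\rangle$) a $\util/\pb$-weighted average. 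This is a clean, modular alternative: it makes transparent why the hypothesis shifts from ``$\util$ increasing'' to ``$\util/\pb$ increasing,'' and it avoids re-proving the weighted KKT lemma. You are also right that the $\Util$-half needs only $\util$ increasing, not $\chg$.

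Two points need tightening. First, your equality-case step uses ``$\Util\supj(\polj)=\Util\supj(\polj^\mono)$ implies $\polj\simj\polj^\mono$,'' whereas Proposition~\ref{prop:monotone_best} as stated only asserts the equivalence when \emph{both} inequalities are equalities; the one-sided version is true, but you must justify it by descending to Lemma~\ref{lem:optimal_best} (equality makes $\polj$ a maximizer of the fixed-rate problem, and \emph{every} such maximizer is a.e.\ a threshold policy), so cite the lemma rather than the proposition there. Second, your patch for scores with $\pb(x)=0$ does not work as written: such scores contribute nothing to the TPR, but they do contribute $\util(x)\distj(x)\polj(x)$ to the group utility, so the policy there is not ``arbitrary,'' and an optimizer can genuinely prefer a non-threshold configuration on $\{x:\pb(x)=0\}$ when the Lagrange multiplier on the TPR constraint is negative. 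The paper sidesteps this by requiring $\boldw\in\R_{>0}^{\maxscore}$ in Lemma~\ref{lem:optimal_best} (equivalently $\pb(x)>0$ everywhere, under which $\widetilde{\distj}$-null sets coincide with $\distj$-null sets and your argument closes); you should adopt that standing assumption rather than try to absorb the degenerate scores.
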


	To prove proposition \ref{prop:monotone_best}, we invoke the following general lemma which is proved using standard convex analysis arguments (in Appendix~\ref{sec:optimal_best_proof}):
	\begin{lem}\label{lem:optimal_best}
		Let $\boldv \in \R^{\maxscore}$, and let $\boldw \in \R_{>0}^{\maxscore}$, 
		and suppose either that $\boldv(x)$  is increasing in $x$, and $\boldv(x)/\boldw(x)$ is increasing or, $\forall x \in \X,~\boldw(x) = 0$. Let $\dist \in \Simplex$ and fix $t \in [0,\sum_{x \in \X} \dist(x) \cdot \boldw(x)]$. Then any
		\begin{eqnarray}\label{eq:optim}
		\pol^* \in \arg\max_{\pol \in [0,1]^C}\langle \boldv \circ \dist, \pol \rangle \quad \subto \quad \langle \dist \circ \boldw,\pol \rangle = t
		\end{eqnarray}
		satisfies $\pol^* \simpi \ratef_{\dist}^{-1}(\ratef_{\dist}(\pol^*))$. Moreover, at least one maximizer $\pol^* \in \Monotone(\dist)$ exists. 
	\end{lem}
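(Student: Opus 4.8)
The plan is to treat \eqref{eq:optim} as what it is: a linear program in $\pol$ with one linear equality constraint and the box constraints $\pol \in [0,1]^{\maxscore}$. First I would record that the feasible set is a nonempty compact polytope, so that a maximizer exists: the constant policies $\pol = s\boldone$, $s \in [0,1]$, satisfy $\langle \dist \circ \boldw, s\boldone\rangle = s\sum_{x}\dist(x)\boldw(x)$, which sweeps out all of $[0,\sum_x \dist(x)\boldw(x)]$ and hence attains the prescribed value $t$. It then remains to show (i) that \emph{every} maximizer $\pol^*$ is $\simpi$-equivalent to the threshold policy carrying its own selection rate, and (ii) that \emph{one} maximizer lies in $\Monotone(\dist)$.

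For the main case $\boldw \in \R_{>0}^{\maxscore}$ I would argue by an exchange (rather than invoking KKT directly, though the two are equivalent, the exchange ratio below playing the role of the dual multiplier separating selected from unselected scores). Observe that $\pol^* \simpi \ratef_{\dist}^{-1}(\ratef_{\dist}(\pol^*))$ precisely when $\pol^*$ ``fills from the top'' on the support of $\dist$, i.e.\ there is no pair $x_1 < x_2$ with $\dist(x_1),\dist(x_2) > 0$, $\pol^*(x_1) > 0$, and $\pol^*(x_2) < 1$. Suppose such an inversion existed. For small $\delta > 0$, decrease $\pol^*(x_1)$ by $\delta$ and increase $\pol^*(x_2)$ by $\tfrac{\dist(x_1)\boldw(x_1)}{\dist(x_2)\boldw(x_2)}\delta$; this keeps $\langle \dist \circ \boldw, \pol\rangle$ fixed at $t$ and stays in the box. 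The induced change in $\langle \boldv \circ \dist, \pol\rangle$ equals $\delta\,\dist(x_1)\boldw(x_1)\bigl(\tfrac{\boldv(x_2)}{\boldw(x_2)} - \tfrac{\boldv(x_1)}{\boldw(x_1)}\bigr)$, which is strictly positive since $\boldv/\boldw$ is strictly increasing and $x_2 > x_1$. This contradicts optimality, so no inversion exists and (i) follows.

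For (ii), note that the threshold representative $\ratef_{\dist}^{-1}(\ratef_{\dist}(\pol^*)) \in \Monotone(\dist)$ agrees with $\pol^*$ off a $\dist$-null set, hence produces the same value of $\langle \dist \circ \boldw, \cdot\rangle$ (so it is feasible) and the same objective (so it is optimal). Finally I would dispatch the degenerate branch $\boldw \equiv \boldzero$ separately: there the constraint reads $0 = t$, forcing $t = 0$ and rendering it vacuous, so \eqref{eq:optim} reduces to maximizing $\langle \boldv \circ \dist, \pol\rangle$ over the box. Since $\boldv$ is increasing, the pointwise optimum puts $\pol(x) = 1$ on the up-set $\{x : \boldv(x) > 0\}$, $0$ below, and is free at the at-most-single score where $\boldv(x) = 0$, which is again threshold-form up to $\simpi$.

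The main obstacle, and the step requiring the most care, is the bookkeeping around $\simpi$: because scores with $\dist(x) = 0$ are invisible to both the objective and the constraint, the conclusion holds only up to $\dist$-null modifications, so the exchange must be confined to $x_1, x_2$ in the support. Likewise the implication ``no inversion $\Rightarrow \pol^* \simpi \ratef_{\dist}^{-1}(\ratef_{\dist}(\pol^*))$'' must be justified on the support, where one must also confirm that \emph{strict} monotonicity of $\boldv/\boldw$ collapses any tie set (on which the exchange gives zero gain) to a single score, so that the sole admissible fractional mass sits exactly at the threshold.
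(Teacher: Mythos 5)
Your proof is correct, but it reaches the conclusion by a genuinely different route than the paper. The paper's argument is dual/variational: it introduces the normal cone $\NC(\pol_*)$ of the box at an optimizer, invokes the first-order KKT conditions to extract a scalar multiplier $\lamhat$, and reads off that on the support of $\dist$ the optimizer is forced to $0$ or $1$ according to the sign of $\boldv(x)+\lamhat\boldw(x)$; monotonicity of $\boldv/\boldw$ then forces that sign to change at most once, and a threshold representative is spliced in at the crossing score. You instead give a primal exchange argument: any ``inversion'' $x_1<x_2$ on the support with $\pol^*(x_1)>0$, $\pol^*(x_2)<1$ admits a feasible perturbation along the constraint whose objective gain is $\delta\,\dist(x_1)\boldw(x_1)\bigl(\tfrac{\boldv(x_2)}{\boldw(x_2)}-\tfrac{\boldv(x_1)}{\boldw(x_1)}\bigr)>0$, so optimizers have no inversions and are therefore threshold policies up to $\simpi$; your exchange ratio is exactly the dual multiplier in disguise, as you note. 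Your route buys three things: it is more elementary (no convex-analysis machinery), it supplies the existence claim explicitly (feasibility of the constraint via constant policies plus compactness of the feasible polytope, a point the paper's proof leaves implicit), and it makes visibly clear where \emph{strict} monotonicity of $\boldv/\boldw$ is needed --- the exchange yields zero gain on a tie set, which is precisely the failure mode for the ``every maximizer'' clause; the paper's proof quietly upgrades ``increasing'' to ``strictly increasing'' at the corresponding step. The KKT formulation, for its part, transfers more directly to the multi-constraint and soft-constrained variants used later in the appendix. Your handling of the degenerate branch $\boldw\equiv\boldzero$ and of the $\dist$-null bookkeeping matches the intended reading of the statement.
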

	\begin{proof} [Proof of Proposition~\ref{prop:monotone_best}] 
	We will first prove Proposition~\ref{prop:monotone_best} for the function $\Util\supj$.
	Given our nominal policy $\polj$, let $\accrate_j = \ratef_{\distj}(\polj)$. 
	We now apply Lemma~\ref{lem:optimal_best} with $\boldv(x) = \util(x)$ and $\boldw(x) = 1$. 
	For this choice of $\boldv$ and $\boldw$, $\langle \boldv, \pol \rangle = \Util\supj(\pol)$ and that $\langle \dist_j \circ \boldw,\pol = \ratef_{\distj}(\pol)$. 
	Then, if $\polj \in \arg\max_{\pol}\Util\supj(\pol) ~\subto~\ratef_{\distj}(\pol) = \accrate\supj$, 
	Lemma~\ref{eq:optim} implies that $\polj \simj \ratefj^{-1}(\ratefj(\polj))$. 

	On the other hand, assume that $\polj \simj \ratefj^{-1}\left(\ratefj(\polj)\right)$. 
	We show that $\ratefj^{-1}(\ratefj(\polj))$ is a maximizer; 
	which will imply that $\polj$ is a maximizer since $\polj \simj \ratefj^{-1}(\ratefj(\polj))$ 
	implies that $\Utilj(\polj) = \polj \simj \ratefj^{-1}(\ratefj(\polj))$. 
	By Lemma~\ref{lem:optimal_best} there exists a maximizer $\polj^* \in \Monotone(\dist)$, 
	which means that $\polj^* = \ratefj^{-1}(\ratefj(\polj^*))$. 
	Since  $\polj^*$ is feasible, we must have $\ratefj(\polj^*) = \ratefj(\polj)$, 
	and thus $\polj^* = \ratefj^{-1}(\ratefj(\polj))$, as needed. The same argument follows verbatim if we instead choose $\boldv(x) = \chg(x)$, and compute $\langle \boldv, \pol \rangle = \delmean\supj(\pol)$.
	\end{proof}
	We now argue Proposition~\ref{prop:opt_threshold_fair} for $\maxprof$, as it is a straightforward application of Lemma~\ref{lem:optimal_best}. We will prove Proposition~\ref{prop:opt_threshold_fair} for $\dempar$ and $\eqop$ separately in Sections~\ref{sec:dem_par_proof} and~\ref{sec:eqop_proof}.
	
		\begin{proof}[Proof of Proposition~\ref{prop:opt_threshold_fair} for $\maxprof$] $\maxprof$ follows from  lemma \ref{lem:optimal_best} with $\boldv(x) = \util(x)$, and $t = 0$ and $\boldw = \mathbf{0}$. 
			
		\end{proof}
		

\subsection{Quantiles and Concavity of the Outcome Curve\label{section:concavity_outcome}}
	To further our analysis, we now introduce left and right quantile functions, allowing us to specify thresholds in terms of both selection rate and score  cutoffs. 

	\begin{defn}[Upper quantile function] Define $\RCDF$ to be the upper quantile function corresponding to $\dist$, i.e. \begin{align}
		\RCDFj(\accrate) = \argmax\{c:\sum_{x=c}^C\distj(x) > \accrate\}
		\quad\text{and}\quad \RCDFplj(\accrate):= \argmax\{c:\sum_{x=c}^C\distj(x) \ge \accrate\}\:.
		\end{align}
	\end{defn}
	Crucially $\RCDF(\accrate)$ is continuous from the right, and $\RCDFpl(\accrate)$ is continuous from the left.
	Further, $\RCDF(\cdot)$ and $\RCDFpl(\cdot)$ allow us to compute derivatives of key functions, like the mapping from selection rate $\accrate$ to the group outcome associated with a policy of that rate, $\delmean (\ratef_{\pi}^{-1}(\accrate))$. Because we take $\dist$ to have discrete support, all functions in this work are \emph{piecewise linear}, so we shall need to distinguish between the left and right derivatives, defined as follows
	\begin{eqnarray}
	\partial_- f(x):= \lim_{t \to 0^-}\frac{f(x + t) - f(x)}{t} \quad \text{and} \quad \partial_+ f(y):= \lim_{t \to 0^+}\frac{f(y + t) - f(y)}{t} \:.
	\end{eqnarray}
	For $f$ supported on $[a,b]$, we say that $f$ is left- (resp. right-) differentiable if $\partial_-f(x)$ exists for all $x \in (a,b]$ (resp. $\partial_+ f(y)$ exists for all $y \in [a,b)$). We now state the fundamental derivative computation which underpins the results to follow: 

	\begin{lem}\label{lem:der_comp} Let $\bolde_{x}$ denote the vector such that $\bolde_x(x) = 1$, and $\bolde_x(x') = 0$ for $x' \ne x$. Then $\distj \circ \ratef_{\distj}^{-1}(\accrate): [0,1] \to [0,1]^{\maxscore}$ is continuous, and has left and right derivatives 
	\begin{eqnarray}
	\partial_+ \left(\distj \circ \ratef_{\distj}^{-1}(\accrate)\right) = \bolde_{\RCDF(\beta)} 
	\quad \text{and}\quad \partial_- \left(\distj \circ \ratef_{\distj}^{-1}(\accrate) \right)= \bolde_{\RCDFpl(\beta)} \:.
	\end{eqnarray}
	\end{lem}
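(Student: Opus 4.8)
The plan is to track how the Hadamard-product vector $F(\accrate) := \distj \hdmd \ratef_{\distj}^{-1}(\accrate)$ evolves as the selection rate $\accrate$ sweeps across $[0,1]$, and to show that its only moving coordinate is the one sitting at the current threshold. By Lemma~\ref{lem:pol_sim_lemma} the object $F(\accrate)$ is well defined, independent of the representative chosen for the equivalence class $\ratef_{\distj}^{-1}(\accrate)$, and for a threshold policy $\pol_{c,\gamma}$ it has the explicit form $F(x) = \distj(x)$ for $x>c$, $F(c) = \gamma\,\distj(c)$, and $F(x)=0$ for $x<c$, subject to $\sum_{x>c}\distj(x) + \gamma\,\distj(c) = \accrate$. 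This coordinate representation is the workhorse for the whole argument.

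First I would establish continuity. As $\accrate$ increases, the pair $(c,\gamma)$ is adjusted so that the cutoff coordinate $\gamma\,\distj(c)$ grows continuously from $0$ up to $\distj(c)$; at the instant $\gamma$ reaches $1$, the cutoff drops to the next populated score and its coordinate restarts from $0$. Because the coordinate being filled reaches exactly $\distj(c)$ at the moment of transition while the next coordinate starts at $0$, no coordinate of $F$ jumps, so $F(\accrate)$ is continuous; this is what lets me speak of one-sided derivatives everywhere.

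Next I would compute the derivative on the interior of each linear piece, i.e.\ where $\gamma\in(0,1)$. There the cutoff $c$ is locally constant, and any increment $t$ in the selection rate is absorbed entirely by adjusting $\gamma$ at $c$, so $F(\accrate+t) - F(\accrate) = t\,\bolde_c$ for all small $t$ of either sign; hence both one-sided derivatives equal $\bolde_c$. I would then check directly from the definitions that in this regime $\RCDF(\accrate)=\RCDFpl(\accrate)=c$, using that $\sum_{x>c}\distj(x)<\accrate<\sum_{x\ge c}\distj(x)$, which matches the claimed formula.

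The crux, and the main obstacle, is the behavior at the breakpoints $\accrate = \sum_{x\ge c}\distj(x)$, where $\gamma=1$ and the cutoff is about to change, together with the bookkeeping forced by scores of zero mass. Here the left and right derivatives genuinely differ: for $t<0$ the policy retreats by lowering $\gamma$ at the same cutoff $c$, giving $\partial_- F = \bolde_c$, whereas for $t>0$ it begins selecting at the largest populated score $c'$ strictly below $c$, giving $\partial_+ F = \bolde_{c'}$. The remaining work is to verify that the quantile functions name exactly these scores: the left-continuous $\RCDFpl(\accrate)=c$, the largest score whose upper-tail mass is still $\ge\accrate$, and the right-continuous $\RCDF(\accrate)=c'$, the largest score whose upper-tail mass is strictly $>\accrate$, which correctly skips over any empty scores between $c'$ and $c$. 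Matching the strict versus non-strict inequalities in the definitions of $\RCDF$ and $\RCDFpl$ to the $t>0$ and $t<0$ cases, and confirming that zero-mass scores are skipped as claimed, is the only delicate part; the endpoint cases $\accrate=0,1$ are handled automatically by the convention that differentiability is required only on $[0,1)$ respectively $(0,1]$.
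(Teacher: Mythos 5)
Your proposal is correct and follows essentially the same route as the paper: both work with the canonical threshold-policy representative $\pol_{\RCDF(\accrate),\gamma(\accrate)}$, observe that each coordinate of $\distj \circ \ratef_{\distj}^{-1}(\accrate)$ is piecewise linear, and identify the single moving coordinate with the appropriate quantile. The only cosmetic difference is that the paper obtains the slope-$1$ fact by differentiating the identity $\ratef_{\distj}(\pol_{\RCDF(\accrate),\gamma(\accrate)})=\accrate$ and uses right-continuity of $\RCDF$ to treat interior points and breakpoints uniformly, whereas you compute the increment $t\,\bolde_c$ directly and case-split; both are valid.
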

	The above lemma is proved in Appendix~\ref{sec:der_comp_proof}. Moreover, Lemma~\ref{lem:der_comp} implies that the outcome curve is concave under the assumption that $\chg(x)$ is monotone:
	\begin{prop}\label{prop:beta_concavity} Let $\dist$ be a distribution over $C$ states. Then $\accrate \mapsto \delmean(\ratef_{\dist}^{-1}(\accrate))$ is concave. In fact, if $\boldw(x)$ is any non-decreasing map from $\X \to \R$, $\accrate \mapsto \langle \boldw, \ratef_{\dist}^{-1}(\accrate) \rangle$ is concave.
	\end{prop}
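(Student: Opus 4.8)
The plan is to reduce everything to the one-sided derivative computation in Lemma~\ref{lem:der_comp} and then invoke the elementary fact that a continuous, piecewise-linear function is concave precisely when its supporting slopes are non-increasing from left to right. Throughout I work with the well-defined curve $h(\accrate) := \langle \boldw, \dist \circ \ratef_{\dist}^{-1}(\accrate)\rangle$, where the vector $\dist \circ \ratef_{\dist}^{-1}(\accrate)$ is well-defined by Lemma~\ref{lem:pol_sim_lemma} even though $\ratef_{\dist}^{-1}(\accrate)$ is only an equivalence class. The first assertion is then the special case $\boldw = \chg$, since $\delmean(\pol) = \langle \chg, \dist \circ \pol\rangle$; it is valid once $\chg$ is taken non-decreasing, the monotonicity hypothesis flagged in the preceding text.

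First I would record the one-sided derivatives of $h$. Since $h$ is the image of the vector-valued map $\accrate \mapsto \dist \circ \ratef_{\dist}^{-1}(\accrate)$ under the fixed linear functional $\langle \boldw, \cdot\rangle$, both continuity and the derivative formulas transfer immediately from Lemma~\ref{lem:der_comp}: $h$ is continuous on $[0,1]$ with $\partial_+ h(\accrate) = \langle \boldw, \bolde_{\RCDF(\accrate)}\rangle = \boldw(\RCDF(\accrate))$ and, likewise, $\partial_- h(\accrate) = \boldw(\RCDFpl(\accrate))$.

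The second step is to pin down the monotonicity of the upper quantile functions. Directly from the definitions, as $\accrate$ increases the defining constraints $\sum_{x \ge c}\dist(x) > \accrate$ (respectively $\ge \accrate$) become harder to satisfy, so both $\RCDF$ and $\RCDFpl$ are non-increasing in $\accrate$. The crucial quantitative version, which couples the strict and non-strict definitions, is that for $\accrate_1 < \accrate_2$ one has $\RCDF(\accrate_1) \ge \RCDFpl(\accrate_2)$: writing $c_2 = \RCDFpl(\accrate_2)$ gives $\sum_{x \ge c_2}\dist(x) \ge \accrate_2 > \accrate_1$, so $c_2$ lies in the strict set defining $\RCDF(\accrate_1)$, whence $\RCDF(\accrate_1) \ge c_2$. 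This single inequality — in which the strict inequality of the definition of $\RCDF$ is exactly what lets me upgrade $\ge \accrate_2$ to $> \accrate_1$ — is the only genuine subtlety, and I expect it to be the main obstacle to state cleanly.

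Finally I would assemble the concavity conclusion. Since $\boldw$ is non-decreasing and $\RCDF(\accrate_1) \ge \RCDFpl(\accrate_2)$ for $\accrate_1 < \accrate_2$, we obtain $\partial_+ h(\accrate_1) = \boldw(\RCDF(\accrate_1)) \ge \boldw(\RCDFpl(\accrate_2)) = \partial_- h(\accrate_2)$ whenever $\accrate_1 < \accrate_2$ (and, taking the two points together at a kink, $\partial_- h(\accrate) \ge \partial_+ h(\accrate)$). This is precisely the statement that the supporting slopes of the continuous, piecewise-linear function $h$ are non-increasing from left to right, which for such a function is equivalent to concavity. Specializing to $\boldw = \chg$ yields concavity of $\accrate \mapsto \delmean(\ratef_{\dist}^{-1}(\accrate))$, completing the argument.
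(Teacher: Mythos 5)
Your proof is correct and follows essentially the same route as the paper's: compute the one-sided derivatives $\boldw(\RCDF(\accrate))$ and $\boldw(\RCDFpl(\accrate))$ via Lemma~\ref{lem:der_comp}, then deduce concavity from the monotonicity of $\boldw$ together with monotonicity properties of the quantile functions. Your explicit verification of the cross-point inequality $\RCDF(\accrate_1) \ge \RCDFpl(\accrate_2)$ for $\accrate_1 < \accrate_2$ is a welcome addition — the paper asserts the non-increase of the derivatives without spelling this out.
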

		\begin{proof} 
		Recall that a univariate function $f$ is concave (and finite) on $[a,b]$ if and only (a) $f$ is left- and right-differentiable, (b) for all $x \in (a,b)$, $\partial_-f(x) \ge \partial_+f(x)$ and (c) for any $x > y$, $\partial_- f(x) \le \partial_+ f(y)$.  

		Observe that $\delmean(\ratef_{\dist}^{-1}(\accrate)) = \langle \chg, \dist \circ  \ratef_{\dist}^{-1}(\accrate) \rangle $. By Lemma~\ref{lem:der_comp}, $\dist \circ  \ratef_{\dist}^{-1}(\accrate)$ has right and left derivatives $\bolde_{\RCDF(\accrate)}$ and $\bolde_{\RCDFpl(\accrate)}$. Hence, we have that
		\begin{eqnarray}
		\partial_+ \delmean(\accrateb) = \chg(\RCDF(\accrateb))\quad\text{and}\quad \partial_- \delmean(\accrateb) = \chg(\RCDFpl(\accrateb)) \:.
		\end{eqnarray}
		Using the fact that $\chg(x)$ is monotone, and that $\RCDF \le \RCDFpl$, we see that $\partial_+ \delmean(f_{\dist}^{-1}(\accrateb)) \le \partial_- \delmean(f_{\dist}^{-1}(\accrateb))  $, and that $\partial \delmean(f_{\dist}^{-1}(\accrateb))$ and $\partial_+ \delmean(f_{\dist}^{-1}(\accrateb))$ are non-increasing, from which it follows that $\delmean(f_{\dist}^{-1}(\accrateb))$ is concave. The general concavity result holds by replacing $\chg(x)$ with $\boldw(x)$.
		\end{proof}

		%
	%


\section{Proofs of Main Theorems}  \label{sec:main_thm_proofs}

We are now ready to present and prove theorems that characterize the selection rates under fairness constraints, namely $\dempar$ and $\eqop$. These characterizations are crucial for proving the results in Section \ref{sec:results}. Our computations also generalize readily to other linear constraints, in a way that will become clear in Section \ref{sec:eqop_proof}.

\begin{figure*}[t]

\centering
\includegraphics[width=.4\textwidth]{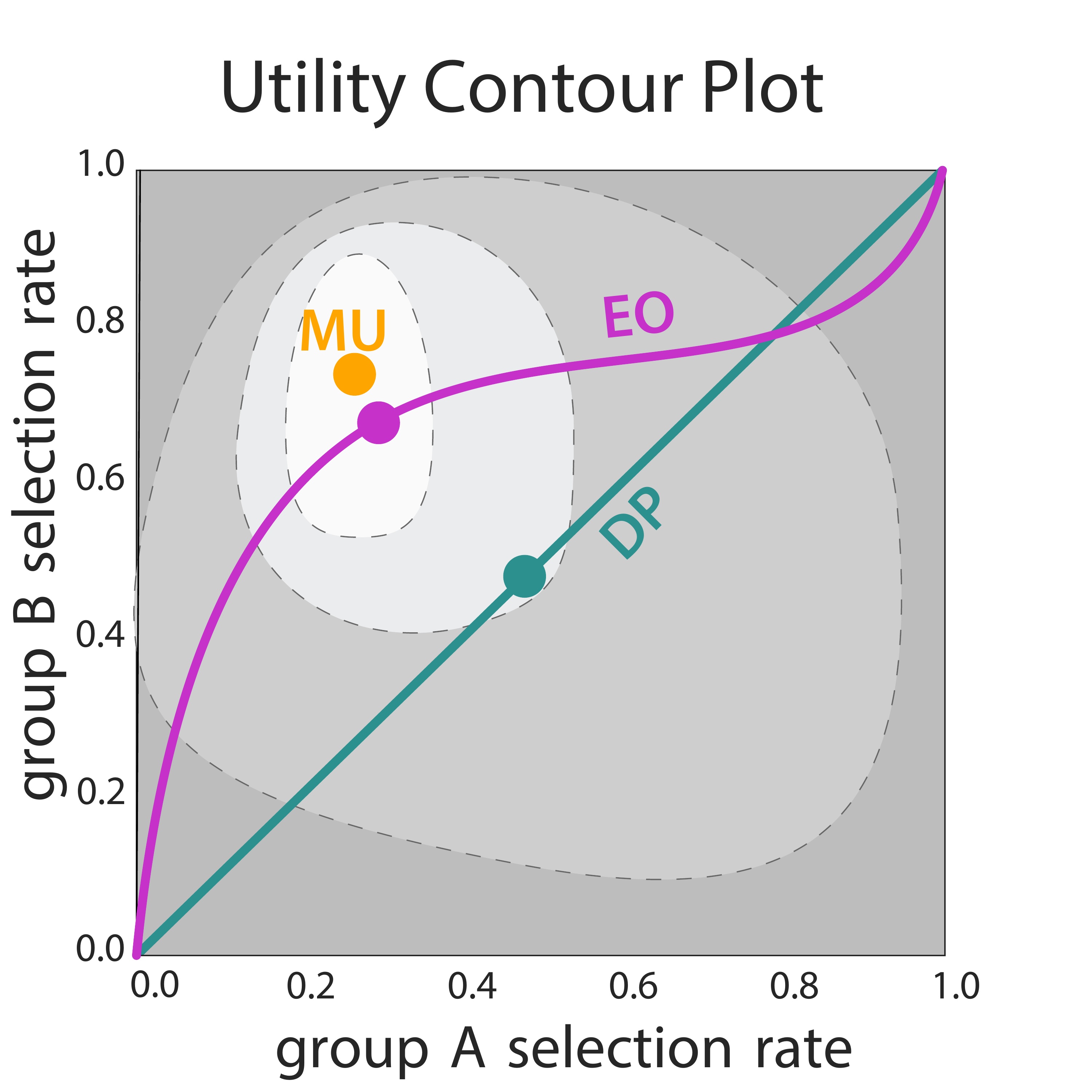}

\caption{ \label{fig:contour} Considering the utility as a function of selection rates, fairness constraints correspond to restricting the optimization to one-dimensional curves. The $\dempar$ (DP) constraint is a straight line with slope $1$, while the $\eqop$ (EO) constraint is a curve given by the graph of $\transfer$. 
The derivatives considered throughout Section~\ref{sec:main_thm_proofs} are taken with respect to the selection rate $\accratea$ (horizontal axis); projecting the EO and DP constraint curves to the horizontal axis recovers concave utility curves such as those shown in the lower panel of Figure~\ref{fig:outcome_util_curves} (where $\maxprof$ in is represented by a horizontal line through the MU optimal solution). }

\end{figure*}

	\subsection{A Characterization Theorem for $\dempar$}\label{sec:dem_par_proof}
		In this section, we provide a theorem that gives an explicit characterization for the range of selection rates $\accratea$ for $\popa$ when the bank loans according to $\dempar$. 
		Observe that the $\dempar$ objective corresponds to solving the following linear program:
		\begin{eqnarray*}\label{eq:hard_const_dempar}
		\max_{\pol = (\pola,\polb) \in [0,1]^{2\maxscore}}\Util(\pol) \quad \subto \quad \langle \dista, \pola \rangle = \langle \distb, \polb \rangle \:.
		\end{eqnarray*}
		Let us introduce the auxiliary variable $\accrate := \langle \dista, \pola \rangle = \langle \distb, \polb \rangle$ corresponding to the selection rate which is held constant across groups, so that all feasible solutions lie on the green DP line in Figure~\ref{fig:contour}. We can then express the following equivalent linear program:
		\begin{eqnarray*}\label{eq:hard_const_dempar_subst}
		\max_{\pol = (\pola,\polb) \in [0,1]^{2\maxscore},\accrate \in [0,1]}\Util(\pol) \quad \subto \quad \accrate = \langle \distj, \polj \rangle,~ \popj \in \{\popa,\popb\}\:.
		\end{eqnarray*}
		This is equivalent because, for a given $\accrate$, Proposition \ref{prop:opt_threshold_fair} says that the utility maximizing policies are of the form $\polj = \ratef_{\distj}^{-1}(\accrate)$. We now prove this:
		
		\begin{proof}[Proof of Proposition~\ref{prop:opt_threshold_fair} for $\dempar$] 
				Noting that $\ratef_{\distj}(\polj) = \langle \distj , \polj \rangle$, we see that, by Lemma~\ref{lem:optimal_best}, under the special case where $\boldv(x) = \util(x)$ and $\boldw(x) = 1$, the optimal solution $(\pola^*(\accrate),\polb^*(\accrate))$ for fixed $\ratef_{\dista}(\pola) = \ratef_{\distb}(\polb) = \accrate$ can be chosen to coincide with the threshold policies. Optimizing over $\accrate$, the global optimal must coincide with thresholds. 
				
		\end{proof}

		Hence, any optimal policy is equivalent to the threshold policy $\pol = (\ratef_{\dista}^{-1}(\accrate),\ratef_{\distb}^{-1}(\accrate))$, where $\accrate$ solves the following optimization:
		\begin{eqnarray}\label{eq:hard_const_dempar_accrate}
		\max_{\accrate \in [0,1]}\Util\left(\left(\ratef_{\dista}^{-1}(\accrate),\ratef_{\distb}^{-1}(\accrate)\right)\right) \:.
		\end{eqnarray}
		We shall show that the above expression is in fact a \emph{concave} function in $\accrate$, and hence the set of optimal selection rates can be characterized by first order conditions. This is presented formally in the following theorem:

	\begin{thm}[Selection rates for $\dempar$]\label{thm:dem_par_selection}
	
	The set of optimal selection rates $\accrate^*$ satisfying~\eqref{eq:hard_const_dempar_accrate} forms a continuous interval $[\accrate_{\dempar}^-,\accrate_{\dempar}^+]$, such that for any $\accrate \in [0,1]$, we have
	\begin{eqnarray*}
	\accrate < \accrate_{\dempar}^- &\text{if}& \fraca\util\left(\RCDF\supa(\accrate)\right) + \fracb\util\left(\RCDF\supb(\accrate)\right) > 0\:, \\
	\accrate > \accrate_{\dempar}^+ &\text{if}& \fraca\util\left(\RCDFpl\supa(\accrate) \right)+ \fracb\util\left(\RCDFpl\supb(\accrate)\right) < 0\:. \\
	\end{eqnarray*}
		\end{thm}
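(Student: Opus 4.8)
The plan is to reduce the two-group demographic parity problem to a one-dimensional concave optimization over the shared selection rate $\accrate$, and then read off the optimality interval from first-order (subgradient) conditions. Having already established via Proposition~\ref{prop:opt_threshold_fair} that optimal $\dempar$ policies are threshold policies, the objective in~\eqref{eq:hard_const_dempar_accrate} is the single-variable function
\[
F(\accrate) := \Util\left(\left(\ratef_{\dista}^{-1}(\accrate),\ratef_{\distb}^{-1}(\accrate)\right)\right) = \fraca\,\Util\supa(\ratef_{\dista}^{-1}(\accrate)) + \fracb\,\Util\supb(\ratef_{\distb}^{-1}(\accrate)).
\]
The central structural fact I would invoke is Proposition~\ref{prop:beta_concavity}: since $\util(x)$ is non-decreasing in $x$, each map $\accrate \mapsto \Util\supj(\ratef_{\distj}^{-1}(\accrate)) = \langle \util, \distj \circ \ratef_{\distj}^{-1}(\accrate)\rangle$ is concave, and a non-negative combination of concave functions is concave. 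Hence $F$ is concave on $[0,1]$.

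**Next I would** compute the one-sided derivatives of $F$ explicitly using Lemma~\ref{lem:der_comp}. That lemma gives $\partial_+ (\distj \circ \ratef_{\distj}^{-1}(\accrate)) = \bolde_{\RCDF\supj(\accrate)}$ and $\partial_- (\distj \circ \ratef_{\distj}^{-1}(\accrate)) = \bolde_{\RCDFpl\supj(\accrate)}$, so by linearity,
\[
\partial_+ F(\accrate) = \fraca\util(\RCDF\supa(\accrate)) + \fracb\util(\RCDF\supb(\accrate)), \qquad \partial_- F(\accrate) = \fraca\util(\RCDFpl\supa(\accrate)) + \fracb\util(\RCDFpl\supb(\accrate)).
\]
These are precisely the two expressions appearing in the theorem statement. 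The argmax of a concave function is a closed interval, which I would denote $[\accrate_{\dempar}^-,\accrate_{\dempar}^+]$; its existence and connectedness follow immediately from concavity and continuity on the compact domain $[0,1]$.

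**The core of the proof** is then the first-order characterization of this interval. For a concave $F$, a point $\accrate$ lies strictly below the argmax interval exactly when $F$ is still strictly increasing there, i.e. when the right derivative is positive: $\partial_+ F(\accrate) > 0 \implies \accrate < \accrate_{\dempar}^-$. Symmetrically, $\accrate$ lies strictly above the interval when the left derivative is negative: $\partial_- F(\accrate) < 0 \implies \accrate > \accrate_{\dempar}^+$. Substituting the derivative formulas above yields exactly the two implications claimed. I would justify these implications from the monotonicity of one-sided derivatives of a concave function: $\partial_+ F$ is non-increasing, so if $\partial_+ F(\accrate) > 0$ then $F$ strictly increases past $\accrate$ and the maximum cannot be attained at or before $\accrate$; the left-derivative case is dual.

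**The main obstacle** I anticipate is the careful handling of the piecewise-linear, discrete-support structure, where left and right derivatives genuinely differ at the (finitely many) breakpoints $\accrate$ for which $\RCDF\supj(\accrate) \neq \RCDFpl\supj(\accrate)$. The subtlety is that the argmax interval boundaries need not be points of differentiability, so I must phrase the characterization using strict inequalities on the appropriate one-sided derivative rather than setting a gradient to zero. Concretely, the asymmetry between using $\RCDF$ (right quantile, controlling $\partial_+$) in the lower-boundary condition and $\RCDFpl$ (left quantile, controlling $\partial_-$) in the upper-boundary condition is exactly what makes the statement correct at breakpoints, and verifying that these one-sided conditions are the right ones—rather than off-by-one at the kinks—is where the bookkeeping must be done with care.
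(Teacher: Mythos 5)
Your proposal is correct and follows essentially the same route as the paper's proof: concavity of $\accrate \mapsto \Util\left(\left(\ratef_{\dista}^{-1}(\accrate),\ratef_{\distb}^{-1}(\accrate)\right)\right)$ via Proposition~\ref{prop:beta_concavity}, the one-sided derivative formulas from Lemma~\ref{lem:der_comp}, and the first-order characterization of the argmax interval of a concave function. Your explicit remarks on the breakpoint asymmetry between $\RCDF$ and $\RCDFpl$ are a welcome clarification but do not change the argument.
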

		\begin{proof}  Note that we can write 
		\begin{eqnarray*}
		\Util\left(\left(\ratef_{\dista}^{-1}(\accrate),\ratef_{\distb}^{-1}(\accrate)\right)\right)  &= \fraca\langle \util, \dista \circ  \ratef_{\dista}^{-1}(\accrate) \rangle + \fracb\langle \util, \distb \circ  \ratef_{\distb}^{-1}(\accrate) \rangle \:.
		\end{eqnarray*}

		Since $\util(x)$ is non-decreasing in $x$, Proposition~\ref{prop:beta_concavity} implies that $\beta \mapsto \Util\left(\left(\ratef_{\dista}^{-1}(\accrate),\ratef_{\distb}^{-1}(\accrate)\right)\right)$ is concave in $\accrate$. Hence, all optimal selection rates $\accrate^*$ lie in an interval $[\beta^-,\beta^+]$. To further characterize this interval, let us us compute left- and right-derivatives.
		\begin{eqnarray*}
		\partial_+ \Util\left(\left(\ratef_{\dista}^{-1}(\accrate),\ratef_{\distb}^{-1}(\accrate)\right)\right) &=& 
		\partial_+ \fraca\langle \util, \dista \circ \ratef_{\dista}^{-1}(\accrate) \rangle + \partial_+ \fracb  \langle \util , \distb \circ \ratef_{\distb}^{-1}(\accrate) \rangle\\
		&=& \fraca\langle \util, \partial_+ \left( \dista \circ \ratef_{\dista}^{-1}(\accrate) \right)\rangle + \fracb \langle \util , \partial_+  \left( \distb \circ\ratef_{\distb}^{-1}(\accrate) \right) \rangle\\
		&\overset{\text{Lemma}~\ref{lem:der_comp}}{=}& \fraca\langle \util, \bolde_{\RCDF\supa(\beta)}  \rangle + \fracb \langle \util ,  \bolde_{\RCDF\supb(\beta)}  \rangle\\
		&=& \fraca \util(\RCDF\supa(\beta))  + \fracb \util(\RCDF\supb(\beta))\:.
		\end{eqnarray*}
		The same argument shows that 
		\begin{eqnarray*}
		\partial_- \Util((\ratef_{\dista}^{-1}(\accrate),\ratef_{\distb}^{-1}(\accrate))) = \fraca \util(\RCDFpl\supa(\beta))  + \fracb \util(\RCDFpl\supb(\beta)).	\end{eqnarray*} 
		By concavity of $\Util\left(\left(\ratef_{\dista}^{-1}(\accrate),\ratef_{\distb}^{-1}(\accrate)\right)\right)$, a positive right derivative at $\accrate$ implies that $\accrate <  \accrate^*$ for all $\beta^*$ satisfying~\eqref{eq:hard_const_dempar_accrate}, and similarly, a negative left derivative at $\accrate$ implies that $\accrate >  \accrate^*$ for all $\beta^*$ satisfying ~\eqref{eq:hard_const_dempar_accrate}.
	
		\end{proof}
		
		With a result of the above form, we can now easily prove statements such as that in Corollary \ref{cor:dp_overeager} (see appendix \ref{app:main_results_proofs} for proofs), by fixing a selection rate of interest (e.g. $\beta_0$) and inverting the inequalities in Theorem \ref{thm:dem_par_selection} to find the exact population proportions under which, for example, $\dempar$ results in a higher selection rate than $\beta_0$.

	\subsection{$\eqop$ and General Constraints} \label{sec:eqop_proof}
		Next, we will provide a theorem that gives an explicit characterization for the range of selection rates $\accratea$ for $\popa$ when the bank loans according to $\eqop$. 
		Observe that the $\eqop$ objective corresponds to solving the following linear program:
		\begin{eqnarray}\label{eq:hard_const_eqop}
		\max_{\pol = (\pola,\polb) \in [0,1]^{2\maxscore}}\Util(\pol) \quad \subto \quad \langle \boldwa \circ  \dista, \pola \rangle = \langle \boldwb \circ  \distb, \polb \rangle~,
		\end{eqnarray}
		where $\boldw\supj = \frac{\pb}{\langle \pb, \distj\rangle}$. This problem is similar to the demographic parity optimization in~\eqref{eq:hard_const_dempar_accrate}, except for the fact that the constraint includes the weights. 
		Whereas we parameterized demographic parity solutions in terms of the acceptance rate $\accrate$ in equation~\eqref{eq:hard_const_dempar_accrate}, we will parameterize equation~\eqref{eq:hard_const_eqop} in terms of the true positive rate (TPR), $t:= \langle \boldwa \circ  \dista, \pola \rangle$. Thus,~\eqref{eq:hard_const_eqop} becomes 
		\begin{eqnarray}\label{eq:hard_const_eqop_2}
		\max_{t \in [0,t_{\max}]} \max_{(\pola,\polb) \in [0,1]^{2\maxscore}}\sum_{j \in \pops}\fracj \Util\supj(\polj) \quad \subto \quad \langle \boldwj \circ  \distj, \polj \rangle = t,~\popj\in\pops~,
		\end{eqnarray}
		where $t_{\max} = \min_{\popj \in \pops}\{\langle \distj, \boldwj \rangle\}$ is the largest possible TPR. The magenta EO curve in Figure~\ref{fig:contour} illustrates that feasible solutions to this optimization problem lie on a curve parametrized by $t$.
		Note that the objective function decouples for $\popj \in \pops$ for the inner optimization problem,
		\begin{eqnarray}\label{eq:hard_const_eqop_3}
		\max_{\polj \in [0,1]^{\maxscore}}\sum_{j \in \pops}\fracj \Util\supj(\polj) \quad \subto \quad \langle \boldwj \circ  \distj, \polj \rangle = t\:.
		\end{eqnarray}
		We will now show that all optimal solutions for this inner optimization problem are $\distj$-a.e. equal to a policy in $\Monotone(\distj)$, and thus can be written as $\ratef_{\distj}^{-1}(\accrate\supj)$, depending only on the resulting selection rate.
		
		\begin{proof}[Proof of Proposition~\ref{prop:opt_threshold_fair} for $\eqop$] 
			We apply Lemma~\ref{lem:optimal_best} to the inner optimization in~\eqref{eq:hard_const_eqop_3} with $\boldv(x) = \util(x)$ and $\boldw(x) = \frac{\pb(x)}{\langle \pb, \distj\rangle}$. The claim follows from the assumption that $\util(x)/\pb(x)$ is increasing by optimizing over $t$.
		\end{proof}

		This selection rate $\accrate\supj$ is uniquely determined by the TPR $t$ (proof appears in Appendix~\ref{sec:char_fairness_solns_eqopp}):
		
		\begin{restatable}{lem}{bijectiontpr}
		\label{lem:bijection_tpr_accrate}
Suppose that $\boldw(x) > 0$ for all $x$. Then the function
		\begin{eqnarray*}
		\constfj(\accrate):=	\langle \ratef_{\distj}^{-1}(\accrate),\distj \circ \boldwj \rangle
		\end{eqnarray*}
		is a bijection from $[0,1]$ to $[0,\langle \distj, \boldw \rangle]$. 
		\end{restatable}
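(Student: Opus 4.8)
The plan is to recognize $\constfj$ as an instance of the same ``selection rate $\mapsto$ weighted outcome'' map already analyzed in Proposition~\ref{prop:beta_concavity}, only with the weight vector $\chg$ replaced by $\boldwj$, and then to show that it is continuous, strictly increasing, and attains the claimed endpoints, so that the intermediate value theorem yields a bijection. Concretely, I would first rewrite
\[
\constfj(\accrate) = \langle \ratef_{\distj}^{-1}(\accrate),\,\distj \circ \boldwj \rangle = \langle \boldwj,\, \distj \circ \ratef_{\distj}^{-1}(\accrate) \rangle = \sum_{x} \boldwj(x)\,\distj(x)\,\ratef_{\distj}^{-1}(\accrate)(x)~,
\]
which is well defined by Lemma~\ref{lem:pol_sim_lemma}: the choice of representative of the equivalence class $\ratef_{\distj}^{-1}(\accrate)$ is irrelevant because $\distj \circ \ratef_{\distj}^{-1}(\accrate)$ is well defined.

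For strict monotonicity I would invoke Lemma~\ref{lem:der_comp} exactly as in the proof of Proposition~\ref{prop:beta_concavity}. Since $\distj \circ \ratef_{\distj}^{-1}(\accrate)$ has right derivative $\bolde_{\RCDF(\accrate)}$ and left derivative $\bolde_{\RCDFpl(\accrate)}$, linearity of the inner product gives
\[
\partial_+ \constfj(\accrate) = \boldwj(\RCDF(\accrate)) \quad\text{and}\quad \partial_- \constfj(\accrate) = \boldwj(\RCDFpl(\accrate))~.
\]
Both are values of $\boldwj$ at points of $\X$, hence strictly positive by the hypothesis $\boldw(x)>0$. As both one-sided derivatives of this piecewise-linear function are strictly positive throughout $[0,1]$, $\constfj$ is strictly increasing, and in particular injective.

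Continuity of $\constfj$ follows immediately from the continuity of $\distj \circ \ratef_{\distj}^{-1}$ asserted in Lemma~\ref{lem:der_comp}. It then remains to pin down the endpoints: $\ratef_{\distj}^{-1}(0)$ selects a $\distj$-null set while $\ratef_{\distj}^{-1}(1)$ selects $\distj$-almost everything, so that $\distj \circ \ratef_{\distj}^{-1}(0) = \boldzero$ and $\distj \circ \ratef_{\distj}^{-1}(1) = \distj$, giving $\constfj(0)=0$ and $\constfj(1) = \langle \distj, \boldwj\rangle$. Strict monotonicity and continuity together with these endpoint values, via the intermediate value theorem, show that $\constfj$ maps $[0,1]$ continuously, strictly monotonically, and onto $[0, \langle \distj, \boldwj\rangle]$, hence is a bijection onto the claimed interval.

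I expect no serious obstacle: the statement is essentially a corollary of Lemma~\ref{lem:der_comp}. The only points demanding care are (i) confirming that the one-sided derivatives are positive \emph{everywhere} on $[0,1]$, which uses that $\RCDF(\accrate)$ and $\RCDFpl(\accrate)$ always lie in $\X$ so that the strict positivity hypothesis on $\boldw$ always applies, and (ii) verifying the boundary values, which relies on the convention fixing the representatives of $\ratef_{\distj}^{-1}(0)$ and $\ratef_{\distj}^{-1}(1)$ as the $\distj$-a.e.\ reject-all and accept-all policies.
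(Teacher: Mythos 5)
Your proposal is correct and follows essentially the same route as the paper: both compute the one-sided derivatives of $\accrate \mapsto \langle \boldwj, \distj \circ \ratef_{\distj}^{-1}(\accrate)\rangle$ via Lemma~\ref{lem:der_comp}, obtain $\boldwj(\RCDF(\accrate))$ and $\boldwj(\RCDFpl(\accrate))$, conclude strict monotonicity (hence injectivity) from the positivity hypothesis, and get surjectivity from the endpoint policies at $\accrate=0$ and $\accrate=1$. Your explicit invocation of continuity plus the intermediate value theorem to cover the full interval is a slightly more careful rendering of the surjectivity step that the paper leaves implicit.
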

		Hence, for any $t \in [0,t_{\max}]$, the mapping from TPR to acceptance rate, $\constfj^{-1}(t)$, is well defined and any solution to~\eqref{eq:hard_const_eqop_3} is $\distj$-a.e. equal to the policy $\ratef_{\distj}^{-1}(\constfj^{-1}(t))$. Thus~\eqref{eq:hard_const_eqop_2} reduces to 
		\begin{eqnarray}\label{eq:hard_const_eqop_accrate}
		\max_{t \in [0,t_{\max}]} \sum_{j \in \pops}\fracj \Util\supj\left(\ratef_{\distj}^{-1}\left(\constfj^{-1}(t)\right)\right)\:.
		\end{eqnarray}

	The above expression parametrizes the optimization problem in terms of a single variable. We shall show that the above expression is in fact a \emph{concave} function in $t$, and hence the set of optimal selection rates can be characterized by first order conditions. This is presented formally in the following theorem:
\begin{thm}[Selection rates for $\eqop$]\label{thm:eqop_selection}
	The set of optimal selection rates $\accrate^*$ for group~$\popa$ satsifying~\eqref{eq:hard_const_eqop_2} forms a continuous interval $[\accrate_{\eqop}^-,\accrate_{\eqop}^+]$, such that for any $\accrate \in [0,1]$, we have
	\begin{eqnarray*}
		\accrate < \accrate_{\eqop}^- &\text{if}& \fraca\frac{\util(\RCDF\supa(\accrate))}{\boldwa(\RCDF\supa(\accrate))} + \fracb\frac{\util(\RCDF\supb(\transfer_{\boldw}(\accrate)))}{\boldwb(\RCDF\supb(\transfer_{\boldw}(\accrate)))} > 0 \:,\\
		\accrate > \accrate_{\eqop}^+ &\text{if}& \fraca\frac{\util(\RCDFpl\supa(\accrate))}{\boldwa(\RCDFpl\supa(\accrate))} + \fracb\frac{\util(\RCDFpl\supb(\transfer_{\boldw}(\accrate)))}{\boldwb(\RCDFpl\supb(\transfer_{\boldw}(\accrate)))} < 0 ~.
		\end{eqnarray*}
		Here, $\transfer_{\boldw}(\accrate) := \constf_{\popb,\boldwb}^{-1}(\constf_{\popa,\boldwa}^{-1}(\accrate))$ denotes the (well-defined) map from selection rates $\accratea$ for $\popa$ to the selection rate $\accrateb$ for $\popb$ such that the policies $\pola^* := \ratef_{\dista}^{-1}(\accratea)$ and $\polb^* := \ratef_{\distb}^{-1}(\accrateb)$ satisfy the constraint in~\eqref{eq:hard_const_eqop}. 
		\end{thm}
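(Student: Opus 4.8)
The plan is to mirror the proof of Theorem~\ref{thm:dem_par_selection}, exploiting that the excerpt has already reduced the $\eqop$ program~\eqref{eq:hard_const_eqop_2} to the single-variable optimization~\eqref{eq:hard_const_eqop_accrate}, namely maximizing
\[
F(t) := \sum_{\popj \in \pops}\fracj\,\Util\supj\!\left(\ratef_{\distj}^{-1}\!\left(\constf_{\popj,\boldwj}^{-1}(t)\right)\right)
\]
over $t \in [0,t_{\max}]$, where Lemma~\ref{lem:bijection_tpr_accrate} guarantees that each $\constf_{\popj,\boldwj}$ is an increasing bijection. First I would establish that $F$ is concave in $t$. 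Once concavity is in hand, the set of maximizers is automatically a closed interval $[t^-,t^+]$, and since $\accrate = \constf_{\popa,\boldwa}^{-1}(t)$ is an increasing bijection this transports to an interval $[\accrate_{\eqop}^-,\accrate_{\eqop}^+]$ of selection rates for $\popa$, exactly as claimed. The first-order characterization then follows by computing one-sided derivatives and invoking concavity, precisely as in the $\dempar$ case.

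The heart of the argument---and the genuine departure from $\dempar$---is the concavity of $F$. The naive route stalls: each summand is a composition $\Util\supj \circ \ratef_{\distj}^{-1}\circ \constf_{\popj,\boldwj}^{-1}$ of a concave increasing map (Proposition~\ref{prop:beta_concavity}) with the generally nonlinear reparametrization $\constf_{\popj,\boldwj}^{-1}$, and such a composition need not be concave. Instead I would argue directly from piecewise linearity. Both $\accrate \mapsto \Util\supj(\ratef_{\distj}^{-1}(\accrate)) = \langle \util, \distj \circ \ratef_{\distj}^{-1}(\accrate)\rangle$ and $\accrate \mapsto \constf_{\popj,\boldwj}(\accrate) = \langle \boldwj, \distj \circ \ratef_{\distj}^{-1}(\accrate)\rangle$ are piecewise linear with the \emph{same} breakpoints, and by Lemma~\ref{lem:der_comp} their right slopes at $\accrate$ are $\util(\RCDF\supj(\accrate))$ and $\boldwj(\RCDF\supj(\accrate))$ respectively (left slopes use $\RCDFpl\supj$). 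Consequently, re-expressed against $t = \constf_{\popj,\boldwj}(\accrate)$, the summand $\phi_j(t) := \Util\supj(\ratef_{\distj}^{-1}(\constf_{\popj,\boldwj}^{-1}(t)))$ is piecewise linear with right slope
\[
\partial_+\phi_j(t) = \frac{\util(\RCDF\supj(\constf_{\popj,\boldwj}^{-1}(t)))}{\boldwj(\RCDF\supj(\constf_{\popj,\boldwj}^{-1}(t)))}.
\]
Since $\constf_{\popj,\boldwj}^{-1}$ is increasing while $\RCDF\supj$ is non-increasing, the threshold $\RCDF\supj(\constf_{\popj,\boldwj}^{-1}(t))$ decreases as $t$ grows; the assumption that $\util(x)/\pb(x)$ (equivalently $\util(x)/\boldwj(x)$, as $\boldwj \propto \pb$ with positive constant) is increasing in $x$ then forces $\partial_+\phi_j$ to be non-increasing in $t$. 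Hence each $\phi_j$ is concave, and $F = \sum_j \fracj \phi_j$ is a non-negative combination of concave functions, hence concave.

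With concavity secured, I would finish exactly as in Theorem~\ref{thm:dem_par_selection}: summing the per-group slopes gives
\[
\partial_+ F(t) = \fraca\frac{\util(\RCDF\supa(\constf_{\popa,\boldwa}^{-1}(t)))}{\boldwa(\RCDF\supa(\constf_{\popa,\boldwa}^{-1}(t)))} + \fracb\frac{\util(\RCDF\supb(\constf_{\popb,\boldwb}^{-1}(t)))}{\boldwb(\RCDF\supb(\constf_{\popb,\boldwb}^{-1}(t)))},
\]
together with the analogous expression for $\partial_- F$ in which every $\RCDF$ is replaced by $\RCDFpl$. Substituting $t = \constf_{\popa,\boldwa}(\accrate)$, so that $\accrate = \constf_{\popa,\boldwa}^{-1}(t)$ is the $\popa$-selection rate and $\constf_{\popb,\boldwb}^{-1}(t) = \transfer_{\boldw}(\accrate)$ by definition of the transfer map, turns these derivatives into precisely the two expressions in the theorem statement. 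By concavity of $F$, a strictly positive right derivative at $t$ certifies $t < t^-$, equivalently $\accrate < \accrate_{\eqop}^-$; a strictly negative left derivative certifies $\accrate > \accrate_{\eqop}^+$.

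The main obstacle I anticipate is this concavity step: one must resist the tempting but invalid composition argument and instead track the ratio of the two aligned piecewise-linear slopes, $\util(\RCDF\supj)/\boldwj(\RCDF\supj)$, along their common breakpoints. This is exactly where the hypothesis that $\util/\pb$ is increasing becomes indispensable, and it is also where care is needed to match the left/right slopes ($\RCDF\supj$ versus $\RCDFpl\supj$) consistently through the inverse $\constf_{\popj,\boldwj}^{-1}$ at kinks, so that the derivative formulas displayed above are valid one-sided derivatives.
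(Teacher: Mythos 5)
Your proposal is correct and follows essentially the same route as the paper: the paper also reduces to the single-variable problem in $t$, establishes concavity and the one-sided derivative formulas of each summand $\Util\supj(\ratef_{\distj}^{-1}(\constfj^{-1}(t)))$ via the slope ratio $\util(\RCDF\supj(\cdot))/\boldwj(\RCDF\supj(\cdot))$ (this is exactly Lemma~\ref{lem:const_to_util_derv}, proved in the appendix alongside Lemma~\ref{lem:bijection_tpr_accrate}), and then applies first-order conditions and the bijection $\constfj$ to transport the interval of optimal $t$ to an interval of selection rates. Your explicit warning against the naive concave-composition argument and your direct tracking of the aligned piecewise-linear slopes is precisely the content the paper compresses into its ``one can verify'' step, relying on the monotonicity of $\util(x)/\boldw(x)$ just as you do.
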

		\begin{proof}  Starting with the equivalent problem in~\eqref{eq:hard_const_eqop_accrate}, we use the concavity result of Lemma~\ref{lem:const_to_util_derv}. Because the objective function is the positive weighted sum of two concave functions, it is also concave. Hence, all optimal true positive rates $t^*$ lie in an interval $[t^-,t^+]$. To further characterize $[t^-,t^+]$, we can compute left- and right-derivatives, again using the result of Lemma~\ref{lem:const_to_util_derv}.
		\begin{eqnarray*}
		\partial_+ \sum_{j \in \pops}\fracj \Util\supj\left(\ratef_{\distj}^{-1}(\constfj^{-1}(t))\right) &=& 
		\fraca \partial_+\Util\supa\left(\ratef_{\dista}^{-1}(\constfa^{-1}(t))\right) + \fraca \partial_+\Util\supa\left(\ratef_{\dista}^{-1}(\constfa^{-1}(t))\right)\\
		&=& \fraca\frac{\util(\RCDF\supa(\constfa^{-1}(t)))}{\boldwa(\RCDF\supa(\constfa^{-1}(t)))} + \fracb\frac{\util(\RCDF\supb(\constfb^{-1}(t)))}{\boldwb(\RCDF\supb(\constfb^{-1}(t)))}
		\end{eqnarray*}
		The same argument shows that 
		\begin{eqnarray*}
		\partial_- \sum_{j \in \pops}\fracj \Util\supj\left(\ratef_{\distj}^{-1}(\constfj^{-1}(t))\right) = \fraca\frac{\util(\RCDFpl\supa(\constfa^{-1}(t))}{\boldwa(\RCDFpl\supa(\constfa^{-1}(t)))} + \fracb\frac{\util(\RCDFpl\supb(\constfb^{-1}(t)))}{\boldwb(\RCDFpl\supb(\constfb^{-1}(t)))}.	
		\end{eqnarray*} 
		By concavity, a positive right derivative at $t$ implies that $t <  t^*$ for all $t^*$ satisfying~\eqref{eq:hard_const_eqop_accrate}, and similarly, a negative left derivative at $t$ implies that $t >  t^*$ for all $t^*$ satisfying~\eqref{eq:hard_const_eqop_accrate}.

		Finally, by Lemma~\ref{lem:bijection_tpr_accrate}, this interval in $t$ uniquely characterizes an interval of acceptance rates. Thus we translate directly into a statement about the selection rates $\accrate$ for group~$\popa$ by seeing that $\constfa^{-1}(t)=\accrate$ and $\constfb^{-1}(t) = \transfer_{\boldw}(\accrate)$. 
		\end{proof}

		Lastly, we remark that the results derived in this section go through verbatim for any linear constraint of the form $\langle \boldw, \dista \circ \pola \rangle = \langle \boldw, \distb \circ \polb \rangle$, as long as $\util(x)/\boldw(x)$ is increasing in $x$, and $\boldw(x) > 0$.

\section{Simulations}

We examine the outcomes induced by fairness constraints in the context of FICO scores for two race groups. FICO scores are a proprietary classifier widely used in the United States to predict credit worthiness. Our FICO data is based on a sample of 301,536 TransUnion TransRisk scores from 2003 \citep{fed07}, preprocessed by \citet{hardt16equality}. These scores, corresponding to $x$ in our model, range from 300 to 850 and are meant to predict credit risk. Empirical data labeled by race allows us to estimate the distributions $\pi\supj$, where $\popj$ represents race, which is restricted to two values: white non-Hispanic (labeled ``white" in figures), and black. Using national demographic data, we set the population proportions to be $18\%$ and $82\%$.

Individuals were labeled as defaulted if they failed to pay a debt for at least 90 days on at least one account in the ensuing 18-24 month period; we use this data to estimate the success probability given score, $\pb\supj(x)$, which we allow to vary by group to match the empirical data (see Figure~\ref{fig:empirical_payback_probs}). Our outcome curve framework allows for this relaxation; however, this discrepancy can also be attributed to group-dependent mismeasurement of score, and adjusting the scores accordingly would allow for a single $\pb(x)$. We use the success probabilities to define the affine utility and score change functions defined in Example~\ref{ex:credit_lending}. We model individual penalties as a score drop of $c_-=-150$ in the case of a default, and in increase of $c_+=75$ in the case of successful repayment.

\begin{figure*}[t] 
   \centering
   \includegraphics[width=0.7\textwidth]{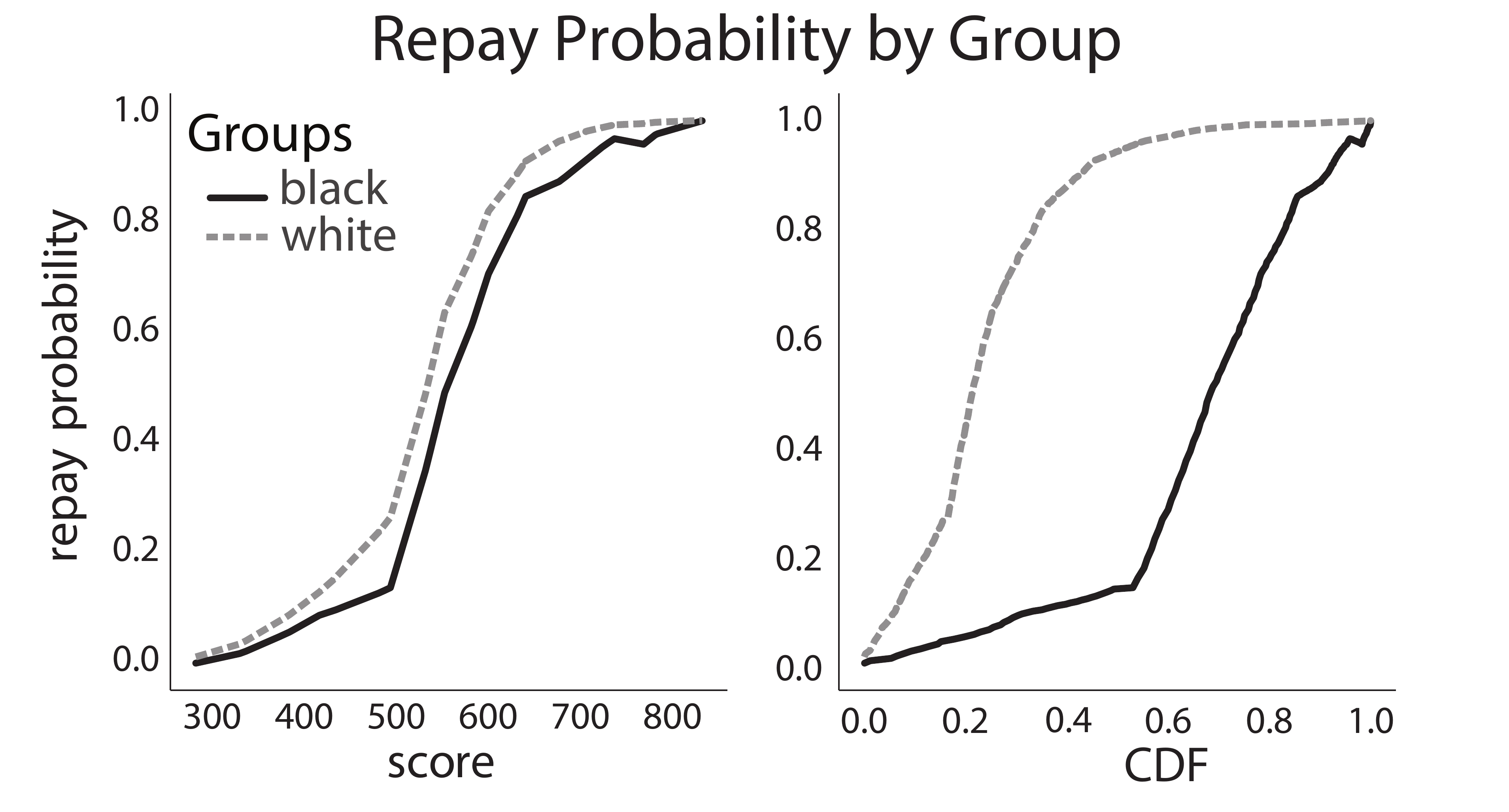}
   \caption{The empirical payback rates as a function of credit score and CDF for both groups from the TransUnion TransRisk dataset.}
   \label{fig:empirical_payback_probs}
\end{figure*}

\begin{figure*}[t] 
   \centering
   \includegraphics[width=0.7\textwidth]{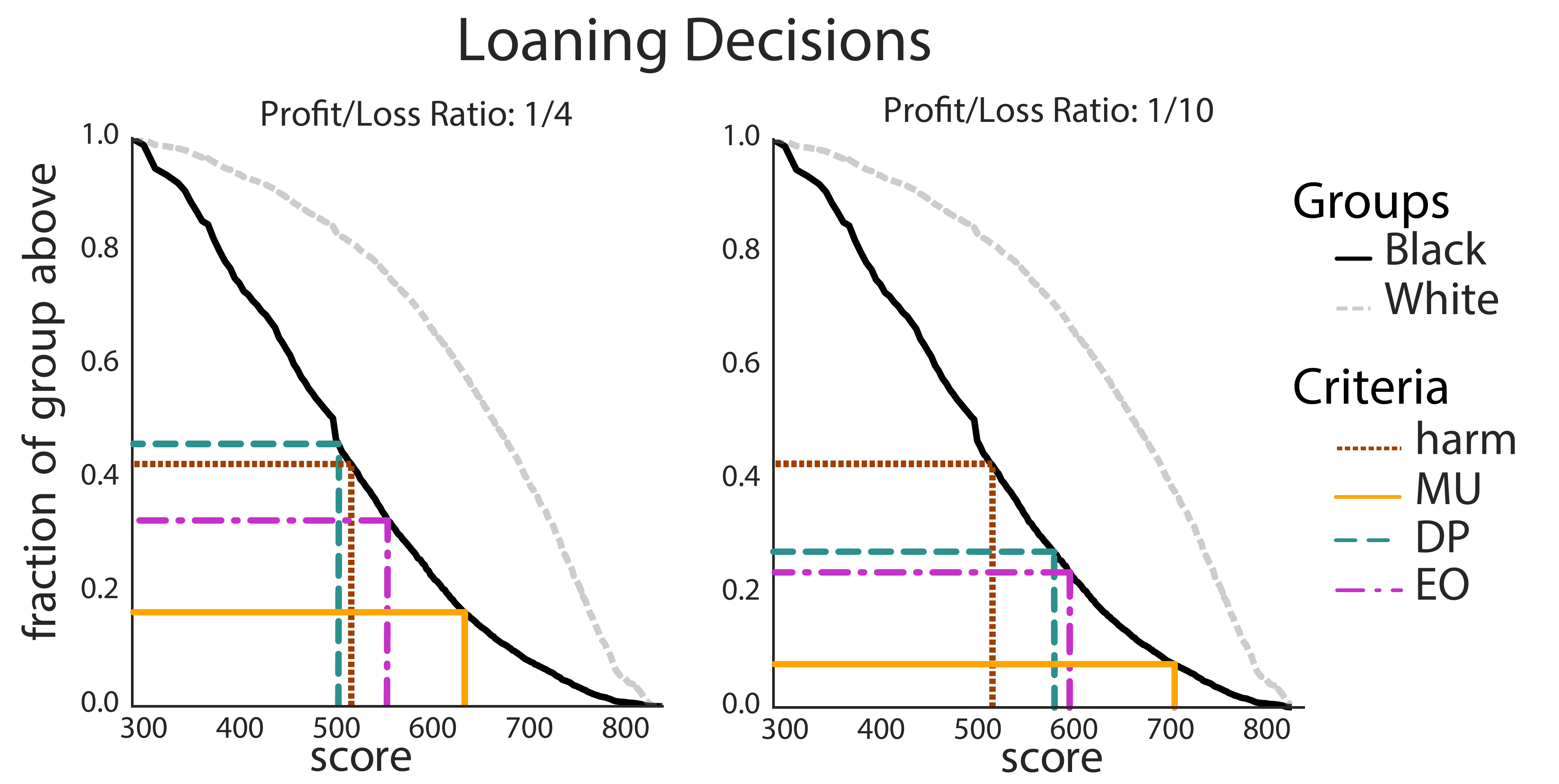}
   \caption{The empirical CDFs of both groups are plotted along with the decision thresholds resulting from $\maxprof$, $\dempar$, and $\eqop$ for a model with bank utilities set to (a) $\frac{u_-}{u_+} = -4$ and (b) $\frac{u_-}{u_+} = -10$. The threshold for active harm is displayed; in (a) $\dempar$ causes active harm while in (b) it does not. $\eqop$ and $\maxprof$ never cause active harm.}
   \label{fig:empirical_outcome1}
\end{figure*}

In Figure~\ref{fig:empirical_outcome1}, we display the empirical CDFs along with selection rates resulting from different loaning strategies for two different settings of bank utilities. In the case that the bank experiences a loss/profit ratio of $\frac{u_-}{u_+} = -10$, no fairness criteria surpass the active harm rate $\accrate_0$; however, in the case of $\frac{u_-}{u_+} = -4$, $\dempar$ overloans, in line with the statement in Corollary~\ref{cor:dp_overeager}.

These results are further examined in Figure~\ref{fig:empirical_outcome}, which displays the normalized outcome curves and the utility curves for both the white and the black group. To plot the $\maxprof$ utility curves, the group that is not on display has selection rate fixed at $\beta^{\maxprof}$. In this figure, the top panel corresponds to the average change in credit scores for each group under different loaning rates $\accrate$; the bottom panels shows the corresponding \emph{total} utility $\Util$ (summed over both groups and weighted by group population sizes) for the bank. 

Figure~\ref{fig:empirical_outcome} highlights that the position of the utility optima in the lower panel determines the loan (selection) rates. In this specific instance, the utility and change ratios are fairly close, $\frac{u_-}{u_+} = -4$, and $\frac{c_-}{c_+} = -2$, meaning that the bank's profit motivations align with individual outcomes to some extent. Here, we can see that $\eqop$ loans much closer to optimal than $\dempar$, similar to the setting suggested by Corollary~\ref{cor:fairness_rel_imp}.

Although one might hope for decisions made under fairness constraints to positively affect the black group, we observe the opposite behavior. The $\maxprof$ policy (solid orange line) and the $\eqop$ policy result in similar expected credit score change for the black group. However, $\dempar$ (dashed green line) causes a negative expected credit score change in the black group, corresponding to active harm. For the white group, the bank utility curve has almost the same shape under the fairness criteria as it does under $\maxprof$, the main difference being that fairness criteria lowers the total expected profit from this group.

This behavior stems from a discrepancy in the outcome and profit curves for each population. While incentives for the bank and positive results for individuals are somewhat aligned for the majority group, under fairness constraints, they are more heavily misaligned in the minority group, as seen in graphs (left) in Figure~\ref{fig:empirical_outcome}. We remark that in other settings where the \emph{unconstrained} profit maximization is misaligned with individual outcomes (e.g., when $\frac{u_-}{u_+} = -10$), fairness criteria may perform more favorably for the minority group by pulling the utility curve into a shape consistent with the outcome curve.

By analyzing the resulting affects of $\maxprof$, $\dempar$, and $\eqop$ on actual credit score lending data, we show the applicability of our model to real-world applications. In particular, some results shown in Section~\ref{sec:results} hold empirically for the FICO TransUnion TransRisk scores.

\begin{figure}[p] 
	\centering
	\includegraphics[width=.9\columnwidth]{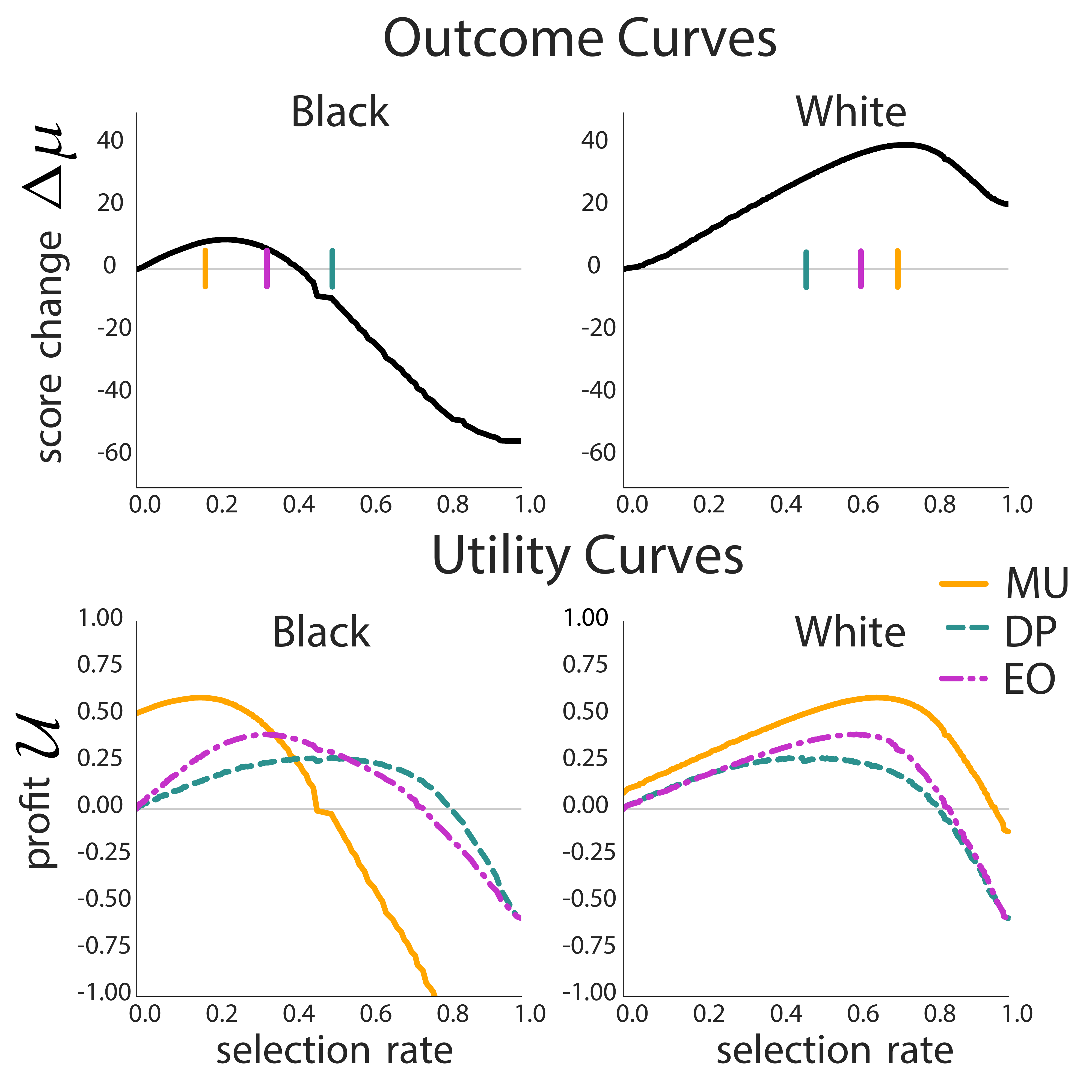}
	\caption{The outcome and utility curves are plotted for both groups against the group selection rates. The relative positions of the utility maxima determine the position of the decision rule thresholds. We hold $\frac{u_-}{u_+} = -4$ as fixed. 
	}
	\label{fig:empirical_outcome}
\end{figure}


\section{Conclusion and Future Work}

We argue that without a careful model of delayed outcomes, we
cannot foresee the impact a fairness criterion would have if enforced as a
constraint on a classification system. 
However, if such an accurate outcome model is available, we show that there are more direct 
ways to optimize for positive outcomes than via existing fairness criteria.

Our formal framework exposes a concise, yet expressive way to model outcomes via the expected
change in a variable of interest caused by an institutional decision. This leads to the natural
concept of an outcome curve that allows us to interpret and compare solutions
effectively.
In essence, the formalism we propose requires us to understand the two-variable
causal mechanism that translates decisions to outcomes. Depending on the
application, such an understanding might necessitate greater domain knowledge
and additional research into the specifics of the application. This is
consistent with much scholarship that points to the context-sensitive nature of
fairness in machine learning.

An interesting direction for future work is to consider other characteristics of
impact beyond the change in population \emph{mean}. Variance and
individual-level outcomes are natural and important considerations. Moreover, it
would be interesting to understand the robustness of outcome optimization to
modeling and measurement errors.

\section*{Acknowledgements}
We thank Lily Hu, Aaron Roth, and Cathy O'Neil for discussions and feedback on an earlier version of the manuscript. We thank the students of CS294: Fairness in Machine Learning (Fall 2017, University of California, Berkeley) for inspiring class discussions and comments on a presentation that was a precursor of this work. This material is based upon work supported by the National Science Foundation Graduate Research Fellowship under Grant No. DGE 1752814.

\pagebreak
\bibliographystyle{plainnat}
\bibliography{mybib.bib}

\begin{appendix}
\section{Optimality of Threshold Policies}

\subsection{Proof of Lemma~\ref{lem:pol_sim_lemma}\label{sec:lem_pol_sim}} 
	
	We begin with the first statement of the lemma. Suppose $\polj \simj \polj'$. Then there exists a set $\calS \subset \X$ such that $\distj(x) = 0$ for all $x \in \calS$, and for all $x \notin \calS$, $\polj(x) = \polj'(x)$. Thus, 
	\begin{eqnarray*}
	\ratef_{\dist}(\polj) - \ratef_{\distj}(\polj') &=& \sum_{x \in \X}\distj(x)(\polj(x) - \polj'(x)) \\
	&=& \sum_{x \in \calS}\distj(x)(\polj(x) - \polj'(x)) = 0\:. \\
	\end{eqnarray*}
	Conversely, suppose that $\ratefj(\polj) = \ratefj(\polj')$. 
	Let $\polj = \pol_{c,\gamma}$ and $\polj' = \pol_{c',\gamma'}$ as in Definition~\ref{def:monotone_pols}. We now have the following cases:
	\begin{enumerate}
		\item Case 1: $c = c'$. Then $\polj(x) = \polj'(x)$ for all $x \in \X -\{c\}$. Hence, 
		\begin{eqnarray*}
		0 = \ratef_{\dist}(\polj) - \ratef_{\distj}(\polj') = \dist(x)(\polj(c) - \polj'(c)) \:.
		\end{eqnarray*}
		This implies that either $\polj(c) = \polj'(c)$, and thus $\polj(x) = \polj'(x)$ for all $x \in \X$, or otherwise $\dist(c) = 0$, in which case we still have $\polj \simj \polj'$ (since the two policies agree every outside the set $\{c\}$). 
		\item Case 2: $c \ne c'$. We assume assume without loss of generality that $c' < c \le \maxscore$. Since the policies $\pol_{c',1}$ and $\pol_{c'+1,0}$ are identity for $c' < \maxscore$, we may also assume without loss of generality that $\gamma' \in [0,1)$. Thus for all $x \in \calS:= \{c',c'+1,\dots,C\}$, we have $\polj'(x) < \polj(x)$. This implies that 
		\begin{eqnarray*}
		0 &=& \ratef_{\dist}(\polj) - \ratef_{\distj}(\polj') \\
		&=& \sum_{x \in \calS} \distj(x)(\polj(x) - \polj'(x)) \\
		&\ge& \min_{x \in \calS}(\polj(c) - \polj'(x)) \cdot  \sum_{x \in \calS} \dist(x)\:.
		\end{eqnarray*}
		Since $\min_{x \in \calS}(\polj(c) - \polj'(x))  > 0$, it follows that $\sum_{x \in \calS} \distj(x) = 0$, whence $\polj \simj \polj'$.  
		\end{enumerate}

		Next, we show that $\ratef_{\dist}$ is a bijection from $\Monotone(\dist) \to [0,1]$. That $\ratef_{\dist}$ is injective follows immediately from the fact if $\ratefj(\pol) = \ratefj(\polj')$, then $\polj \simj \polj'$. To show it is surjective, we exhibit for every $\accrate \in [0,1]$ a threshold policy $\pol_{c,\gamma}$ for which $\ratefj(\pol_{c,\gamma}) = \accrate$. We may assume $\accrate < 1$, since the all-ones policy has a selection rate of $1$.

		Recall the definition of the inverse CDF 
		\begin{eqnarray*}
		\RCDFj(\accrate) := \argmax\{c:\sum_{x=c}^C\dist(x) > \accrate\}\:.
		\end{eqnarray*}
		Since $\accrate < 1$, $\RCDFj(\accrate) \le \maxscore$. Let $\accrate_+ = \sum_{x=\RCDFj(\accrate)}^C\dist(x)$, and let $\accrate_- = \sum_{x=\RCDFj(\accrate) + 1}^C\dist(x)$. Note that by definition, we have $\accrate_- \le \accrate < \accrate_+$, and $\accrate_+ - \accrate_- = \dist(\RCDFj(\accrate))$. Hence, if we define $\gamma = \frac{\accrate - \accrate_-}{\accrate_+ - \accrate_-}$, we have 
		\begin{eqnarray*}
		\ratef_{\distj}(\pol_{\RCDFj(\accrate), \gamma}) =  \dist(\RCDFj(\accrate)) \gamma + \sum_{x=\RCDFj(\accrate) + 1}^C\dist(x) = \accrate_- + (\accrate_+ - \accrate_-)\gamma = \accrate_- + \accrate - \accrate_- = \accrate\:.
		\end{eqnarray*}

\subsection{Proof of Lemma~\ref{lem:optimal_best}~\label{sec:optimal_best_proof}}
	Given $\pol \in [0,1]^{\maxscore}$, 
	we define the \emph{normal cone} at $\pol$ as $\NC(\pol):= \mathrm{ConicalHull}\{\boldz: \pol + \boldz \in [0,1]^{\maxscore}\}$. 
	We can describe $\NC(\pol)$ explicitly as:
	\begin{eqnarray*}\label{eq:normal_cone_def}
	 \NC(\pol) := \{\boldz \in \R^{\maxscore} : \boldz_i \le 0 \text{ if } \pol_i = 0, \boldz_i \ge 0 \text{ if } \pol_i = 1\}\:.
	\end{eqnarray*}
	Immediately from the above definition, we have the following useful identity, which is that for any vector $\boldg \in \R^\maxscore$,
	\begin{eqnarray}\label{eq:normal_cone_first_order}
	\langle \boldg, \boldz \rangle \le 0 ~\forall \boldz \in \NC(\pol),  &\text{if and only if}& \forall x\in \X, \begin{cases} \pol(x) = 0 & \boldg(x) < 0\\
	\pol(x) = 1 & \boldg(x) > 0\\
	\pol(x) \in [0,1] & \boldg(x) = 0
	\end{cases}\:.
	\end{eqnarray}

	Now consider the optimization problem~\eqref{eq:optim}. By the first order KKT conditions, we know that for any optimizer $\pol_*$ of the above objective, there exists some $\lamhat \in \R$ such that, for all $\boldz \in \NC(\pol_*)$
	\begin{eqnarray*}
	\langle \boldz, \boldv \circ \dist + \lamhat \dist \circ \boldw \rangle \le 0\:.
	\end{eqnarray*}
	By~\eqref{eq:normal_cone_first_order}, we must have that
	\begin{eqnarray*}
	\pol_*(x) = \begin{cases} 0 & \dist(x)(\boldv(x) + \lamhat\boldw(x))< 0\\
	1 & \dist(x)(\boldv(x) + \lamhat\boldw(x)) > 0\\
	 \in [0,1] & \dist(x)(\boldv(x) + \lamhat\boldw(x)) = 0
	\end{cases}\:.
	\end{eqnarray*}
	Now $\pol_*(x)$ is not necessarily a threshold policy. To conclude the theorem, it suffices to exhibit a threshold policy $\widetilde{\pol}_*$ such that $\pol_*(x) \simpi \widetilde{\pol}_*$. (Note that $\widetilde{\pol}_*(x)$ will also be feasible for the constraint, and have the same objective value; hence $\widetilde{\pol}_*$ will be optimal as well.)

	Given $\pol_*$ and $\lamhat$, let $c_* = \min\{ c \in \X: \boldv(x) + \lamhat\boldw(x) \ge 0\}$. If either (a) $\boldw(x) = 0$ for all $x \in \X$ and $\boldv(x)$ is strictly increasing or (b) $\boldv(x)/\boldw(x)$ is strictly increasing, then the modified policy 
	\begin{eqnarray*}
	\widetilde{\pol}_*(x) = \begin{cases} 0 & x < c_*\\
	\pol_*(x) & x = c_* \\
	1 & x > c_*
	\end{cases}~,
	\end{eqnarray*}
	is a threshold policy, and $\pol_*(x) \simpi \widetilde{\pol}_*$. Moreover, $\langle \boldw,\widetilde{\pol}_* \rangle = \langle \boldw,\widetilde{\pol}_* \rangle$ and $\langle \dist,\widetilde{\pol}_* \rangle = \langle \dist,\widetilde{\pol}_* \rangle$, which implies that $\widetilde{\pol}_*$ is an optimal policy for the objective in Lemma~\ref{lem:optimal_best}. 

\subsection{Proof of Lemma~\ref{lem:der_comp}~\label{sec:der_comp_proof}}
	We shall prove 
	\begin{eqnarray}
	\partial_+ \left(\distj \circ \ratef_{\distj}^{-1}(\accrate)\right) = \bolde_{\RCDFj(\beta)}~,
	\end{eqnarray}
	where the derivative is with respect to $\beta$.
	The computation of the left-derivative is analogous. Since we are concerned with right-derivatives, we shall take $\accrate \in [0,1)$.
	Since $\distj \circ \ratef_{\distj}^{-1}(\accrate)$ does not depend on the choice of representative for $\ratef_{\distj}^{-1}$, we can choose a cannonical representation for $\ratef_{\distj}^{-1}$. 
	In Section~\ref{sec:lem_pol_sim}, we saw that the threshold policy $\pol_{\RCDFj(\accrate),\gamma(\accrate)}$ had acceptance rate $\accrate$, where we had defined 
	\begin{eqnarray}
	&&\accrate_+ = \sum_{x=\RCDFj(\accrate)}^C\dist(x)~\text{ and }~\accrate_- = \sum_{x=\RCDFj(\accrate) + 1}^C\dist(x)\:,\\
	&&\gamma(\accrate) = \frac{\accrate - \accrate_-}{\accrate_+ - \accrate_-}~.
	\end{eqnarray}
	Note then that for each $x$, $\pol_{\RCDFj(\accrate),\gamma(\accrate)}(x)$ is piece-wise linear, and thus admits left and right derivatives. We first claim that
	\begin{eqnarray}\label{eq:right_der_not_ex}
	\forall x \in \X \setminus \{\RCDFj(\accrate)\},~~\partial_+ \pol_{\RCDFj(\accrate),\gamma(\accrate)}(x) = 0 \:.
	\end{eqnarray}
	To see this, note that $\RCDFj(\accrate)$ is right continuous, so for all $\epsilon$ sufficiently small, $\RCDFj(\accrate + \epsilon) = \RCDFj(\accrate)$. Hence, for all $\epsilon$ sufficiently small and all $x \ne \RCDF(\accrate)$, we have $\pol_{\RCDFj(\accrate+\epsilon),\gamma(\accrate + \epsilon)}(x) = \pol_{\RCDFj(\accrate+\epsilon),\gamma(\accrate + \epsilon)}(x)$, as needed. 
	Thus, Equation~\eqref{eq:right_der_not_ex} implies that $\partial_+ \distj \circ \ratefj^{-1}(\accrate)$ is supported on $x = \RCDFj(\accrate)$, and hence
	\begin{eqnarray*}
	\partial_+ \distj \circ \ratefj^{-1}(\accrate) = \partial_+ \distj(x)\pol_{\RCDFj(\accrate),\gamma(\accrate)}(x) \big{|}_{x = \RCDFj(\accrate)} \cdot \bolde_{\RCDFj(\accrate)}\:.
	\end{eqnarray*}

	To conclude, we must show that $\partial_+ \distj(x)\pol_{\RCDFj(\accrate),\gamma(\accrate)}(x) \big{|}_{x = \RCDFj(\accrate)} = 1$. To show this, we have
	\begin{eqnarray*}
	1 &=& \partial_+(\accrate)\\
	&=& \partial_+(\ratefj(\pol_{\RCDFj(\accrate),\gamma(\accrate)}))\quad\text{since}\quad\quad\ratefj(\pol_{\RCDFj(\accrate),\gamma(\accrate)}) = \accrate ~\forall \accrate \in [0,1)\\
	&=& \partial_+\left(\sum_{x \in \X} \dist(x) \cdot \pol_{\RCDFj(\accrate),\gamma(\accrate)}(x)\right)\\
	&=& \partial_+  \dist(x) \cdot \pol_{\RCDFj(\accrate),\gamma(\accrate)}(x)\big{|}_{x = \RCDFj(\accrate)}~, \text{ as needed. }
	\end{eqnarray*}

	\section{Characterization of Fairness Solutions\label{sec:char_fairness_solns}}

\subsection{Derivative Computation for $\eqop$\label{sec:char_fairness_solns_eqopp}}
In this section, we prove Lemma~\ref{lem:bijection_tpr_accrate}, which we recall below.
\bijectiontpr*
We will prove Lemma~\ref{lem:bijection_tpr_accrate} in tandem with the following derivative computation which we applied in the proof of Theorem~\ref{thm:eqop_selection}.

\begin{lem}\label{lem:const_to_util_derv}
	The function
	\begin{eqnarray*}
	\Util\supj(t;\boldwj) := \Util\supj\left(\ratef_{\distj}^{-1}\left(\constfj^{-1}(t )\right)\right)
	\end{eqnarray*}
	is concave in $t$ and has left and right derivatives
	\begin{eqnarray*}
	\partial_+ \Util\supj(t;\boldwj) =  \frac{\util(\RCDFj(\constfj^{-1}(t)))}{\boldwj(\RCDFj(\constfj^{-1}(t)))} 
	&\text{and}& \partial_- \Util\supj(t;\boldwj) = \frac{\util(\RCDFplj(\constfj^{-1}(t)))}{\boldwj(\RCDFplj(\constfj^{-1}(t)))}\:.
	\end{eqnarray*}
	\end{lem}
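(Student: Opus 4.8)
The plan is to prove Lemma~\ref{lem:const_to_util_derv} and Lemma~\ref{lem:bijection_tpr_accrate} in tandem, since both rest on the one-sided derivatives of $\constfj$. First I would write $\constfj(\accrate) = \langle \boldwj, \distj \circ \ratef_{\distj}^{-1}(\accrate)\rangle$ and differentiate using Lemma~\ref{lem:der_comp}, obtaining
\[
\partial_+ \constfj(\accrate) = \langle \boldwj, \bolde_{\RCDFj(\accrate)}\rangle = \boldwj(\RCDFj(\accrate)), \qquad \partial_- \constfj(\accrate) = \boldwj(\RCDFplj(\accrate)).
\]
Since $\boldwj(x) > 0$ for all $x$, both one-sided derivatives are strictly positive everywhere, so $\constfj$ is a strictly increasing, continuous, piecewise-linear function. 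Evaluating at the endpoints---$\ratef_{\distj}^{-1}(0)$ is the reject-all policy and $\ratef_{\distj}^{-1}(1)$ the accept-all policy---gives $\constfj(0) = 0$ and $\constfj(1) = \langle \distj, \boldwj\rangle$, so $\constfj$ is a bijection onto $[0, \langle \distj, \boldwj\rangle]$. This proves Lemma~\ref{lem:bijection_tpr_accrate} and guarantees that $\constfj^{-1}$ is a well-defined increasing function.

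Next I would compute the derivatives of $\Util\supj(t;\boldwj) = h(\constfj^{-1}(t))$, where $h(\accrate) := \Util\supj(\ratef_{\distj}^{-1}(\accrate)) = \langle \util, \distj \circ \ratef_{\distj}^{-1}(\accrate)\rangle$. By Lemma~\ref{lem:der_comp}, $\partial_+ h(\accrate) = \util(\RCDFj(\accrate))$ and $\partial_- h(\accrate) = \util(\RCDFplj(\accrate))$. Because $\constfj$ is an increasing piecewise-linear bijection with strictly positive slopes, the one-sided inverse-function rule gives $\partial_+ \constfj^{-1}(t) = 1/\boldwj(\RCDFj(\accrate))$ and $\partial_- \constfj^{-1}(t) = 1/\boldwj(\RCDFplj(\accrate))$, where $\accrate = \constfj^{-1}(t)$. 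Composing via the one-sided chain rule---valid because $\constfj^{-1}$ is increasing, so a right (left) perturbation in $t$ induces a right (left) perturbation in $\accrate$---then yields exactly the claimed formulas for $\partial_\pm \Util\supj(t;\boldwj)$.

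Finally, concavity follows from the same three-part characterization used in the proof of Proposition~\ref{prop:beta_concavity}: the one-sided derivatives exist (just computed); $\partial_- \Util\supj(t;\boldwj) \ge \partial_+ \Util\supj(t;\boldwj)$ since $\RCDFj(\accrate) \le \RCDFplj(\accrate)$ and $x \mapsto \util(x)/\boldwj(x)$ is increasing; and the cross-monotonicity condition $\partial_- \Util\supj(t;\boldwj) \le \partial_+ \Util\supj(s;\boldwj)$ for $t > s$ holds because $\constfj^{-1}$ is increasing while both quantile functions are non-increasing. The monotonicity of $\util/\boldwj$ is the standing assumption that $\util(x)/\pb(x)$ is increasing, recalling $\boldwj \propto \pb$.

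The main obstacle I anticipate is the careful handling of the one-sided chain rule together with the one-sided inverse-function relation: one must verify that reparametrizing by the increasing map $\constfj^{-1}$ pairs right derivatives with right derivatives (and left with left) rather than swapping them, and that the reciprocal-of-slope formula for $\partial_\pm \constfj^{-1}$ is correct at the finitely many kinks where the slope of $\constfj$ changes. Both facts follow from the strict monotonicity and piecewise-linearity of $\constfj$, but this is the step that requires genuine care rather than routine computation.
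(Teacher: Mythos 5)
Your proposal is correct and follows essentially the same route as the paper's own proof: both differentiate $\constfj$ via Lemma~\ref{lem:der_comp}, use $\boldwj>0$ to get strict monotonicity and hence the bijection of Lemma~\ref{lem:bijection_tpr_accrate}, invert the one-sided derivatives, compose by the one-sided chain rule, and then verify concavity through the one-sided derivative characterization. The only difference is that you spell out the cross-monotonicity check (via the monotonicity of $\RCDFj$, $\RCDFplj$ and of $\util/\boldwj$) that the paper leaves as ``one can verify.''
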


\begin{proof}[Proof of Lemmas~\ref{lem:bijection_tpr_accrate}~and~\ref{lem:const_to_util_derv}] Consider a $\accrate \in [0,1]$. Then, $\distj \circ \ratef_{\distj}^{-1}(\accrate)$ is continuous and left and right differentiable by Lemma~\ref{lem:der_comp}, and its left and right derivatives are indicator vectors $\bolde_{\RCDFj(\accrate)}$ and $\bolde_{\RCDFplj(\accrate)}$, respectively. 
	Consequently, $\accrate \mapsto \langle \boldwj, \distj \circ \ratef_{\distj}^{-1}(\accrate) \rangle$ has left and right derivatives $\boldwj(\RCDF(\accrate))$ and $\boldwj(\RCDFpl(\accrate))$, 
	respectively; both of which are both strictly positive by the assumption $\boldw(x) > 0$. 
	Hence, $\constfj(\accrate) = \langle \boldwj, \distj \circ \ratef_{\distj}^{-1}(\accrate) \rangle$ is strictly increasing  in $\accrate$, and so the map is injective. 
	It is also surjective because $\accrate = 0$ induces the policy $\polj = \mathbf{0}$ and $\accrate = 1$ induces the policy $\polj = \mathbf{1}$ (up to $\distj$-measure zero). Hence, $\constfj(\accrate)$ is an order preserving bijection with left- and right-derivatives, and we can compute the left and right derivatives of its inverse as follows:
	\begin{eqnarray*}
	\partial_+ \constfj^{-1}(t) = \frac{1}{\partial_+ \constfj(\accrate) \big{|}_{\accrate = \constfj^{-1}(t)}} = \frac{1}{\boldwj(\RCDFj(\constfj^{-1}(t)))}~,
	\end{eqnarray*}
	and similarly, $\partial_- \constfj^{-1}(t) = \frac{1}{\boldwj(\RCDFpl(\constfj^{-1}(t)))}$. Then we can compute that
	\begin{eqnarray*}
	\partial_+ \Util\supj(\ratef_{\distj}(\constfj^{-1}(t))) &=& \partial_+\Util(\ratef_{\distj}(\accrate)) \big{|}_{\accrate = \constfj^{-1}(t))} \cdot \partial_+ \constfj(\sup(t))\\
	&=& \frac{\util(\RCDFj(\constfj^{-1}(t)))}{\boldwj(\RCDFj(\constfj^{-1}(t)))}~.
	\end{eqnarray*}
	and similarly $\partial_- \Util\supj(\ratef_{\distj}(\constfj(t))) = \frac{\Util(\RCDFplj(\constfj^{-1}(t)))}{\boldwj(\RCDFplj(\constfj^{-1}(t)))}$. One can verify that for all $t_1 < t_2$, one has that $\partial_+ \Util\supj(\ratef_{\distj}(\constfj^{-1}(t_1))) \ge \partial_- \Util\supj(\ratef_{\distj}(\constfj^{-1}(t_2)))$, and that for all $t$, $\partial_+ \Util\supj(\ratef_{\distj}(\constfj^{-1}(t))) \le \partial_- \Util\supj(\ratef_{\distj}(\constfj^{-1}(t)))$. These facts establish that the mapping $t\mapsto \Util\supj(\ratef_{\distj}(\constfj^{-1}(t)))$ is concave. 
	\end{proof}

\subsection{Characterizations Under Soft Constraints}
		
		Given a convex penalty $\Phi:\R \to \R_{\ge 0}$, and $\lambda \in \R_{\ge 0}$, one can write down the general form for soft constrained utility optimization
		\begin{eqnarray}\label{eq:soft_const}
		\max_{\pol = (\pola,\polb)}\Util(\pol) - \lambda \Phi(\langle \boldwa \circ \dista, \pola \rangle - \langle \boldwb \circ \distb, \polb \rangle)~,
		\end{eqnarray} 
		where $\boldwa$ and $\boldwb$ represent generic constraints. Again, we shall assume that for $\popj \in \pops$, $\util(x)/\boldwj(x)$ is non-decreasing.  Recall that for $\boldwj = (1,1,\dots,1)$, one recovers the soft version of $\dempar$, whereas for $\boldwj = \frac{\pb}{\langle \pb, \distj\rangle}$, one recovers the soft constrained version of $\eqop$. 

		The same argument presented in Section~\ref{sec:eqop_proof} shows that the optimal policies are of the form
		\begin{eqnarray*}
		\polj = \ratef_{\distj}^{-1}(\constfj^{-1}(t\supj))~,
		\end{eqnarray*}
		where $(t\supa,t\supb)$ are solutions to the following optimization problem:  
		\begin{eqnarray*}
		\max_{t\supa \in [0,\langle \dista, \boldwa \rangle],t\supb \in [0,\langle \distb, \boldwb \rangle]}~\fraca \Utila(\ratef_{\dista}^{-1}(\constfa^{-1}(t\supa))) + \fracb\Utilb(\ratef_{\distb}^{-1}(\constfb^{-1}(t\supb))) - \lambda \Phi(t\supa - t\supb)\:.
		\end{eqnarray*}
		The following lemma gives us a first order characterization of these optimal TPRs, $(t\supa, t\supb)$.
		\begin{lem}\label{lem:first_order_op} All optimal policies are equivalent to threshold policies with selection rate $(\accrate\supa,\accrate\supb)$ which satisfy
		\begin{eqnarray}
		\begin{bmatrix} 0 \\
		0 
		\end{bmatrix} \in
		\begin{bmatrix}
	\left[ \frac{\util(\RCDF\supa(\accratea))}{\boldwa(\RCDF\supa(\accrateb))}  - \lambda  \partial_+  \Phi(\Delta) ,\frac{\util(\RCDFpl\supa(\accratea))}{\boldwa(\RCDFpl\supa(\accratea))} - \lambda  \partial_-  \Phi(\Delta) \right] \\
	\left[ \frac{\util(\RCDF\supb(\accrateb))}{\boldwb(\RCDF\supb(\accrateb))}  + \lambda \partial_- \Phi(\Delta), \frac{\util(\RCDFpl\supb(\accrate))}{\boldwb(\RCDFpl\supb(\accrateb))} + \lambda \partial_+ \Phi(\Delta)  \right]
	\end{bmatrix}~,
		\end{eqnarray}
		where $\Delta = t\supa - t\supb = \constfa(\accratea) - \constfb(\accrateb)$.

		\end{lem}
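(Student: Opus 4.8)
The plan is to reduce the soft-constrained problem to the finite-dimensional, jointly concave program in the true-positive-rate variables $(t\supa,t\supb)$ displayed just before the lemma, and then obtain the stated inclusion as its coordinatewise first-order optimality condition. As the text observes, the argument of Section~\ref{sec:eqop_proof} applies verbatim to show that every optimal pair $(\pola,\polb)$ is $\distj$-a.e.\ equal to a threshold policy $\polj = \ratef_{\distj}^{-1}(\constfj^{-1}(t\supj))$; hence it suffices to characterize the maximizers $(t\supa,t\supb)$ of
\[
F(t\supa,t\supb) := \fraca\, \Util\supa\!\left(\ratef_{\dista}^{-1}(\constfa^{-1}(t\supa))\right) + \fracb\,\Util\supb\!\left(\ratef_{\distb}^{-1}(\constfb^{-1}(t\supb))\right) - \lambda\,\Phi(t\supa - t\supb)
\]
over the box $[0,\langle\dista,\boldwa\rangle]\times[0,\langle\distb,\boldwb\rangle]$, and then translate back via the bijections $t\supj = \constfj(\accratej)$ of Lemma~\ref{lem:bijection_tpr_accrate}.

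First I would verify that $F$ is jointly concave. By Lemma~\ref{lem:const_to_util_derv} each map $t \mapsto \Util\supj(t;\boldwj)$ is concave, so with $\fraca,\fracb \ge 0$ the first two terms are concave; the coupling term $-\lambda\Phi(t\supa - t\supb)$ is concave because $\Phi$ is convex, $\lambda \ge 0$, and $(t\supa,t\supb)\mapsto t\supa - t\supb$ is affine. Being a sum of concave functions, $F$ is jointly concave. Consequently, at any maximizer the restriction of $F$ to each coordinate (with the other held fixed) is a concave one-dimensional function attaining its maximum, so a necessary condition is that $0$ belong to each partial superdifferential. This coordinatewise stationarity is exactly the inclusion in the lemma.

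The main step is to compute the two partial superdifferentials with the correct orientation. For the concave utility pieces, Lemma~\ref{lem:const_to_util_derv} gives (at $t\supj = \constfj(\accratej)$, so that $\constfj^{-1}(t\supj) = \accratej$) the right derivative $\util(\RCDFj(\accratej))/\boldwj(\RCDFj(\accratej))$ and the left derivative $\util(\RCDFplj(\accratej))/\boldwj(\RCDFplj(\accratej))$, with right $\le$ left as for any concave function; their superdifferential is the interval between them. For the convex penalty, the subdifferential of $\Phi$ at $\Delta := t\supa - t\supb = \constfa(\accratea)-\constfb(\accrateb)$ is $[\partial_-\Phi(\Delta),\partial_+\Phi(\Delta)]$ with left $\le$ right. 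The chain rule contributes a coefficient $+1$ in the $t\supa$-coordinate and $-1$ in the $t\supb$-coordinate, and multiplication by $-\lambda \le 0$ then supplies one further sign flip; the net effect is a single orientation reversal in the $\popa$-row and none in the $\popb$-row. Adding the utility and penalty contributions coordinatewise and dropping the strictly positive weights $\fracj$ (since $0$ lies in a positively scaled interval iff it lies in the original) yields precisely the two intervals of the displayed matrix.

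The step I expect to demand the most care is this sign bookkeeping: concave and convex functions order their one-sided derivatives in opposite senses, and the $-\lambda$ factor together with the asymmetric chain-rule signs determines whether $\partial_+\Phi$ or $\partial_-\Phi$ pairs with the right or left endpoint in each row --- which is what produces the deliberately asymmetric placement of $\partial_\pm\Phi$ across the two rows of the lemma. A secondary subtlety is the box constraint: at an endpoint selection rate ($\accratej \in \{0,1\}$) the clean stationarity $0 \in \partial$ must be replaced by inclusion in the superdifferential plus the normal cone of the box, exactly as in the KKT argument of Lemma~\ref{lem:optimal_best}; away from the boundary the stated inclusion is the full necessary condition.
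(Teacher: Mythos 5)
Your proposal follows essentially the same route as the paper's proof: parametrize optimal policies by the constraint values $(t\supa,t\supb)$ via the threshold-policy reduction of Section~\ref{sec:eqop_proof}, observe that the resulting objective is concave (utility terms concave by Lemma~\ref{lem:const_to_util_derv}, penalty term concave since $-\lambda\Phi$ of an affine map is concave), impose $0$ in each coordinate's superdifferential, and evaluate the one-sided derivatives with the chain-rule signs $+1$ and $-1$ on the two coordinates of $\Delta$. Your sign bookkeeping and your remark about the boundary of the box (where the normal cone must be added, as in Lemma~\ref{lem:optimal_best}) are both sound, and the latter is a point the paper's proof glosses over.

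One caveat: your justification for ``dropping the strictly positive weights $\fracj$'' does not work as stated. The first-order condition in the $\popa$-coordinate is $0 \in \fraca\,\partial\Util\supa - \lambda\,\partial\Phi(\Delta)$, a Minkowski sum of a $\fraca$-scaled interval with a $\lambda$-scaled one, and $0$ belonging to $\fraca I_1 + I_2$ is not equivalent to $0$ belonging to $I_1 + I_2$; the weights can only be removed by rescaling $\lambda$ per coordinate, which changes the statement. This discrepancy is inherited from the paper itself, whose proof silently replaces $\sum_\popj \fracj\,\Util\supj$ by $\sum_\popj \Util\supj$ in the first line of the supergradient computation, so the lemma as displayed omits the $\fracj$ factors that the stated objective would produce. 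Your argument is otherwise complete; either carry the $\fracj$ through the final intervals or state explicitly that $\lambda$ has been rescaled.
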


	\begin{proof}

	Let $\partial (\cdot)$ denote the super-gradient set of a concave function. Note that if $F$ is left-and-right differentiable and concave, then $\partial F(x) = [\partial_+ F(x), \partial_- F(x)]$. 
	By concavity of $\Utilj$ and convexity of $\Phi$, we must have that 
	\begin{eqnarray*}
	\begin{bmatrix}
	0 \\ 
	0 
	\end{bmatrix} &\in& \partial \sum_{\popj \in \pops}\Util\supj\left(\ratef_{\distj}^{-1}\left(\constfj^{-1}(t\supj)\right)\right) - \lambda \Phi(t\supa - t\supb) \\
	&=&
	\begin{bmatrix}
	\partial \Util\supa\left(\ratef_{\dista}^{-1}(\constfa^{-1}(t\supa))\right) +  \partial_{t\supa} \{- \lambda \Phi(t\supa - t\supb)\} \\ 
	\partial \Util\supa\left(\ratef_{\distb}^{-1}(\constfb^{-1}(t\supb))\right) + \partial_{t\supb} \{- \lambda \Phi(t\supa - t\supb)\} \\
	\end{bmatrix}\\
	&=& \begin{bmatrix}
	\partial \Util\supa\left(\ratef_{\dista}^{-1}(\constfa^{-1}(t\supa))\right) - \lambda  \partial  \Phi(t)\big{|}_{t = t\supa - t\supb} \\
	\partial \Util\supb\left(\ratef_{\distb}^{-1}(\constfb^{-1}(t\supb))\right) + \lambda \partial \Phi(t) \big{|}_{t = t\supa - t\supb} \\
	\end{bmatrix}\\
	&=& \begin{bmatrix}
	[\partial_+ \Util\supa\left(\ratef_{\dista}^{-1}(\constfa^{-1}(t\supa))\right) - \lambda  \partial_+  \Phi(t)\big{|}_{t = t\supa - t\supb} , \partial_- \Util\supa\left(\ratef_{\dista}^{-1}(\constfa^{-1}(t\supa))\right) - \lambda  \partial_-  \Phi(t)\big{|}_{t = t\supa - t\supb}] \\
	[\partial_+ \Util\supb\left(\ratef_{\distb}^{-1}(\constfb^{-1}(t\supb))\right) + \lambda \partial_- \Phi(t) \big{|}_{t = t\supa - t\supb}, \partial_- \Util\supb\left(\ratef_{\distb}^{-1}(\constfa^{-1}(t\supb))\right) + \lambda \partial_+ \Phi(t) \big{|}_{t = t\supa - t\supb}] \\
	\end{bmatrix}\\
	&=& \begin{bmatrix}
	\left[\frac{\util(\RCDF\supa(\constfa^{-1}(t\supa)))}{\boldwa(\RCDF\supa(\constfa^{-1}(t\supa)))} - \lambda  \partial_+  \Phi(t)\big{|}_{t = t\supa - t\supb} , \frac{\util(\RCDFpl\supa(\constfa^{-1}(t\supa)))}{\boldwa(\RCDFpl\supa(\constfa^{-1}(t\supa)))}  - \lambda  \partial_-  \Phi(t)\big{|}_{t = t\supa - t\supb}\right] \\
	\left[\frac{\util(\RCDF\supb(\constfb^{-1}(t\supb)))}{\boldwb(\RCDF\supb(\constfb^{-1}(t\supb)))} + \lambda \partial_- \Phi(t) \big{|}_{t = t\supa - t\supb}, \frac{\util(\RCDFpl\supb(\constfb^{-1}(t\supb)))}{\boldwb(\RCDFpl\supb(\constfb^{-1}(t\supb)))}  + \lambda \partial_+ \Phi(t) \big{|}_{t = t\supa - t\supb}\right] \\
	\end{bmatrix}\\
	&=& \begin{bmatrix}
	\left[\frac{\util(\RCDF\supa(\accratea))}{\boldwa(\RCDF\supa(\accratea))} - \lambda  \partial_+  \Phi(t)\big{|}_{t = t\supa - t\supb} , \frac{\util(\RCDFpl\supa(\accratea))}{\boldwa(\RCDFpl\supa(\accrateb))}  - \lambda  \partial_-  \Phi(t)\big{|}_{t = t\supa - t\supb}\right] \\
	\left[\frac{\util(\RCDF\supb(\accrate))}{\boldwb(\RCDF\supb(\accrateb))} + \lambda \partial_- \Phi(t) \big{|}_{t = t\supa - t\supb}, \frac{\util(\RCDFpl\supb(\accrateb))}{\boldwb(\RCDFpl\supb(\accrateb))}  + \lambda \partial_+ \Phi(t) \big{|}_{t = t\supa - t\supb}\right] 
	\end{bmatrix}~.
	\end{eqnarray*}
	Substituting $\Delta = t\supa - t\supb = \constfa(\accratea) - \constfb(\accrateb)$ concludes the proof. 
	\end{proof}
	In general, a closed form solution for the soft constrained problem may be difficult to state. However, for the case of $\Phi(t) = |t|$, we can state an explicit closed form solution:
	\begin{prop}[Special case of $\Phi(t) = |t|$]\label{prop:abs_val_soft_constrainted} Let $\Phi(t) = |t|$, fix $\lambda$, and let $[\accratea^{\lambda,-},\accratea^{\lambda,+}]$ denote the interval of optimal selection rates for Equation~\eqref{eq:soft_const} with regularization $\lambda$. Finally, suppose that for any optimal $\maxprof$ selection rates $(\accratea^\maxprof,\accrateb^\maxprof)$, one has $\constfa(\accratea^\maxprof) < \constfb(\accrateb^\maxprof)$.  
	Let $[\accratea^-,\accratea^+]$ denote the optimal loan rates in~\eqref{eq:soft_const}. Then there exists a $\lambda_*$ such that, for $\lambda \ge \lambda_*$,  $[\accratea^-,\accratea^+]$ coincides with the hard constrained solution. Moreover, for $\lambda < \lambda_*$, any $\accrate \in [0,1]$ satifies
	\begin{eqnarray*}
		\accrate < \accratea^{\lambda,-} &\text{if}& \fraca\frac{\util(\RCDF\supa(\accrate))}{\boldwa(\RCDF\supa(\accrate))} + \sigma_* \lambda > 0 \\
		\accrate > \accratea^{\lambda,+} &\text{if}& \fraca\frac{\util(\RCDFpl\supa(\accrate))}{\boldwa(\RCDFpl\supa(\accrate))} + \sigma_* \lambda < 0 ~.
		\end{eqnarray*}
	\end{prop}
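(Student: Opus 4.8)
The plan is to specialize the first-order characterization of Lemma~\ref{lem:first_order_op} to $\Phi(t) = |t|$ and exploit the fact that the subdifferential of the absolute value is piecewise constant. Concretely, $\partial_+\Phi(t) = \partial_-\Phi(t) = \sign(t)$ whenever $t \neq 0$, while $\partial_+\Phi(0) = 1$ and $\partial_-\Phi(0) = -1$, so that $\partial\Phi(0) = [-1,1]$. The crucial observation is that once the \emph{sign} of the true-positive-rate gap $\Delta = t\supa - t\supb = \constfa(\accratea) - \constfb(\accrateb)$ is fixed and nonzero, the regularization subgradient $\lambda\,\partial\Phi(\Delta) = \pm\lambda$ is a constant, and the coupled system in Lemma~\ref{lem:first_order_op} \emph{decouples} into two independent concave first-order conditions, one per group. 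This decoupling is exactly what makes a closed form available. Throughout I write $D_A^+(\accratea) = \util(\RCDF\supa(\accratea))/\boldwa(\RCDF\supa(\accratea))$ and its $\RCDFpl$-analogue $D_A^-$, which are nonincreasing in $\accratea$ by the concavity in Lemma~\ref{lem:const_to_util_derv}, and include the population weight $\fraca$ as it appears in the proposition statement.

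Next I would pin the sign of $\Delta$ using the disadvantage hypothesis. At $\lambda = 0$ the soft objective in~\eqref{eq:soft_const} is the $\maxprof$ objective, so the optimum has gap $\Delta(0) = \constfa(\accratea^{\maxprof}) - \constfb(\accrateb^{\maxprof}) < 0$ by assumption. I would then define $\lambda_*$ as the magnitude of the Lagrange multiplier $\nu^*$ associated with the equality constraint $t\supa = t\supb$ in the hard-constrained problem~\eqref{eq:hard_const_eqop_2} (equivalently, the infimal $\lambda$ admitting an optimal solution with $\Delta = 0$). For $\lambda \geq \lambda_*$ the hard-constrained solution is soft-optimal: plugging $\Delta = 0$ into Lemma~\ref{lem:first_order_op} and using $\partial\Phi(0) = [-1,1]$, the two group conditions are simultaneously satisfied by choosing the common effective multiplier $\lambda s$ with $s = \nu^*/\lambda \in [-1,1]$, which is feasible precisely because $\lambda \geq |\nu^*| = \lambda_*$. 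Hence the soft solution coincides with the hard-constrained ($\dempar$ or $\eqop$) solution, proving the first assertion.

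For $\lambda < \lambda_*$ I would show the gap remains strictly negative and decoupling applies. A monotone comparative-statics argument on the decoupled group-$\popa$ condition $0 \in [\fraca D_A^+(\accratea) + \lambda,\ \fraca D_A^-(\accratea) + \lambda]$ shows, since $D_A^\pm$ are nonincreasing, that the optimal rate $\accratea^\lambda$ is nondecreasing in $\lambda$; symmetrically the group-$\popb$ condition $0 \in [\fracb D_B^+(\accrateb) - \lambda,\ \fracb D_B^-(\accrateb) - \lambda]$ makes $\accrateb^\lambda$ nonincreasing. Therefore $\Delta(\lambda) = \constfa(\accratea^\lambda) - \constfb(\accrateb^\lambda)$ is nondecreasing and rises from $\Delta(0)<0$ to $0$ exactly at $\lambda_*$, so $\Delta < 0$ for $\lambda < \lambda_*$. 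This gives $\partial\Phi(\Delta) = \{-1\}$, hence $\sigma_* := -\partial_+\Phi(\Delta) = +1$, the sign dictated by the disadvantage assumption. Substituting $\partial_\pm\Phi(\Delta) = -1$ into the group-$\popa$ row of Lemma~\ref{lem:first_order_op} collapses it to the single-variable concave condition $0 \in [\fraca D_A^+(\accratea) + \sigma_*\lambda,\ \fraca D_A^-(\accratea) + \sigma_*\lambda]$; a positive right endpoint then forces $\accrate < \accratea^{\lambda,-}$ and a negative left endpoint forces $\accrate > \accratea^{\lambda,+}$, which are exactly the two stated inequalities.

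The main obstacle is the monotonicity/self-consistency step: one must rule out that the regularizer overshoots $\Delta = 0$ (making $\Delta > 0$) for $\lambda < \lambda_*$, and must handle the piecewise-linear kinks where the optimal set is an interval and the one-sided derivatives are set-valued. Both are resolved by the concavity of Lemma~\ref{lem:const_to_util_derv}: each group objective is concave in its TPR with monotone one-sided derivatives, so the decoupled maximizers move monotonically under the constant subgradient shift $\pm\lambda$, and $\Delta(\lambda)$ is a genuine monotone, continuous function of $\lambda$ crossing zero once at $\lambda_*$. Continuity of $\Delta(\lambda)$ together with the fact that $\partial\Phi(0)$ is the full interval $[-1,1]$ guarantees that the two regimes agree at $\lambda_*$.
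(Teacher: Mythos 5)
Your proof is correct, and the heart of it --- computing $\partial\Phi(t)=\{-1\}$ for $t<0$, pinning $\Delta = t\supa - t\supb < 0$ via the disadvantage hypothesis $\constfa(\accratea^{\maxprof}) < \constfb(\accrateb^{\maxprof})$, and substituting the resulting constant subgradient into the group-$\popa$ row of Lemma~\ref{lem:first_order_op} to obtain the decoupled condition $0 \in \bigl[\fraca\,\util(\RCDF\supa(\accratea))/\boldwa(\RCDF\supa(\accratea)) + \lambda,\ \fraca\,\util(\RCDFpl\supa(\accratea))/\boldwa(\RCDFpl\supa(\accratea)) + \lambda\bigr]$ --- is exactly the computation the paper performs. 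Where you diverge is in how the threshold $\lambda_*$ and the sign-stability of $\Delta$ are established. The paper outsources both to the general interpolation result (Proposition~\ref{prop:soft_interpolation}): it invokes that proposition to get that $\Delta$ is monotone in $\lambda$ and that the regime $\Delta=0$, once entered, persists for all larger $\lambda$; it then shows a finite such $\lambda$ exists simply by noting that the displayed first-order condition cannot hold for any $\accratea$ once $\lambda$ exceeds the (finite) range of $\util/\boldw$, so $\lambda_*$ is defined implicitly as the onset of the $\Delta=0$ regime. You instead give a self-contained argument special to $\Phi(t)=|t|$: you identify $\lambda_*$ explicitly as the magnitude of the Lagrange multiplier of the hard equality constraint (the classical exact-penalty threshold), verify optimality of the hard-constrained solution for $\lambda\ge\lambda_*$ by exhibiting the effective multiplier $s=\nu^*/\lambda\in[-1,1]$ inside $\partial\Phi(0)$, and derive the monotonicity of $\Delta(\lambda)$ directly from comparative statics on the two decoupled concave conditions (using that the one-sided derivatives in Lemma~\ref{lem:const_to_util_derv} are nonincreasing). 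Your route buys an interpretable characterization of $\lambda_*$ and avoids relying on the heavier machinery of Proposition~\ref{prop:soft_interpolation}, at the cost of re-proving a special case of its monotonicity statement; the paper's route is shorter given that proposition but leaves $\lambda_*$ implicit. One small point to tighten in your write-up: for $\lambda\ge\lambda_*$ you show the hard-constrained solution is soft-optimal, but to conclude that $[\accratea^-,\accratea^+]$ \emph{coincides} with the hard-constrained interval you should also note (as you implicitly do via monotonicity and the exclusion of $\Delta\ne 0$ solutions) that no soft optimum with $\Delta\ne0$ survives past $\lambda_*$; the paper handles this via the persistence clause of Proposition~\ref{prop:soft_interpolation}.
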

	\begin{proof} Given a set of optimal constraint values $(t\supa, t\supb) = (\constfa(\accratea),\constfb(\accrateb))$ for optimal selection rates $(\accratea,\accrateb)$ for a given parameter $\lambda$. By Proposition~\ref{prop:soft_interpolation} below, it follows that if $t\supa = t\supb$ for all optimal solutions, then for all $\lambda' \ge \lambda$, all optimal solutions must also have $t\supa = t\supb$. 

	Hence, it suffices to show that (a) there exists a finite $\lambda$ such that all solutions must have $t\supa = t\supb$, and (b) if $t\supa \ne t\supb$, then the display in~\eqref{prop:abs_val_soft_constrainted} holds. 

	To prove (a) and (b), suppose $t\supa \ne t\supb$. By Proposition~\ref{prop:soft_interpolation} below and the fact that $\constfa(\accrate^\maxprof) < \constfb(\accrateb^\maxprof)$, we have $t\supa < t\supb$. Moreover we can compute that
	\begin{eqnarray*}
	\partial \Phi(t) = \begin{cases} \{1\} & t > 0\\
	[-1,1] & t = 0 \\
	\{-1\} & t < 0 \end{cases}
	\end{eqnarray*}
	it follows from the first order condition in Lemma~\ref{lem:first_order_op} that, if $t\supa \ne t\supb$
	\begin{eqnarray}\label{eq:first_order_absval}
	0 \in [ \frac{\util(\RCDFpl\supa(\accratea))}{\boldwa(\RCDFpl\supa(\accratea))} + \lambda  , \frac{\util(\RCDF\supa(\accratea))}{\boldwa(\RCDF\supa(\accrateb))}  + \lambda ]~,
	\end{eqnarray}
	which immediately implies point (b). Point (a) follows from the above display by noting that, since $\boldwj(x) > 0$ and $\util(x) < \infty$ for all $x$, where exists a $\lambda$ sufficiently large such that~\eqref{eq:first_order_absval} cannot hold for any $\accratea$.
	\end{proof}

\subsection{Qualitative Behavior of Soft Constraints}\label{sec:soft_constraints}
	
	We now present a proposition which formalizes the intuition that soft constraints interpolate between $\maxprof$ and the general hard constraint~\eqref{eq:hard_const_eqop}  in Section~\ref{sec:eqop_proof} (for arbitrary $\boldw$, not just for $\eqop$). Because optimal policies may not be unique, we define the solution sets 
	\begin{eqnarray*}
	\mathbf{P}(\lambda) &:=& \{ (\pola,\polb) : (\pola,\polb) \text{ solves}~\eqref{eq:soft_const} \text{ with parameter } \lambda \}~,
	\end{eqnarray*}
	with the set $\mathbf{P}(\infty)$ denoting the set of solutions to~\eqref{eq:hard_const_eqop}.

	At a high level, we parameterize the soft constrained solution in terms of the value of the constraint $t\supa =\langle \pola,\boldwa \circ \dista \rangle$ for $\popa$ and the difference in constraint values $\Delta =\langle \pola,\boldwa \circ \dista \rangle - \langle \polb,\boldwb \circ \distb \rangle$, where $(\pola,\polb) \in \mathbf{P}(\lambda)$. We show that $t\supa$ interpolates between the value of the constraint on $\popa$ at $\lambda = 0$ and at $\lambda = \infty$, and that $\Delta$ interpolates between the difference at $\lambda = 0$ ($\maxprof$) and at $\Delta = 0$ at $\lambda = \infty$. To be rigorous, we note that the possible values for $t\supa$ and $\Delta$ for each $\lambda$ are actually contiguous intervals. Hence, to make the interpolation precise, we define the following partial order on such intervals:
	\begin{defn}[Interval order] Let $\calS_1,\calS_2$ be two intervals. We say that $\calS_1~\prec~\calS_2$ if $\max~\{x \in \calS_1~\} < \min~\{x \in \calS_2\}$ and  $\calS_1~\preceq~ \calS_2$ if both $\max~\{x \in \calS_1 \}~\le~\max~\{x \in \calS_2\}$ and $\min \{x \in \calS_1 \}~\le~\min~\{x \in \calS_2\}$.	We say that an interval-valued function $\calS(\lambda)$ is \emph{non-decreasing} (resp. \emph{non increasing}) in $\lambda$ if $\calS(\lambda)~\preceq~\calS(\lambda')$ (resp $\calS(\lambda ')~\preceq~\calS(\lambda')$ for $\lambda~ \le~\lambda'$). 
	\end{defn}
	In these terms, the interpolation of the soft constraints can be stated as follows:
	\begin{prop}[Soft constraints interpolate between $\maxprof$ and hard constrained solution]\label{prop:soft_interpolation} Let $\Phi(t)$ be a convex, symmetric convex function with $\Phi(t) > 0$ for $t > 0$. Then the sets
	\begin{eqnarray*}
	\mathcal{D}(\lambda) &:=& \{\Delta := \langle \pola,\boldwa \circ \dista \rangle - \langle \polb,\boldwb \circ \distb \rangle  :(\pola,\polb) \in \mathbf{P}(\lambda)\}\\
	 \mathcal{T}\supa(\lambda) &:=& \{t\supa := \langle \pola,\boldwa \circ \dista \rangle | \exists \polb: (\pola,\polb) \in \mathbf{P}(\lambda) \}
	\end{eqnarray*}
	are closed intervals. Moreover, 
	\begin{enumerate}
	\item In all cases, $\lim_{\lambda \to \infty} \max \{|\Delta| \in \mathcal{D}(\lambda)\} = 0$.
	\item If $0 \in \mathcal{D}(\lambda) $, then there exists a $\maxprof$ solution satisfying~\eqref{eq:hard_const_eqop}. Thus, for all $\lambda > 0$, $\mathbf{P}(\lambda) = \mathbf{P}(\infty)$.
	\item If $\mathcal{D}(\lambda) \prec \{0\}$, then $\mathcal{D}(\lambda)$ and $\mathcal{T}\supa(\lambda)$ are non-decreasing on $\lambda \in (0,\infty]$, and vice versa if $\mathcal{D}(\lambda) \succ \{0\}$. 
	\item If $\mathcal{D}(\lambda) \prec \{0\}$, then $\{0\} = \mathcal{D}(\infty) \succeq \mathcal{D}(\lambda) \succeq \{\min:\Delta \in \mathcal{D}(0)\}$, and $\mathcal{T}\supa(\infty) \succeq \mathcal{T}\supa(\lambda) \succeq \{\min:\Delta \in \mathcal{T}\supa(\lambda)\}$, and vice versa if $\mathcal{D}(\lambda) \succ \{0\}$.
\end{enumerate}
	\end{prop}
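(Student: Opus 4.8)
The plan is to pass to the scalar reduction used throughout Section~\ref{sec:eqop_proof}. By the argument proving Proposition~\ref{prop:opt_threshold_fair} for $\eqop$ (which applies verbatim to any positive weights $\boldwj$ with $\util/\boldwj$ increasing), every optimal policy of~\eqref{eq:soft_const} is $\distj$-a.e.\ a threshold policy $\polj = \ratef_{\distj}^{-1}(\constfj^{-1}(t\supj))$, where $t\supj := \langle \boldwj \circ \distj, \polj\rangle$. Writing $\Delta = t\supa - t\supb$, the problem reduces to maximizing $\Psi_\lambda(t\supa,t\supb) := \fraca\Util\supa(t\supa;\boldwa) + \fracb\Util\supb(t\supb;\boldwb) - \lambda\Phi(t\supa - t\supb)$, which is concave in $(t\supa,t\supb)$ since each $\Util\supj(\cdot;\boldwj)$ is concave (Lemma~\ref{lem:const_to_util_derv}) and $-\lambda\Phi$ is concave. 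Hence the maximizer set is convex and compact, and $\mathcal{D}(\lambda)$ and $\mathcal{T}\supa(\lambda)$, being images of this set under the affine maps $(t\supa,t\supb)\mapsto t\supa - t\supb$ and $(t\supa,t\supb)\mapsto t\supa$, are closed intervals.

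For part~1 I would bound the penalty against the utility. The utilities $\Util\supj(\cdot;\boldwj)$ are continuous on the compact domain $[0,\langle\distj,\boldwj\rangle]$, hence bounded in absolute value by some $M$; moreover $t_0 = 0$ is feasible for both groups and has $\Delta = 0$. Comparing the optimum $(t\supa^\lambda,t\supb^\lambda)$ to this point gives $\lambda\Phi(\Delta_\lambda) \le 2M$, so $\Phi(\Delta_\lambda) \le 2M/\lambda\to 0$. Since $\Phi$ is convex, symmetric, continuous, and strictly positive off the origin, $\Phi(\Delta)\to 0$ forces $\Delta\to 0$, giving $\lim_{\lambda\to\infty}\max\{|\Delta| : \Delta\in\mathcal{D}(\lambda)\} = 0$.

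Parts~3 and~4 rest on a single exchange inequality. For $\lambda < \lambda'$, take soft-optimal policies and add the two optimality inequalities $\Util(\pol_\lambda) - \lambda\Phi(\Delta_\lambda) \ge \Util(\pol_{\lambda'}) - \lambda\Phi(\Delta_{\lambda'})$ and $\Util(\pol_{\lambda'}) - \lambda'\Phi(\Delta_{\lambda'}) \ge \Util(\pol_\lambda) - \lambda'\Phi(\Delta_\lambda)$; this collapses to $(\lambda'-\lambda)\big(\Phi(\Delta_\lambda) - \Phi(\Delta_{\lambda'})\big)\ge 0$, i.e.\ $\Phi(\Delta_{\lambda'})\le\Phi(\Delta_\lambda)$, so $|\Delta|$ is non-increasing in $\lambda$. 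When $\mathcal{D}(\lambda)\prec\{0\}$ the sign of $\Delta$ cannot flip (a flip would force $0\in\mathcal{D}(\tilde\lambda)$ at some intermediate $\tilde\lambda$, which by part~2 collapses everything to the hard solution), so $|\Delta|$ non-increasing becomes $\Delta$ non-decreasing toward $0$; the endpoint values $\mathcal{D}(0)=\{\Delta^\maxprof\}$ and $\mathcal{D}(\infty)=\{0\}$ then give the sandwich of part~4. The monotonicity of $\mathcal{T}\supa(\lambda)$ I would obtain by monotone comparative statics: on the region $\Delta\le 0$ the term $-\lambda\Phi(t\supa - t\supb)$ has increasing differences in $(t\supa,\lambda)$, so the optimal selection $t\supa$ moves monotonically with $\lambda$, and the increasing bijection $\constfa^{-1}$ transfers this order to $\accratea$.

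The delicate step, and the main obstacle, is part~2. The clean direction is: if a policy $\pol^\star$ simultaneously maximizes $\Util$ and satisfies the hard constraint $\Delta=0$, then for every $\lambda>0$ and every $\pol$ one has $\Util(\pol^\star)-\lambda\Phi(0)\ge\Util(\pol)-\lambda\Phi(\Delta)$, since $\Util(\pol^\star)\ge\Util(\pol)$ and $\Phi(0)=\min\Phi\le\Phi(\Delta)$; hence $\pol^\star\in\mathbf{P}(\lambda)$, and any other soft-optimum must match both its utility and its penalty value, forcing $\Delta=0$ together with $\Util$-optimality, so $\mathbf{P}(\lambda)=\mathbf{P}(\infty)$ for every $\lambda>0$. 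The harder direction is to pass from a mere soft-optimum with $\Delta=0$ back to a genuine feasible $\maxprof$ solution. Here I would read off the first-order condition of Lemma~\ref{lem:first_order_op} at such a solution: it yields a common selection rate together with a subgradient $g\in\partial\Phi(0)$ for which the one-sided marginal utilities of the two groups equal $\lambda g$ and $-\lambda g$. When $\Phi$ is differentiable at the origin, $\partial\Phi(0)=\{0\}$ forces $g=0$, so the common rate is unconstrained-optimal for each group and feasibility of $\maxprof$ follows at once; the effort concentrates on controlling the kink subgradient interval $\partial\Phi(0)=[-\partial_+\Phi(0),\partial_+\Phi(0)]$ of a non-smooth penalty such as $\Phi(t)=|t|$, which is the crux of the argument.
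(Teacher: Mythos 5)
Your arguments for the closedness of the two sets, for part~1, and for the $\mathcal{D}(\lambda)$ claims in parts~3--4 are sound and take a genuinely different route from the paper. The paper proves part~1 by a first-order argument (bounding the marginal utilities by $M$ and invoking ``strict convexity'' of $\Phi$, an assumption not actually in the statement), and proves the monotonicity claims through a comparative-statics lemma (Lemma~\ref{lem:composite_lem}) applied after profiling out one variable via Danskin's theorem. Your value comparison $\lambda\bigl(\Phi(\Delta_\lambda)-\Phi(0)\bigr)\le 2M$ and your exchange inequality $(\lambda'-\lambda)\bigl(\Phi(\Delta_\lambda)-\Phi(\Delta_{\lambda'})\bigr)\ge 0$ reach the same conclusions for $\mathcal{D}(\lambda)$ with less machinery and need only that $0$ uniquely minimizes $\Phi$ (which both you and the paper silently normalize by taking $\Phi(0)=0$). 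Note also that the exchange inequality already gives you that once $0\in\mathcal{D}(\lambda')$ then $\mathcal{D}(\lambda'')=\{0\}$ for all $\lambda''>\lambda'$, so your no-sign-flip step does not need to route through part~2 at all. Two caveats: the monotonicity of $\mathcal{T}\supa(\lambda)$ still requires executing the increasing-differences argument on the profiled objective $t\supa\mapsto\max_{t\supb}\{\cdots\}$, which is exactly the Danskin step the paper spells out and you only gesture at; and $\mathcal{D}(0)$ need not be a singleton.

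The genuine gap is part~2, and the step you defer (``controlling the kink subgradient interval'') cannot be completed, because the forward implication of part~2 is false as literally stated for non-smooth $\Phi$. Take $\Phi(t)=|t|$ and suppose no $\maxprof$ solution satisfies the hard constraint, i.e. $\constfa(\accratea^\maxprof)<\constfb(\accrateb^\maxprof)$ for every optimal pair. The paper's own Proposition~\ref{prop:abs_val_soft_constrainted} shows there is a finite $\lambda_*$ such that for every $\lambda\ge\lambda_*$ the soft optimum coincides with the hard-constrained optimum; in particular every such optimum has $\Delta=0$, so $0\in\mathcal{D}(\lambda)$. Yet by hypothesis no $\maxprof$ solution is hard-feasible, and for $\lambda<\lambda_*$ the soft optimum differs from the hard one, so neither conclusion of part~2 holds. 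This is precisely the scenario your first-order analysis via Lemma~\ref{lem:first_order_op} runs into: at a soft optimum with $\Delta=0$ you only get that the groupwise marginal utilities are $\lambda g$ and $-\lambda g$ for \emph{some} $g\in\partial\Phi(0)$, and when $\partial\Phi(0)$ is a nondegenerate interval that slack is exactly what lets a non-$\maxprof$, hard-feasible point be soft-optimal. The statement is only salvageable if its hypothesis is read as ``$0\in\mathcal{D}(0)$'' (equivalently, $0\in\mathcal{D}(\lambda)$ for all sufficiently small $\lambda>0$, or $\Phi$ differentiable at the origin); under that reading your ``clean direction'' is already a complete proof and nothing further is needed. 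For what it is worth, the paper's own proof of part~2 is a one-sentence assertion with the same defect, so you have correctly located the weak point of the proposition --- but as a proof of the statement as written, part~2 remains open in your proposal.
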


\subsubsection{Proof of Proposition~\ref{prop:soft_interpolation}}

	Again, we parameterize all solutions to the soft-constrained problem as in correspondence with solutions $(t\supa,t\supb)$ to 
	\begin{eqnarray*}
	\min_{(t\supa,t\supb)} \fraca \Util\supa(t\supa;\boldwa) + \fracb \Util\supb(t\supb;\boldwb) + \lambda \Phi(t\supa - t\supb)\:.
	\end{eqnarray*}
	Letting $\Delta := t\supb - t\supa$, we can reparameterize the above as 
	\begin{eqnarray*}
	\min_{(t\supa,\Delta)} \fraca \Util\supa(t\supa;\boldwa) + \fracb \Util\supb(t\supa + \Delta;\boldwb) - \lambda \Phi(\Delta)\:.
	\end{eqnarray*}
	Note then that $\mathcal{D}(\lambda)$ denotes the set of $\Delta$ which are partial maximimizers of the above display. If $0 \in \{\mathcal{D}(\lambda)\}$, this implies that there exists a $\maxprof$ solution for which $\Delta = 0$, therefore, for all $\lambda > 0$, all solutions will be $\maxprof$ solutions for which $\mathcal{D}(\lambda) = 0$. Otherwise assume without loss of generality that $\mathcal{D}(\lambda) < \{0\}$.

	First, the statement $\{0\} = \mathcal{D}(\infty) \succeq \mathcal{D}(\lambda) \succeq \{\min:\Delta \in \mathcal{D}(0)\}$, and $\mathcal{T}\supa(\infty) \succeq \mathcal{T}\supa(\lambda) \succeq \{\min:\Delta \in \mathcal{T}\supa(\lambda)\}$, and vice versa if $\mathcal{D}(\lambda) \succ \{0\}$ can be solved by on a case-by-case basis. The strategy is to show that if any of these inequalities are violated, then the associated values of $\Delta$ and $t\supa$ are not partial maximizers of the soft constraint objective. In particular, $\mathcal{T}\supa(\lambda) \subset [T_-,T_+]$ for some appropriate $T_-,T_+$. 

	We now show that $\mathcal{D}(\lambda)$ and $\mathcal{T}\supa(\lambda)$ are non-increasing and non-decreasing, respectively. We shall do so invoking the following technical lemma.
	\begin{lem}\label{lem:composite_lem} Let $G_1(t)$ be concave and let $G_2(t;\lambda)$ be concave in $t$. Let $\partial G_2(t;\lambda)$ denote the super-gradient of $G_2$, that is 
	\begin{eqnarray*}
	\partial G_2(t;\lambda) := \mathrm{Conv}(\{\partial_- G_2(t;\lambda)\} \cup \{\partial_- G_2(t;\lambda)\})
	\end{eqnarray*}
	denotes the super-gradient set of the concave mapping $t \mapsto \partial G_2(t;\lambda)$.  

	Then if $\lambda \mapsto \partial G_2(t;\lambda)$ is non-increasing (resp. non-decreasing) in $\lambda$, the interval valued function defined below is non-increasing (resp. non-decreasing) in $\lambda$
	\begin{eqnarray*}
	\mathrm{MAX}(\lambda) := \lambda \mapsto  \arg\max_{t \in [a,b]} G_1(t) + G_2(t;\lambda)\:.
	\end{eqnarray*}
	
	\end{lem}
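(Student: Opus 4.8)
The plan is to recognize this as a one-dimensional monotone comparative-statics statement and prove it by reducing the behaviour of the entire maximizer set to the behaviour of its two endpoints, each governed by a sign condition on a one-sided derivative. First I would set $H(t;\lambda) := G_1(t) + G_2(t;\lambda)$; as a sum of functions concave in $t$ it is concave on $[a,b]$, so $\mathrm{MAX}(\lambda)$ is a nonempty closed subinterval, which I denote $[t_-(\lambda),t_+(\lambda)]$. Because one-sided derivatives add, $\partial_\pm H(t;\lambda) = \partial_\pm G_1(t) + \partial_\pm G_2(t;\lambda)$, and by concavity both $t \mapsto \partial_+ H(t;\lambda)$ and $t \mapsto \partial_- H(t;\lambda)$ are non-increasing in $t$.

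Next I would characterize the endpoints through these one-sided derivatives. For a concave function on $[a,b]$, a point $t$ is a maximizer exactly when moving in either feasible direction does not increase the value, i.e. $\partial_+ H(t;\lambda) \le 0$ (unless $t=b$) and $\partial_- H(t;\lambda) \ge 0$ (unless $t=a$). Using that $\partial_\pm H$ are non-increasing in $t$, this yields the endpoint formulas $t_+(\lambda) = \sup\{t \in [a,b] : \partial_- H(t;\lambda) \ge 0\}$ and $t_-(\lambda) = \inf\{t \in [a,b] : \partial_+ H(t;\lambda) \le 0\}$, with the conventions $\partial_- H(a;\lambda) = +\infty$ and $\partial_+ H(b;\lambda) = -\infty$ to cover the boundary cases.

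The key step is to transport the hypothesis on the supergradient to these thresholds. The hypothesis that $\lambda \mapsto \partial G_2(t;\lambda) = [\partial_+ G_2(t;\lambda),\partial_- G_2(t;\lambda)]$ is non-decreasing in the interval order means precisely that, for $\lambda \le \lambda'$, both $\partial_+ G_2(t;\lambda) \le \partial_+ G_2(t;\lambda')$ and $\partial_- G_2(t;\lambda) \le \partial_- G_2(t;\lambda')$ hold for every $t$. Since $G_1$ contributes identically, the same inequalities pass to $\partial_\pm H$. Consequently $\{t : \partial_- H(t;\lambda) \ge 0\} \subseteq \{t : \partial_- H(t;\lambda') \ge 0\}$, giving $t_+(\lambda) \le t_+(\lambda')$, while dually $\{t : \partial_+ H(t;\lambda') \le 0\} \subseteq \{t : \partial_+ H(t;\lambda) \le 0\}$, giving $t_-(\lambda) \le t_-(\lambda')$. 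Both endpoints are therefore non-decreasing in $\lambda$, which is exactly $\mathrm{MAX}(\lambda) \preceq \mathrm{MAX}(\lambda')$; the non-increasing case follows by reversing every inequality. This is the concave one-dimensional instance of Topkis' theorem, the interval order coinciding with the strong set order on intervals.

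The main obstacle I anticipate is purely in the bookkeeping: making the endpoint formulas rigorous at the boundary of $[a,b]$ and verifying that the defining suprema and infima are attained, so that $\mathrm{MAX}(\lambda)$ is genuinely the closed interval $[t_-(\lambda),t_+(\lambda)]$, together with translating the set-valued hypothesis on $\partial G_2$ into the two scalar inequalities on $\partial_\pm G_2$. These are routine consequences of the monotonicity and one-sided continuity of $\partial_\pm H$ in $t$, and in the piecewise-linear setting in which the lemma is applied they are immediate.
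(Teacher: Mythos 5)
Your proposal is correct and rests on exactly the same ingredients as the paper's proof: the first-order characterization of the maximizer interval of a concave function via sign conditions on its one-sided derivatives, and the transfer of the hypothesis on $\partial G_2(\cdot;\lambda)$ to the two scalar inequalities on $\partial_\pm H$. The only difference is organizational --- you argue directly by showing both endpoints $t_\pm(\lambda)$ are monotone, whereas the paper assumes the interval order fails and derives a sign contradiction at a witnessing point; these are contrapositives of one another.
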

	For $\mathcal{D}(\lambda)$, one can write any partial maximizer $\Delta$ as 
	\begin{eqnarray*}
	\max_{\Delta \ge 0} G_1(\Delta) + G_2(\Delta;\lambda)
	\end{eqnarray*}
	with $G_1(\Delta) = \max_{t\supa} \fraca \Util\supa(t\supa;\boldwa) + \fracb \Util\supb(t\supa + \Delta;\boldwb)$ and $G_2(\Delta;\lambda) = \lambda\Phi(\Delta)$. Note that $G_1(\Delta)$ is concave, being the partial maximization of a concave function, and $\partial G_2(\Delta;\lambda) = -t\partial \Phi(\Delta)$. Since $\partial \Phi(\Delta) \succeq \{0\}$ for $\Delta \ge 0$ (by convexity of $\phi$) , we have that $\partial G_2(\Delta;\lambda) = -t\partial \Phi(\Delta)$ is non-increasing in $\lambda$. Hence Lemma~\ref{lem:composite_lem} implies that interval valued function $\mathcal{D}(\lambda)$ is non-increasing.
	
	To show that $\mathcal{T}\supa(\lambda)$ is non-decreasing, we have that any maximizer $t\supa$ can be written as 
	\begin{eqnarray*}
	\max_{t\supa \in [T_-,T_+]} G_1(t\supa) + G_2(t\supa;\lambda)
	\end{eqnarray*}
	where $G_1(t\supa) = \fraca \Util\supa(t\supa;\boldwa) $ and $ G_2(t\supa;\lambda) = \max_{\Delta \ge 0}  \fracb \Util\supb(t\supa + \Delta;\boldwb) + \lambda \Phi(\Delta)$. By Danskin's theorem,
	\begin{eqnarray*}
	\partial G_2(t\supa;\lambda) = \{\partial  \Util\supb(t\supa + \Delta;\boldwb) : \Delta \in \arg\max G_2(t\supa;\lambda) \}\:.
	\end{eqnarray*} 
	Note that $\{\Delta \in \arg\max G_2(t\supa;\lambda)\}$ is non-increasing in $\lambda$ for a fixed $t\supa$, since the contribution of the regularizer increases. Since the sets  $\partial  \Util\supb(t\supa + \Delta;\boldwb)$ are themselves non-increasing in $\Delta$ by concavity, we conclude that $\partial G_2(t\supa;\lambda)$ is non-decreasing in $\lambda$. Hence, Lemma~\ref{lem:composite_lem} implies that $\calT\supa(\lambda)$ is non-decreasing in $\lambda$.

	Finally, to show that $\max\{|\Delta|:\Delta \in \mathcal{D}(\lambda)|\} \to 0$,   	Note that the left and right derivatives of $\fraca \Util\supa(t;\boldwa)$ and $ \fracb \Util\supb(t;\boldwb)$ are upper bounded by $M$ whereas, since $\Phi$ is strictly convex, we know that for every $\epsilon > 0$, $\min\{|\partial_+ \Phi(\Delta)|,|\partial_- \Phi(\Delta)|\} > m(\epsilon)$ for all $\Delta:|\Delta| > \epsilon$. Hence, the first order optimality conditions cannot be satisfied for $|\Delta| > \epsilon$, and $\lambda > \frac{M}{m(\epsilon)}$, so as $\lambda \to \infty$, $|\Delta| \to 0$.

	\begin{proof}[Proof of Lemma~\ref{lem:composite_lem}] We prove the case where $\partial G_2(t;\lambda)$ is non-increasing. The first order conditions requires that at an optimal $t$, one has 
	\begin{eqnarray*}
	\partial_- G_1(t) + \partial G_2(t;\lambda)_- \ge 0  \ge \partial_+ G_1(t) + \partial G_2(t;\lambda)_+
	\end{eqnarray*}
	where the super-gradients are amended to take into account boundary conditions. Suppose that for the sake of contradiction that for $\lambda' > \lambda$, $\mathrm{MAX}(\lambda') \preceq \mathrm{MAX}(\lambda)$ fails. Then, there (a) exists a $t \in \mathrm{MAX}(\lambda)$ such that $\{t\} \prec \mathrm{MAX}(\lambda')$, or (b) $t \in \mathrm{MAX}(\lambda')$ such that $\{t\} \succ \mathrm{MAX}(\lambda')$. Note that if $\{t\} \prec \mathrm{MAX}(\lambda')$, it must be the case that
	\begin{eqnarray*}
	\partial_+ G_1(t) + \partial G_2(t;\lambda')_+  > 0\:.
	\end{eqnarray*}
	By assumption, $\partial_- G_2(t;\lambda')_+ \le \partial_0 G_2(t;\lambda)_+$ , which implies
	\begin{eqnarray*}
	\partial_+ G_1(t) + \partial G_2(t;\lambda')_+ \le \partial_+ G_1(t) + \partial_+ G_2(t;\lambda)_- 0 \le 0\:,
	\end{eqnarray*}
	a contradiction.
	\end{proof}

\section{Proofs of Main Results\label{app:main_results_proofs}}
We remark that the proofs in this section rely crucially on the characterizations of the optimal fairness-constrained policies developed in Section~\ref{sec:main_thm_proofs}.
We first define the notion of CDF domination, which is referred to in a few of the proofs. Intuitively, it means that for any score, the fraction of group $\popb$ above this is higher than that for group $\popa$. It is realistic to assume this if we keep with our convention that group $\popa$ is the disadvantaged group relative to group $\popb$.
\begin{defn}[CDF domination]
	$\dista$ is said to be \emph{dominated by} $\distb$ if $\forall a \ge 1, \sum_{x>a}\dist\supa < \sum_{x>a}\dist\supb$. We denote this as $\dista \prec \distb$.
\end{defn}
We remark that the $\prec$ notation in this section is entirely unrelated to the the partial order on intervals from Section~\ref{sec:soft_constraints}. Frequently, we shall use the following lemma:
\begin{lem} Suppose that $\dista \prec \distb$. Then, for all $\accrate > 0$, it holds that $\RCDF\supa(\beta) \le \RCDF\supb(\accrate)$ and $\util(\RCDF\supa(\accrate)) \le \util(\RCDF\supa(\accrate))$
\end{lem}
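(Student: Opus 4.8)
The statement as written contains a small typo: the second inequality should read $\util(\RCDF\supa(\accrate)) \le \util(\RCDF\supb(\accrate))$. Since the standing assumption throughout is that $\util(x)$ is (strictly) increasing in $x$, this second inequality is an immediate consequence of the first, $\RCDF\supa(\accrate) \le \RCDF\supb(\accrate)$, by monotonicity of $\util$. So the plan is to reduce the lemma to the single quantile comparison $\RCDF\supa(\accrate) \le \RCDF\supb(\accrate)$, proved for each fixed $\accrate$, which we may take in $(0,1)$ where both upper quantiles are well defined.

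To prove the quantile comparison, I would fix $c^* := \RCDF\supb(\accrate)$ and show that the survival function of $\dista$ has already dropped to or below $\accrate$ at the threshold $c^*+1$. First, by the defining property of $\RCDF\supb(\accrate)$ as the \emph{largest} $c$ with $\sum_{x=c}^{\maxscore}\distb(x) > \accrate$, the index $c^*+1$ fails this inequality, so $\sum_{x > c^*}\distb(x) = \sum_{x=c^*+1}^{\maxscore}\distb(x) \le \accrate$. Next I would invoke CDF domination $\dista \prec \distb$ at $a = c^*$ (noting $c^* \ge 1$, since $\sum_{x=1}^{\maxscore}\distb = 1 > \accrate$ places $1$ in the defining set), which yields $\sum_{x > c^*}\dista(x) < \sum_{x > c^*}\distb(x) \le \accrate$.

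Finally, because the survival map $c \mapsto \sum_{x=c}^{\maxscore}\dista(x)$ is non-increasing in $c$, the bound $\sum_{x=c^*+1}^{\maxscore}\dista(x) < \accrate$ propagates to every $c \ge c^*+1$, so no such $c$ belongs to the defining set $\{c : \sum_{x=c}^{\maxscore}\dista(x) > \accrate\}$ for $\RCDF\supa(\accrate)$. Hence $\RCDF\supa(\accrate) = \max\{c : \sum_{x=c}^{\maxscore}\dista(x) > \accrate\} \le c^* = \RCDF\supb(\accrate)$, as claimed, and the $\util$-inequality follows. The proof is essentially routine; the only point requiring mild care is the boundary case $c^* = \maxscore$, where $c^*+1$ is out of range. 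There the sum $\sum_{x=\maxscore+1}^{\maxscore}\dista(x)$ is empty, hence equal to $0 \le \accrate$, so $\RCDF\supa(\accrate) \le \maxscore$ holds trivially and CDF domination need not even be invoked.
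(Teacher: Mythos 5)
Your proof is correct and takes the same route as the paper, which simply asserts that $\RCDF\supa(\accrate) \le \RCDF\supb(\accrate)$ ``follows directly from the definition'' and then applies monotonicity of $\util$; you have merely filled in the definitional details (including the boundary case and the typo fix, both of which are right). Nothing further is needed.
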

\begin{proof}
The fact that $\RCDF\supa(\accrate) \le \RCDF\supb(\accrate)$ follows directly from the definition of monotonicty of $\util$ implies that $\util(\RCDF\supa(\accrate)) \le \util(\RCDF\supb(\accrate))$. 
\end{proof}

\subsection{Proof of Proposition \ref{prop:mp_noharm}}
	The $\maxprof$ policy for group~$\popj$ solves the optimization
	\begin{eqnarray*}
	\max_{\polj \in [0,1]^{\maxscore}} \Util\supj(\polj) = \max_{\accratej \in [0,1]} \Util\supj(\ratef_{\distj}^{-1}(\accratej))\:.
	\end{eqnarray*}
	Computing left and right derivatives of this objective yields
	\begin{align*}
		\partial_+ \Util\supj(\ratef_{\distj}^{-1}(\accratej)) &= \util(\RCDF\supj(\beta)), \qquad \partial_- \Util\supj(\ratef_{\distj}^{-1}(\accratej)) = \util(\RCDFpl\supj(\beta))\:.
	\end{align*}
		By concavity, solutions $\accrate^*$ satisfy
		\begin{align}
		\begin{split}\label{eq:max_prof_sol_char}
		\accrate < \accrate^* \quad&\text{if}\quad \util(\RCDF\supj(\beta)) > 0 \:,\\
		\accrate > \accrate^* \quad&\text{if}\quad \util(\RCDFpl\supj(\beta)) < 0 \:.
		\end{split}
		\end{align}
	Therefore, we conclude that the $\maxprof$ policy loans only to scores $x$ s.t. $\util(x) > 0$, which implies $\chg(x) > 0$ for all scores loaned to. Therefore we must have that $ 0\leq \delmean^{\maxprof} $. By definition $\delmean^{\maxprof} \leq \delmean^*$.
	
\subsection{Proof of Corollary \ref{cor:fairness_rel_imp} }
	
		We begin with proving part (a), which gives conditions under which $\dempar$ cases relative improvement. Recall that $\overline{\accrate}$ is the largest selection rate for which $\Util(\overline\accrate) = \Util(\accratea^\maxprof)$. First, we derive a condition which bounds the selection rate $\beta^\dempar\supa$ from below. Fix an acceptance rate $\accrate$ such that $\accrate\supa^{\maxprof}<\accrate<\min\{\accrate\supb^{\maxprof}, \overline{\accrate}\}$. By Theorem~\ref{thm:dem_par_selection}, we have that $\dempar$ selects to group~$\popa$ with rate higher than $\accrate$ as long as
		\begin{eqnarray*}
		\fraca \leq g_1 := \frac{1}{1-\frac{\util(\RCDF\supa(\accrate))}{\util(\RCDF\supb(\accrate))}}\:.
		\end{eqnarray*}
		By \eqref{eq:max_prof_sol_char} and the monotonicity of $\util$, $\util(\RCDF\supa(\accrate))<0$ and $\util(\RCDF\supb(\accrate))>0$, so $0<g_1<1$. 

		Next, we derive a condition which bounds the selection rate $\accrate^\dempar\supa$ from above. First, consider the case that $\accrate\supb^{\maxprof}<\overline{\accrate}$, and fix $\accrate'$ such that $\accrate\supb^{\maxprof}<\accrate'<\overline\accrate$. 
		Then $\dempar$ selects group~$\popa$ at a rate $\accratea < \accrate'$ for any proportion $\fraca$. This follows from applying Theorem~\ref{thm:dem_par_selection} since we have that  $\util(\RCDFpl\supa(\accrate'))<0$ and $\util(\RCDFpl\supb(\accrate'))<0$ by \eqref{eq:max_prof_sol_char} and the monotonicity of $\util$. 

		Instead, in the case that $\accrate\supb^{\maxprof}>\overline{\accrate}$, fix $\accrate'$ such that $\overline\accrate<\accrate'<\accrate\supb^{\maxprof}$. Then $\dempar$ selects group~$\popa$ at a rate less than $\accrate'$ as long as 
		\begin{align*}
		\fraca\geq g_0:=\frac{1}{1-\frac{\util(\RCDFpl\supa(\accrate'))}{\util(\RCDFpl\supb(\accrate'))}}\:.
		\end{align*} 
		By \eqref{eq:max_prof_sol_char} and the monotonicity of $\util$, $0<g_0<g_1$. Thus for $\fraca\in[g_0,g_1]$, the $\dempar$ selection rate for group~$\popa$ is bounded between $\accrate$ and $\accrate'$, and thus $\dempar$ results in relative improvement.

		Next, we prove part (b), which gives conditions under which $\eqop$ cases relative improvement. First, we derive a condition which bounds the selection rate $\accrate^\eqop\supa$ from below. Fix an acceptance rate $\accrate$ such that $\accrate\supa^{\maxprof}<\accrate$ and $\accrate\supb^{\maxprof} > \transfer(\accrate)$.  
		By Theorem~\ref{thm:eqop_selection}, $\eqop$ selects group~$\popa$ at a rate higher than $\accrate$ as long as
		\begin{align*}
		\fraca>g_3 := \frac{1}{1-\frac{1}{\kappa}\cdot\frac{\pb(\RCDF\supb(\transfer(\accrate)))}{\util(\RCDF\supb(\transfer(\accrate)))}\frac{\util(\RCDF\supa(\accrate))}{\pb(\RCDF\supa(\accrate))}}\:.
		\end{align*} 
		By \eqref{eq:max_prof_sol_char} and the monotonicity of $\util$, $\util(\RCDF\supa(\accrate))<0$ and $\util(\RCDF\supb(\transfer(\accrate)))>0$, so $g_3>0$.

		Next, we derive a condition which bounds the selection rate $\accrate^\eqop\supa$ from above. First, consider the case that there exists $\accrate'$ such that $\accrate'<\overline\accrate$ and $\accrate\supb^{\maxprof} < \transfer(\accrate')$ .
		Then $\eqop$ selects group~$\popa$ at a rate less than this $\accrate'$ for any $\fraca$. This follows from Theorem~\ref{thm:eqop_selection} since we have that  $\util(\RCDFpl\supa(\accrate'))<0$ and $\util(\RCDFpl\supb(\transfer(\accrate')))<0$ by \eqref{eq:max_prof_sol_char} and the monotonicity of $\util$. 

		In the other case, fix $\accrate'$ such that $\accrate<\accrate'<\overline\accrate$ and $\accrate\supb^{\maxprof} > \transfer(\accrate')$. By Theorem~\ref{thm:eqop_selection}, $\eqop$ selects group~$\popa$ at a rate lower than $\accrate'$ as long as
		$$\fraca>g_2 := \frac{1}{1-\frac{1}{\kappa}\cdot\frac{\pb(\RCDFpl\supb(\transfer(\accrate')))}{\util(\RCDFpl\supb(\transfer(\accrate')))}\frac{\util(\RCDFpl\supa(\accrate'))}{\pb(\RCDFpl\supa(\accrate'))}}\:.$$

		By \eqref{eq:max_prof_sol_char} and the monotonicity of $\util$, $0<g_2<g_3$. Thus for $\fraca\in[g_2,g_3]$, the $\eqop$ selection rate for group~$\popa$ is bounded between $\accrate$ and $\accrate'$, and thus $\eqop$ results in relative improvement.

	\subsection{Proof of Corollary \ref{cor:dp_overeager}}
	
	Recall our assumption that $\accrate > \accrate\supa^{\maxprof}$ and  $\accrate\supb^{\maxprof} > \accrate$.
	As argued in the above proof of Corollary \ref{cor:fairness_rel_imp}, by \eqref{eq:max_prof_sol_char} and the monotonicity of $\util$, $\util(\RCDF\supa(\accrate))<0$ and $\util(\RCDF\supb(\accrate))>0$. 
	 Applying Theorem~\ref{thm:dem_par_selection}, $\dempar$ selects at a higher rate than $\accrate$ for any population proportion $g_\popa \leq g_0$, where $g_0 = 1/(1-\frac{\util(\RCDF\supa(\accrate))}{\util(\RCDF\supb(\accrate))}) \in (0,1)$. In particular, if $\accrate=\accrate_0$, which we defined as the harm threshold (i.e. $\delmean_\popa(\ratef^{-1}_{\dista} (\accrate_0))=0$ and $\delmean_\popa$ is decreasing at $\accrate_0$), then by the concavity of $\delmean_\popa$, we have that  $\delmean_\popa(\ratef^{-1}_{\dista}(\accrate^\dempar_\popa))< 0$, that is, $\dempar$ causes active harm.

\subsection{Proof of Corollary \ref{cor:eqop_overeager}}

		By Theorem~\ref{thm:eqop_selection}, $\eqop$ selects at a higher rate than $\accrate$ for any population proportion $g_\popa \leq g_0$, where $g_0 =  1/(1-\frac{1}{\kappa}\cdot\frac{\pb(\RCDF\supb(\transfer(\accrate)))}{\util(\RCDF\supb(\transfer(\accrate)))}\frac{\util(\RCDF\supa(\accrate))}{\pb(\RCDF\supa(\accrate))})$. Using our assumptions $\accrate\supb^{\maxprof} > \transfer(\accrate)$ and $\accrate > \accrate\supa^{\maxprof}$, we have that $\util(\RCDF\supb(\transfer(\accrate))) > 0$ and $\util(\RCDF\supa(\accrate)) < 0$, by \eqref{eq:max_prof_sol_char} and the monotonicity of $\util$. This verifies that $g_0 \in (0,1)$. In particular, if $\accrate=\accrate_0$, then by the concavity of $\delmean_\popa$, we have that  $\delmean_\popa(\ratef^{-1}_{\dista}(\accrate^\eqop_\popa))< 0$, that is, $\eqop$ causes active harm.

\subsection{Proof of Corollary \ref{cor:avoid_harm} }
	
	Applying Theorem~\ref{thm:dem_par_selection}, we have
		\[-\frac{1-\fraca}{\fraca} \util(\RCDFone(\accrate))  < \util(\RCDFtwo(\accrate)) \implies \accrate_{\dempar} > \accrate\:. \]
		Applying Theorem~\ref{thm:eqop_selection}, we have:	\[\util(\RCDFtwo(\transfer(\accrate)))\cdot \frac{\langle \pb, \distb\rangle}{ \langle \pb, \dista\rangle} \cdot  \frac{\pb(\RCDF^+\supa(\accrate))}{\pb(\RCDF^+\supb(\transfer(\accrate)))} < -\frac{1-\fraca}{\fraca} \util(\RCDFone^+(\accrate)) \implies \accrate_{\eqop} < \accrate\:. \] 
	By Corollaries~\ref{cor:dp_overeager} and \ref{cor:eqop_overeager}, choosing $\fraca < g_2:= 1/(1-\frac{\util(\RCDF\supa(\accrate))}{\util(\RCDF\supb(\accrate))})$ and $\fraca > g_1 :=1/(1-\frac{1}{\kappa}\cdot\frac{\pb(\RCDF^+\supb(\transfer(\accrate)))}{\util(\RCDF^+\supb(\transfer(\accrate)))}\frac{\util(\RCDF^+\supa(\accrate))}{\pb(\RCDF^+\supa(\accrate))})$ satisfies the above.
	
	It remains to check that $g_1 < g_2$. Since we assumed $\accrate > \sum_{x > \mean \supa } \dist\supa$, we may apply Lemma~$\ref{cor:compare_dp_eo}$ to verify this.
	
	Thus we indeed have sufficient conditions for $\accrate_{\dempar} > \accrate > \accrate_\eqop$. In particular, if $\accrate=\accrate_0$, then by the concavity of $\delmean_\popa$, we have that  $\delmean_\popa(\ratef^{-1}_{\dista}(\accrate^\eqop_\popa))> 0$, that is, $\eqop$ causes improvement, and $\delmean_\popa(\ratef^{-1}_{\dista}(\accrate^\dempar_\popa))< 0$, that is, $\dempar$ causes active harm.

	Lastly, because $\accrate_{\dempar} >  \accrate_\eqop$, it is always true that $\delmean_\popa(\ratef^{-1}_{\dista}(\accrate^\dempar_\popa)) > 0 \implies \delmean_\popa(\ratef^{-1}_{\dista}(\accrate^\eqop_\popa))> 0$, using the concavity of the outcome curve.

	\begin{lem}[Comparison of $\dempar$ and $\eqop$ selection rates]\label{cor:compare_dp_eo}
		
		Fix $\accrate \in[0,1]$. Suppose $\dist\supa, \dist\supb$ are identical up to a translation with $\mean\supa < \mean\supb$. Also assume $\pb(x)$ is affine in $x$. Denote $\kappa =\frac{\langle \pb, \distb\rangle}{ \langle \pb, \dista\rangle}$.
		Then, \[\accrate > \sum_{x > \mean \supa } \dist\supa \]
		implies $\util(\RCDFtwo(\transfer(\accrate)))\cdot\kappa \cdot  \frac{\pb(\RCDF\supa(\accrate))}{\pb(\RCDF\supb(\transfer(\accrate)))}  < \util(\RCDFtwo(\accrate))$.
		\begin{proof}
			
			If we have $\accrate > \sum_{x > \mean \supa } \dist\supa$, by lemma \ref{lem:geometry_shifted_dist}, we must also have $\frac{\mean\supb}{\mean\supa} < \frac{\RCDFtwo(\accrate_0)}{\RCDFone(\accrate_0)}$. This implies $\kappa=\frac{\sum_x\dist\supb(x)\pb(x)}{\sum_x\dist\supa(x)\pb(x)} <\frac{\pb(\RCDFtwo(\accrate))}{\pb(\RCDFone(\accrate_0))}  $ by linearity of expectation and linearity of $\pb$. Therefore, 
			\begin{equation}
			\kappa \cdot \frac{\pb(\RCDFone(\accrate))}{\pb(\RCDFtwo(\accrate_0))} <1 \label{eqn:kappa_geom}
			\end{equation} 
			
			Further, using $\transfer(\accrate) > \accrate$ from lemma \ref{lem:geometry_shifted_dist} and the fact that $\frac{\util(x)}{\pb(x)}$ is increasing in $x$, we have $\frac{\util(\RCDFtwo(\transfer(\accrate)))}{ \pb(\RCDFtwo(\transfer(\accrate)))}<\frac{\util(\RCDFtwo(\accrate))}{ \pb(\RCDFtwo(\accrate))}  $. Therefore, $ \util(\RCDFtwo(\transfer(\accrate)))\cdot \kappa \cdot  \frac{\pb(\RCDF\supa(\accrate_0))}{\pb(\RCDF\supb(\transfer(\accrate_0)))} < \kappa \cdot \frac{\util(\RCDFtwo(\accrate))}{ \pb(\RCDFtwo(\accrate))}  \cdot \pb(\RCDF\supa(\accrate)) <\util(\RCDFtwo(\accrate))$ where the last inequality follows from \eqref{eqn:kappa_geom}.
			
		\end{proof}
	\end{lem}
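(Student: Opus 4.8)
The plan is to reduce the inequality to two purely geometric consequences of the translation hypothesis and then transport them through the affine map $\pb$ and the monotone map $\util/\pb$. Throughout I would write $\delta := \mean\supb - \mean\supa > 0$; the assumption $\dista(x) = \distb(x+\delta)$ says $\distb$ is $\dista$ shifted upward by $\delta$, so the upper-quantile functions satisfy $\RCDFtwo(\accrate) = \RCDFone(\accrate) + \delta$ for every $\accrate$. Since $\pb$ is affine and increasing, $\langle \pb,\distj\rangle = \pb(\mean\supj)$, so the constant simplifies to $\kappa = \pb(\mean\supb)/\pb(\mean\supa)$. I would isolate the two geometric facts below into a small helper lemma about shifted distributions.

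First I would establish that the hypothesis $\accrate > \sum_{x>\mean\supa}\dista$ forces the group-$\popa$ threshold strictly below the mean, $\RCDFone(\accrate) < \mean\supa$: accumulating a top mass exceeding $\accrate$ requires dipping below $\mean\supa$, because $\sum_{x>\mean\supa}\dista$ is exactly the mass strictly above the mean. Writing both $\mean\supb/\mean\supa$ and $\RCDFtwo(\accrate)/\RCDFone(\accrate)$ in the form $1 + \delta/(\cdot)$, the inequality $\RCDFone(\accrate) < \mean\supa$ yields $\mean\supb/\mean\supa < \RCDFtwo(\accrate)/\RCDFone(\accrate)$. Because $\pb$ is affine with positive slope, both $\pb(\mean\supb)/\pb(\mean\supa)$ and $\pb(\RCDFtwo(\accrate))/\pb(\RCDFone(\accrate))$ again take the form $1 + a\delta/\pb(\cdot)$, so this ratio comparison is preserved and gives $\kappa = \pb(\mean\supb)/\pb(\mean\supa) < \pb(\RCDFtwo(\accrate))/\pb(\RCDFone(\accrate))$, i.e. the clean bound $\kappa\,\pb(\RCDFone(\accrate))/\pb(\RCDFtwo(\accrate)) < 1$.

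Next I would use the second geometric fact, that the transfer map overshoots, $\transfer(\accrate) > \accrate$ (again a consequence of the translation, as group $\popb$'s higher scores force a larger selection rate to match group $\popa$'s true positive rate). A larger selection rate corresponds to a lower threshold, so $\RCDFtwo(\transfer(\accrate)) \le \RCDFtwo(\accrate)$; invoking the standing assumption that $\util(x)/\pb(x)$ is increasing (the very hypothesis under which $\eqop$ admits threshold solutions in Proposition~\ref{prop:opt_threshold_fair}) then gives $\util(\RCDFtwo(\transfer(\accrate)))/\pb(\RCDFtwo(\transfer(\accrate))) < \util(\RCDFtwo(\accrate))/\pb(\RCDFtwo(\accrate))$.

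Finally I would combine the two bounds. Multiplying the last inequality by the positive factor $\kappa\,\pb(\RCDFone(\accrate))$ reproduces exactly the target's left-hand side and dominates it by $\kappa\,\pb(\RCDFone(\accrate))\,\util(\RCDFtwo(\accrate))/\pb(\RCDFtwo(\accrate))$; one more application of $\kappa\,\pb(\RCDFone(\accrate))/\pb(\RCDFtwo(\accrate)) < 1$, using that $\util(\RCDFtwo(\accrate)) > 0$ in the operating regime (here $\accrate < \accrate\supb^{\maxprof}$, so $\util(\RCDFtwo(\accrate)) > 0$ by the $\maxprof$ characterization~\eqref{eq:max_prof_sol_char}), collapses this to $\util(\RCDFtwo(\accrate))$. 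The main obstacle is the pair of geometric facts $\RCDFone(\accrate) < \mean\supa$ and especially $\transfer(\accrate) > \accrate$, which is where the translation hypothesis genuinely does the work; the remaining steps are monotonicity and sign bookkeeping, the only delicacy being the positivity of $\util(\RCDFtwo(\accrate))$ supplied by the regime in which the lemma is applied.
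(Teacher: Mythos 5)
Your proposal is correct and follows essentially the same route as the paper: the same two geometric facts about translated distributions ($\RCDFone(\accrate)<\mean\supa$ giving $\mean\supb/\mean\supa<\RCDFtwo(\accrate)/\RCDFone(\accrate)$, and $\transfer(\accrate)>\accrate$), transported through the affine $\pb$ to get $\kappa\,\pb(\RCDFone(\accrate))/\pb(\RCDFtwo(\accrate))<1$ and combined with the monotonicity of $\util/\pb$. You also make explicit the positivity $\util(\RCDFtwo(\accrate))>0$ needed in the final step, which the paper's proof uses only implicitly via the regime in which the lemma is invoked.
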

	
	We use the following technical lemma in the proof of the above lemma.
	
	\begin{lem}\label{lem:geometry_shifted_dist}
		If $\dist\supa, \dist\supb$ that are identical up to a translation with $\mean\supa < \mean\supb$, then 
		\begin{align}
		G(\accrate)&> \accrate ~~\forall~ \accrate\:, \label{eqn:shifted_dist_claim1}\\
		\accrate &> \sum_{x > \mean } \dist\supa \implies \frac{\mean\supb}{\mean\supa} < \frac{\RCDFtwo(\accrate)}{\RCDFone(\accrate)} \:.\label{eqn:shifted_dist_claim2}
		\end{align}
		\begin{proof}
			For \eqref{eqn:shifted_dist_claim1}, observe that $\tpr\supa = \pb(\mean\supa) < \tpr\supb = \pb(\mean\supb)$. For any $\accrate$, we can write $\RCDFtwo(\accrate) = \mean\supb + c$ and $\RCDFone(\accrate) = \mean\supa + c$ for some $c$, since $\dist\supa, \dist\supb$ that are identical up to translation by $\mean\supa -\mean\supb$. Thus, by computation, we can see that for $\RCDF(\beta) < \mean$, $\partial_+\transfer(\beta) > 1$ and for $\RCDF(\beta) < \mean$, $\partial_+\transfer(\beta) < 1$. Since $\transfer$ is monotonically increasing on $[0,1]$, we must have $\transfer(\accrate) > \accrate$ for every $\accrate \in[0,1]$.
			
			For \eqref{eqn:shifted_dist_claim2}, we have $\accrate> \sum_{x > \mean } \dist\supa$, we can again write $\RCDFtwo(\accrate) = \mean\supb - c$ and $\RCDFone(\accrate) = \mean\supa - c$, for some $c > 0$. Then it is clear than we have $\frac{\mean\supb}{\mean\supa} < \frac{\RCDFtwo(\accrate)}{\RCDFone(\accrate)} $.
		\end{proof}
		
	\end{lem}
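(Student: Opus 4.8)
The plan is to reduce both claims to a single structural fact about translated distributions: at every selection rate $\accrate$ the two groups' thresholds differ by exactly the mean gap. Since $\dista(x) = \distb(x + (\mean\supb - \mean\supa))$, selecting the top $\accrate$-fraction of $\popb$ uses a score cutoff exactly $\mean\supb - \mean\supa$ above the cutoff selecting the top $\accrate$-fraction of $\popa$, i.e.
\begin{equation*}
\RCDFtwo(\accrate) = \RCDFone(\accrate) + (\mean\supb - \mean\supa) \quad \text{for all } \accrate .
\end{equation*}
First I would verify this identity directly from the upper-quantile definitions, being careful with the randomized-threshold and boundary conventions fixed in Lemma~\ref{lem:pol_sim_lemma}. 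Throughout I use the standing assumption, inherited from Corollary~\ref{cor:avoid_harm}, that $\pb$ is affine with positive slope $a$, so that $\pb(\cdot + (\mean\supb - \mean\supa)) = \pb(\cdot) + a(\mean\supb - \mean\supa)$.

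For \eqref{eqn:shifted_dist_claim1}, write $\oppj(\accrate)$ for the true positive rate of the threshold policy of selection rate $\accrate$ in group $\popj$; the transfer function $\transfer$ (the $G$ of the statement) is defined by $\oppb(\transfer(\accrate)) = \oppa(\accrate)$, and $\oppb$ is strictly increasing in $\accrate$. Hence it suffices to prove the common-rate inequality $\oppa(\accrate) > \oppb(\accrate)$ for $\accrate \in (0,1)$: applying the increasing inverse $\oppb^{-1}$ to $\oppb(\transfer(\accrate)) = \oppa(\accrate) > \oppb(\accrate)$ yields $\transfer(\accrate) > \accrate$ (the endpoints $\accrate \in \{0,1\}$ give equality, which is why the interior is the relevant range). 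I would establish $\oppa(\accrate) > \oppb(\accrate)$ in two steps. Step one: with $q = \RCDFone(\accrate)$, the bound $\oppa(\accrate) > \accrate$ is exactly $\Exp[\pb(X) \mid X \ge q] > \Exp[\pb(X)]$, which holds because $\pb$ is increasing and we are conditioning on an upper tail. Step two: by the threshold identity and affinity of $\pb$, the numerator and denominator of $\oppb(\accrate)$ equal those of $\oppa(\accrate)$ with the same positive constant $k = a(\mean\supb-\mean\supa)$ added; the resulting fraction is a mediant of $\oppa(\accrate)$ and $\accrate$ and therefore lies strictly between them, so $\oppb(\accrate) < \oppa(\accrate)$ precisely because $\oppa(\accrate) > \accrate$. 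Combining the two steps closes \eqref{eqn:shifted_dist_claim1}.

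For \eqref{eqn:shifted_dist_claim2}, the hypothesis $\accrate > \sum_{x > \mean\supa}\dista$ forces the group-$\popa$ cutoff strictly below its mean, so I would write $\RCDFone(\accrate) = \mean\supa - c$ with $c > 0$, and the threshold identity then gives $\RCDFtwo(\accrate) = \mean\supb - c$. The target $\frac{\mean\supb}{\mean\supa} < \frac{\mean\supb - c}{\mean\supa - c}$ is then pure algebra: since scores are positive we have $\mean\supa > c > 0$, and cross-multiplying reduces the inequality to $\mean\supb > \mean\supa$, which is assumed (equivalently, subtracting a common positive $c$ from the numerator and denominator of a fraction larger than $1$ strictly increases it).

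I expect the main obstacle to be discreteness bookkeeping rather than analytic content. Establishing the threshold identity and, above all, the \emph{strict} bound $c > 0$ requires care when $\mean\supb - \mean\supa$ and the scores are integers while the means themselves need not be, and when the threshold is randomized at the boundary. In the nondegenerate case where $\mean\supa$ is not an attained cutoff these points are routine, but I would state the genericity and boundary conventions explicitly (matching Lemma~\ref{lem:pol_sim_lemma}) so that $c>0$ is genuinely strict. The one genuinely analytic step is the common-rate comparison $\oppb(\accrate) < \oppa(\accrate)$, and the affinity of $\pb$ is exactly what makes the mediant argument work; without it the tail-fraction comparison need not hold.
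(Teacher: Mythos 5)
Your treatment of the second claim, and your reliance on the threshold identity $\RCDFtwo(\accrate) = \RCDFone(\accrate) + (\mean\supb - \mean\supa)$, match the paper exactly: the paper likewise asserts the identity without proof and writes $\RCDFone(\accrate)=\mean\supa-c$, $c>0$, before doing the same cross-multiplication; you are right that strictness of $c>0$ (the case where the cutoff lands exactly at $\mean\supa$) is glossed over there too, and flagging it is a genuine improvement. For the first claim $\transfer(\accrate)>\accrate$, however, your route is genuinely different. The paper argues locally: it asserts (tersely, and with a sign typo --- both cases read ``$\RCDF(\beta)<\mean$'') that $\partial_+\transfer>1$ when the cutoff is above the mean and $<1$ when below, and implicitly combines this with the endpoint values $\transfer(0)=0$, $\transfer(1)=1$; its closing sentence ``since $\transfer$ is monotonically increasing'' understates what is actually used. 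You instead argue globally: show $\oppa(\accrate)>\accrate$ by upper-tail conditioning of the increasing $\pb$, show $\oppb(\accrate)$ lies strictly between $\accrate$ and $\oppa(\accrate)$ by a mediant computation, and invert the strictly increasing map $\oppb$. This is cleaner, avoids differentiating $\transfer$ altogether, and makes explicit where affinity of $\pb$ enters, which the paper's ``by computation'' hides. One internal inconsistency to fix: writing $\oppa(\accrate)=N/D$, the translation gives numerator $N+k\accrate$ and denominator $D+k$ for $\oppb(\accrate)$ --- the numerator gains $k\accrate$, not $k$ as your sentence literally states. Your subsequent assertion that $\oppb(\accrate)$ is the mediant of $N/D$ and $k\accrate/k=\accrate$ is the correct one and is what the argument needs, so the conclusion stands, but the two sentences currently contradict each other (under the literal reading the mediant would be of $\oppa(\accrate)$ and $1$, which would give the wrong direction).
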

		
\subsection{Proof of Corollary \ref{cor:eqop_underloan}}
	\begin{proof}
		$\accrate^\maxprof\supa < \accrate^\maxprof\supb$ implies $\fraca \cdot  \util(\RCDF\supa(\accrate^\maxprof\supa)) + \fracb \cdot \util(\RCDF\supb(\accrate^\maxprof\supa)) > 0$, which by Theorem~\ref{thm:dem_par_selection}, implies $\accrate\supa^\maxprof < \accrate\supa^\dempar$.
		
		$\oppa(\pol^\maxprof) > \oppb(\pol^\maxprof)$ implies	$\transfer(\accrate^\maxprof\supa) >\accrate^\maxprof\supb$ and so
		
		 $\util(\RCDF\supb(\transfer(\accrate^\maxprof\supa)))<0$. 
		 Therefore by Theorem~~\ref{thm:eqop_selection}, we have that $\accrate^\maxprof\supa > \accrate^\eqop\supa$.
	\end{proof}
	We now give a very simple example of $\dista\prec \distb$ where Theorem 3.5 holds. The construction of the example exemplifies the more general idea of using large in-group inequality in group $\popa$ to skew the true positive rate at $\maxprof$, making $\oppa(\pol^\maxprof) > \oppb(\pol^\maxprof)$.
	
	\begin{exmp}[$\eqop$ causes relative harm]
		Let $C=6$, and let the utility function be such that 
		$\util(4) = 0$. 
		Suppose $\dist\supa(5) = 1-2\epsilon, \dist\supa(1) =  2\epsilon$ and $\dist\supb(5) = 1-\epsilon, \dist\supb(3) = \epsilon$.
		
		We can easily check that $\dista\prec \distb$. However, for any $\epsilon \in (0,1/4)$, we have that $\oppb(\pol^\maxprof) = \frac{5(1-\epsilon)}{5(1-\epsilon)+3\epsilon} < \oppa(\pol^\maxprof) = \frac{5(1-2\epsilon)}{5(1-2\epsilon)+2\epsilon}$.
		 
	\end{exmp}

	 \subsection{Proof of Proposition \ref{prop:underestim}}
	 
	 Denote the upper quantile function under $\widehat{\dist}$ as $\widehat{\RCDF}$.
	 Since $\widehat{\dist} \prec \dist$, we have $\widehat{\RCDF}(\beta)\leq\RCDF(\beta)$. The conclusion follows for $\maxprof$ and $\dempar$ from Theorem~\ref{thm:dem_par_selection} by the monotonicity of $\util$.
	 
	If we have that $\oppa(\pol) > \widehat{\oppa}(\pol)~\forall~\tau$, that is, the true TPR dominates estimated TPR, the conclusion for $\eqop$ follows from Theorem~\ref{thm:eqop_selection}, by the same argument as in the proof of Corollary~\ref{cor:eqop_underloan}.
	 
	\subsection{Proof of Proposition \ref{prop:outcome}}

	 By Proposition \ref{prop:beta_concavity}, $\accrate^*=\argmax_\accrate \delmean\supa(\accrate)$ exists and is unique. $\accrate_0=\max\{\accrate \in [\accrate\supa^\maxprof,1]:\Util(\accrate\supa^{\maxprof}) - \Util\supa(\accrate) \leq \delta \}$  which exists and is unique, by the continuity of $\delmean\supa$ and Proposition \ref{prop:beta_concavity}.
	
\end{appendix}

\end{document}